\def\eqref#1{equation~\ref{#1}}
\def\1{\bm{1}}
\DeclareMathAlphabet{\mathsfit}{\encodingdefault}{\sfdefault}{m}{sl}
\SetMathAlphabet{\mathsfit}{bold}{\encodingdefault}{\sfdefault}{bx}{n}
\newcommand{\E}{\mathbb{E}}
\newtheorem{prop}{Proposition}
\newtheorem{assumption}{Assumption}
\newtheorem{lemma}{Lemma}
\newcommand{\reals}{\mathbb{R}}
\title{Partition-Based Active Learning for Graph Neural Networks}
\author{\name Jiaqi Ma\thanks{Equal contribution.} \email jiaqima@illinois.edu \\
      \addr School of Information Sciences \\
      University of Illinois Urbana-Champaign
      \and
      \name Ziqiao Ma\footnotemark[1] \email marstin@umich.edu \\
      \addr Department of Computer Science and Engineering \\
      University of Michigan
      \and
      \name Joyce Chai \email chaijy@umich.edu \\
      \addr Department of Computer Science and Engineering \\
      University of Michigan
      \and
      \name Qiaozhu Mei \email qmei@umich.edu \\
      \addr School of Information \\
      Department of Computer Science and Engineering \\
      University of Michigan}
\begin{document}

\maketitle

\begin{abstract}
    We study the problem of semi-supervised learning with Graph Neural Networks (GNNs) in an active learning setup. We propose GraphPart, a novel partition-based active learning approach for GNNs. GraphPart first splits the graph into disjoint partitions and then selects representative nodes within each partition to query. The proposed method is motivated by a novel analysis of the classification error under realistic smoothness assumptions over the graph and the node features. Extensive experiments on multiple benchmark datasets demonstrate that the proposed method outperforms existing active learning methods for GNNs under a wide range of annotation budget constraints. In addition, the proposed method does not introduce additional hyperparameters, which is crucial for model training, especially in the active learning setting where a labeled validation set may not be available.
\end{abstract}

\section{Introduction}

Recently, Graph Neural Networks (GNNs)~\citep{kipf2017semisupervised,Hamilton2017graphsage,wu2019simplifying,velickovic2018graph} have been drawing increasing research interest, due to its numerous successes in various applications.
In particular, GNNs have shown superior performance in \emph{Graph-based Semi-Supervised Learning} (GSSL), a family of problems where individual samples are inter-connected by an underlying graph, and GNNs provide a flexible method to leverage the information stored in the graph to assist learning.

In this paper, we investigate the problem of GSSL with GNNs in an \emph{active learning} setup~\citep{settles2012active}, where one is allowed to actively query node labels on the graph given a limited annotation budget.
The active learning setup is particularly interesting in the context of GSSL as we usually have access to abundant unlabeled samples prior to learning and, in many cases (\textit{e.g.}, on a social network), we have the flexibility to query the labels for a small portion of the samples.
Furthermore, since a key advantage of GNN is the ability to utilize the relational information among the inter-connected samples, properly selecting nodes to annotate may further enhance GNN's performance. 

However, directly adapting conventional active learning methods to GSSL may be sub-optimal due to the special structure of the problem and the GNN models. 
Indeed, utilizing proper smoothness properties of the data has been a key ingredient for the success of many active learning methods.
For example, a commonly used assumption (which we call \emph{feature smoothness}) is that samples with similar features have higher chances to fall into the same class. In addition to feature smoothness, 
real-world GSSL tasks often leverage multiple types of smoothness properties over the graph, spanning the spectrum between \emph{local smoothness} and \emph{global smoothness}~\citep{zhou2004learning}. Local smoothness usually relies on the pairwise Euclidean distance between node representations, which becomes less informative in high-dimensional space~\citep{aggarwal2001surprising}. Global smoothness, on the other hand, is a commonly seen phenomenon in graph-structured data and serves as a great supplement to the local smoothness. 
While there exist graph-based active learning methods utilizing some of these smoothness properties~\citep{6413838,pmlr-v22-ji12,dasarathy2015s2,cai2017active,wu2019active}, methods that fully utilize feature and structural smoothness at the proper level are rare. 

We propose \textbf{GraphPart}, a Graph-Partition-based active learning method for GNNs.
The method is largely motivated by the community structures that are commonly present in real-world graphs. Node and structural properties often exhibit homogeneity within a community and heterogeneity across communities. We formalize this observation with proper smoothness assumptions of the graph-structured data at community level (represented by partitions of the graph) and conduct a novel analysis of the GNN classification error under such assumptions. 
The analysis further motivates the graph-partitioning step in the proposed method, GraphPart.
In particular, GraphPart first splits the graph into several partitions based on modularity~\citep{Clauset_2004} and then selects the most representative nodes within each partition to query. 
An important merit of the proposed method is that it does not introduce additional hyperparameters, which is crucial for active learning setups as labeled validation data are often absent. 
Through extensive experiments, we demonstrate that the proposed method outperforms existing active learning methods for GNNs on multiple benchmark datasets for a wide range of annotation budgets. 
In addition, the proposed active learning method is able to mitigate the accuracy disparity phenomenon of GNNs~\citep{genfairgnn_neurips21}.

\vspace*{-0.15cm}
\section{Related Work}
\label{sec:related}
\vspace*{-0.15cm}

Active learning is a sub-field of machine learning that primarily concerns the bottleneck in expensive annotation and attempts to achieve high accuracy by querying as few labeled samples as possible. Early efforts in this field prior to the emergence of deep learning have been comprehensively summarized by~\citep{settles2012active}.

\textbf{Active Learning Setups.} 
The classic active learning algorithms query one sample at a time and label it. Such a setting is inefficient for a deep learning model as it frequently retrains but updates little, and it is prone to overfitting~\citep{ren2020survey}. 
Therefore, in deep active learning, the batch-mode setting, where a diverse set of instances are sampled and queried, is more often considered. In recent years, the \textit{optimal experimental design} principle~\citep{friedrich2006optimal,allenzhu2017nearoptimal} motivates the machine learning community to minimize the use of training resources and avoid tuning on a validation set. 
Combining the settings of one-shot learning and batch-mode active learning, some recent studies~\citep{contardo2017metalearning,wu2019active} adopt a one-step batch-mode active learning setting. 
In each run, the algorithm uses up the predefined budget to select a batch of nodes to label. 
The querying process is done once and for all in order to minimize retraining. 
We focus on such one-step batch-mode setup as we concern with active learning for graph neural networks.

\textbf{Active Learning on Graphs.}
Early works in active learning on graphs~\citep{10.5555/3104322.3104334,6413838,pmlr-v22-ji12,dasarathy2015s2} are designed specifically for non-deep-learning models and/or fail to take the node features into consideration.
As deep geometric learning and GNNs become popular, iterative node selection criteria were designed upon the expressiveness of GNNs. 
In particular, AGE~\citep{cai2017active} evaluates the informativeness of nodes by linearly combining centrality, density, and uncertainty, 
\citep{ijcai2018-296} further proposed ANRMAB, which extends this framework by dynamically learning the weights with a multi-armed bandit mechanism and maximizing the surrogate reward.
Similarly, \citep{ijcai2019-294} follows previous works and proposed ActiveHNE, which extends AL on non-i.i.d data and heterogeneous networks. The above frameworks neglect interaction between nodes and suffer from sub-optimality due to short-term surrogate criteria. 
Upon this observation, \citep{NEURIPS2020_73740ea8} proposes GPA, which formalizes the AL task as a Markov decision process and learns the optimal query strategy with reinforcement learning techniques. 
More recent works approached the problem by incremental clustering~\citep{liu2020active}, and adversarial learning~\citep{li2020seal}.
Yet, all of these models are based on an iterative setting. 
Given some efforts to avoid validation~\citep{pmlr-v119-regol20a}, iterative querying and retraining is still required.

Another line of optimization-based approaches develops active learning algorithms by investigating the upper bounds of the classification loss. FeatProp~\citep{wu2019active} is one of the state-of-the-art active learning approaches for GNNs, which derives an upper bound of the node classification error under smoothness assumptions over the label distributions and GNN models. The theoretical analysis by \citep{wu2019active} follows prior work, including ANDA~\citep{berlind2015active} which proves a finite sample bound on the expected loss of KNNs in the covariate shift setting, and Coreset~\citep{sener2018active} which conducts an analysis on Convolution Neural Networks. A common pattern of the active learning algorithms along this line is that the algorithms seek to select a set of nodes that achieves a good coverage of the space of sample features or hidden representations, and this is typically done via certain clustering algorithms. 
Our work falls into this category. Compared to existing methods, our method utilizes both the local and global smoothness properties of graph-structured data, where the latter was largely missing in the literature on active learning for GNNs so far.  

\vspace*{-0.15cm}
\section{Preliminaries}
\label{sec:prelim}
\vspace*{-0.15cm}

% = = = = = = = = = = = = = = = = = = = = = = = = = = = %
%      Node Classification and Graph Neural Networks    %
% = = = = = = = = = = = = = = = = = = = = = = = = = = = %

\subsection{Notations}
\vspace*{-0.1cm}

We start by introducing useful notations to characterize node classification with GNNs. 

\paragraph{Attributed Graph} The task of node classification is defined on an attributed graph. Define $[n]=\{1,2,\cdots, n\}$. We denote a graph of size $n$ as $G = (V, A)$, where $V = [n]$ is the set of nodes and $A\in \{0,1\}^{n\times n}$ is the adjacency matrix. Each node $i \in V$ is associated with a $d$-dimensional feature vector $\mathbf{x}_i \in \mathcal{X} \subseteq \reals^d$, and a label $y_i \in \mathcal{Y} = [C]$, where $C$ is the number of classes. Denote $X\in \reals^{n\times d}$ as the feature matrix stacking each node feature. Given the original adjacency matrix $A$, the degree matrix $D\in \reals^{n\times n}$ is defined as $D_{ii} = \sum_{j} A_{ij}, \forall i\in V$, and $D_{ij} = 0,\forall i,j\in V$, $i\neq j$. We denote $S$ as the normalized adjacency matrix with added self-loops: $S = (I + D)^{-\frac{1}{2}} (A + I)(I + D)^{-\frac{1}{2}}$.

\paragraph{Graph Neural Networks} GNNs are a family of neural networks modeling graph-structured data. A GNN model is typically defined with two types of operations on the node representations: aggregation and transformation. For example, an $L$-layer Graph Convolution Network (GCN)~\citep{kipf2017semisupervised} can be recursively represented by a series of aggregation operations $g^{(l)}_{\rm GCN}$ and transformation operations $h^{(l)}_{\rm GCN}$, for $l=1,\ldots,L$, defined as follows,
\[\widetilde{H}^{(l)} = g^{(l)}_{\rm GCN}(H^{(l-1)}, G) := SH^{(l-1)}, \quad\text{and}\] 
\[H^{(l)} = h^{(l)}_{\rm GCN}(\widetilde{H}^{(l)}) := \text{ReLU}(\widetilde{H}^{(l)} W^{(l)})\]
where $H^{(0)} = X$ is the feature matrix, $\text{ReLU}(\cdot)$ is the element-wise rectified-linear unit activation function~\citep{hinton2010rectified}, and $W^{(l)}$ is the parameter matrix for the transformation operation $h^{(l)}_{\rm GCN}$.

Noticing that, in most GNNs, the prediction for each node only depends on a local neighborhood of that node, we can define a general GNN as a function that maps a local neighborhood of a node to its classification predictions. Define $\mathcal{N}^{(L)}(i)$ as the set of all neighboring nodes of node $i$ that can be reached within $L$-hop on the graph (including itself), and $G^{(L)}_i$ as the subgraph of $G$ restricted on $\mathcal{N}^{(L)}(i)$. For each node $i$, define $v_i^{(L)} := \big(\{x_j\}_{j\in\mathcal{N}^{(L)}(i)}, G_i^{(L)} \big)$, which is a tuple including all the node features and the subgraph in node $i$'s local neighborhood. Let $\mathcal{V}^{(L)}$ be the space of such possible $v_i^{(L)}$. Then we can define a GNN as a function 
\[f: \mathcal{V}^{(L)}\rightarrow \reals^C.\]

In our later analysis in Section~\ref{sec:analysis}, we will consider a type of abstract GNNs in the form of $f=h\circ g$, where the aggregation operation $g:\mathcal{V}^{(L)}\rightarrow \reals^{d'}$ and the transformation operation $h: \reals^{d'}\rightarrow \reals^{C}$ are separated. In particular, $g$ can represent any types of feature aggregation on the $L$-hop ego-network of a node. And $h$ is a transformation function with learnable parameters, which is often modeled by a Multi-Layer Perceptron (MLP). Some recently developed GNN models, such as SGC~\citep{wu2019simplifying} and APPNP~\citep{Klicpera2019PredictTP}, are special cases of this form. 

In the rest of our paper, we omit the superscription $(L)$ in notations for simplicity. 

\paragraph{Margin Loss} Given some $\gamma \ge 0$, the \emph{margin loss}~\citep{neyshabur2018pac} on a GNN classifier $f:\mathcal{V}\rightarrow \reals^C$ for a given labeled sample $(v_i, y_i)$ is defined as follows,
\begin{equation}
\label{eq:loss}
    \mathcal{L}_\gamma (f(v_i), y_i) := \mathbbm{1} \big[ f(v_i)[y_i] \leq \gamma + \max_{c\neq y_i}f(v_i)[c] \big],
\end{equation}
where $\mathbbm{1}(\cdot)$ is the indicator function. When we set $\gamma=0$, $\mathcal{L}_0$ corresponds to the 0-1 classification error.

% = = = = = = = = = = = = = = = = = = = = = = = = = = = %
% Graph-based Active Learning under Node Classification %
% = = = = = = = = = = = = = = = = = = = = = = = = = = = %

\subsection{Active Learning for Graph Neural Networks}
In this work, we focus on the one-step batch-mode active learning setup~\citep{contardo2017metalearning,wu2019active} for node classification using GNNs. Suppose there is an attributed graph where the graph structure and the node features are known but the node labels are unobserved. We assume that for each node $i$, the node label $y_i$ is a random variable following a conditional distribution $\Pr[y_i\mid v_i]$. 

At the beginning of learning, we are given a small set $\mathbf{s}_0\subseteq V$ of nodes being labeled, which we call a \emph{seed set}. Given all the node samples $\{v_i\}_{i\in V}$ and labels on the seed set $\{y_i\}_{i\in \mathbf{s}_0}$, for a fixed annotation budget $b>0$, an active learning algorithm aims to carefully select a set of nodes $\mathbf{s}_1\subseteq V\setminus \mathbf{s}_0$ and $|\mathbf{s}_1| \le b$, query the labels on $\mathbf{s}_1$, and train a GNN model $\hat{f}$ based on the labeled data $\{(v_i, y_i)\}_{i\in \mathbf{s}_0\cup \mathbf{s}_1}$, such that the expected classification error on the remaining unlabeled set is small.

\section{A Graph-Partition-Based Active Learning Framework}
\label{sec:approach}
The key motivation of the proposed method is that many real-world graphs present community structures, which induces a proper level of smoothness properties in the graph-structured data. In this section, we first formalize this motivation through an analysis of the GNN performance given proper smoothness assumptions, and then introduce the proposed graph-partition-based active learning approach.

% = = = = = = = = = = = = = = = = = = = = = = = = = = = %
%       Error Analysis under Smoothness Assumptions     %
% = = = = = = = = = = = = = = = = = = = = = = = = = = = %

\subsection{An Analysis of Expected Classification Error Under Smoothness Assumptions}
\label{sec:analysis}
The main result of our analysis is an upper bound (Proposition~\ref{prop:error}) on the quantity $\E_{y_i} \mathcal{L}_0(f(v_i), y_i)$, the expected classification error for a node $i$ and a fixed GNN model $f$. 

\paragraph{Assumptions} We first state the set of assumptions in order to establish the upper bound. Denote a \emph{$K$-partition} of the graph as $\mathcal{T}_K = \{T_1, T_2,\cdots, T_K\}$, where $T_1,\ldots,T_K\subseteq V$ are disjoint partitions satisfying $\bigcup_{k=1}^K T_k = V$. Recall that we consider GNNs in the form of $f=h\circ g$, where $g:\mathcal{V}\rightarrow \reals^{d'}$ is a feature aggregation function and $h:\reals^{d'}\rightarrow \reals^{C}$ is an MLP that takes the aggregated features as input and output the classification logits. We make the following two assumptions on the smoothness of the label distribution 
and the model.

\begin{assumption} [Label Smoothness]
\label{assume:class-smooth}
Assume that $\forall c\in[C]$, there exists a function $\eta_c:\mathcal{V}\rightarrow [0,1]$ such that $\Pr[y_i=c\mid v_i] = \eta_c(v_i)$ for any $i\in V$. Moreover, $\forall k\in [K]$, $\forall i,j \in T_k$, assume that there exists a constant $\delta_{\eta} < \infty$, such that
\[|\eta_c(v_i) - \eta_c(v_j)| \leq \delta_\eta ||g(v_i)-g(v_j)||_2.\]
\end{assumption}

\begin{assumption} [Model Smoothness]
\label{assume:mlp-smooth}
Assume that $\forall e, e' \in \reals^{d'}$, the MLP $h$ satisfies $||h(e)-h(e')||_\infty \leq \delta_h ||e - e'||_2$ for some constant $\delta_h < \infty$.
\end{assumption}

\paragraph{The Main Result}

We use $T(i)$ to denote the partition where the node $i$ belongs to, and denote for convenience the training set $S_{tr} :=  \mathbf{s}_0 \cup \mathbf{s}_{1}$ and the test set $S_{te} := V\setminus S_{tr}$. We have the following result.

\begin{prop}
\label{prop:error}
For any fixed GNN model $f$, under Assumptions~\ref{assume:class-smooth} and~\ref{assume:mlp-smooth}, for any $i\in S_{te}$, if $S_{tr} \cap T(i) \neq \emptyset$, letting $\tau(i) := \arg\min_{l\in S_{tr} \cap T(i)}\|g(v_i) - g(v_l)\|_2$, $\varepsilon_i := \|g(v_i) - g(v_{\tau(i)})\|_2$, and $\gamma_i := 2\delta_h \varepsilon_i$, then we have
\begin{equation}
\label{eq:error}
    \mathbb{E}_{y_i} [\mathcal{L}_0(f(v_i), y_i)] \leq C\delta_\eta\varepsilon_i + \mathbb{E}_{y_{\tau(i)}} [\mathcal{L}_{\gamma_i}(f(v_{\tau(i)}), y_{\tau(i)})].
\end{equation}
\end{prop}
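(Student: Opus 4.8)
The plan is to bound the expected $0$-$1$ error at a test node $i$ by comparing it to the error at its nearest labeled neighbor $\tau(i)$ within the same partition $T(i)$, controlling the discrepancy through the two smoothness assumptions. The key quantity to track is the gap between the true-class logit and the runner-up logit. Write $c^\star := y_i$ for a draw of the label. The event $\mathcal{L}_0(f(v_i), y_i) = 1$ is $f(v_i)[y_i] \le \max_{c\neq y_i} f(v_i)[c]$. The strategy is: (i) push the logit comparison from $v_i$ to $v_{\tau(i)}$ using Assumption~\ref{assume:mlp-smooth} (each coordinate of $h$ moves by at most $\delta_h \varepsilon_i$, so any margin shrinks by at most $2\delta_h\varepsilon_i = \gamma_i$); this converts a $0$-margin misclassification at $v_i$ into at worst a $\gamma_i$-margin misclassification at $v_{\tau(i)}$ \emph{for the same label value}; (ii) handle the fact that $y_i$ and $y_{\tau(i)}$ are different random variables by comparing their conditional class probabilities $\eta_c(v_i)$ and $\eta_c(v_{\tau(i)})$ via Assumption~\ref{assume:class-smooth}, which costs at most $\delta_\eta \varepsilon_i$ in total variation per class, hence at most $C\delta_\eta \varepsilon_i$ overall.

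Concretely, first I would fix a label value $c\in[C]$ and observe the deterministic implication: if $\mathcal{L}_0(f(v_i), c) = 1$, i.e. $f(v_i)[c] \le \max_{c'\neq c} f(v_i)[c']$, then by Assumption~\ref{assume:mlp-smooth} applied with $e = g(v_i)$ and $e' = g(v_{\tau(i)})$, we have $|f(v_i)[c'] - f(v_{\tau(i)})[c']| \le \delta_h\varepsilon_i$ for every $c'$, so $f(v_{\tau(i)})[c] \le f(v_i)[c] + \delta_h\varepsilon_i \le \max_{c'\neq c} f(v_i)[c'] + \delta_h\varepsilon_i \le \max_{c'\neq c} f(v_{\tau(i)})[c'] + 2\delta_h\varepsilon_i$, which is exactly $\mathcal{L}_{\gamma_i}(f(v_{\tau(i)}), c) = 1$. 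Thus pointwise in the label value, $\mathcal{L}_0(f(v_i), c) \le \mathcal{L}_{\gamma_i}(f(v_{\tau(i)}), c)$. Taking expectations over $y_i \sim \Pr[\cdot \mid v_i]$, which by Assumption~\ref{assume:class-smooth} has class probabilities $\eta_c(v_i)$, gives
\[
\E_{y_i}[\mathcal{L}_0(f(v_i), y_i)] = \sum_{c=1}^C \eta_c(v_i)\,\mathcal{L}_0(f(v_i), c) \le \sum_{c=1}^C \eta_c(v_i)\,\mathcal{L}_{\gamma_i}(f(v_{\tau(i)}), c).
\]

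Next I would swap $\eta_c(v_i)$ for $\eta_c(v_{\tau(i)})$: since $\tau(i) \in S_{tr}\cap T(i)$ and $i \in T(i)$ lie in the same partition, Assumption~\ref{assume:class-smooth} gives $|\eta_c(v_i) - \eta_c(v_{\tau(i)})| \le \delta_\eta \|g(v_i) - g(v_{\tau(i)})\|_2 = \delta_\eta \varepsilon_i$ for each $c$. Because $\mathcal{L}_{\gamma_i} \in \{0,1\} \le 1$, replacing the weights costs at most $\sum_{c=1}^C |\eta_c(v_i) - \eta_c(v_{\tau(i)})| \le C\delta_\eta\varepsilon_i$, so
\[
\sum_{c=1}^C \eta_c(v_i)\,\mathcal{L}_{\gamma_i}(f(v_{\tau(i)}), c) \le C\delta_\eta\varepsilon_i + \sum_{c=1}^C \eta_c(v_{\tau(i)})\,\mathcal{L}_{\gamma_i}(f(v_{\tau(i)}), c) = C\delta_\eta\varepsilon_i + \E_{y_{\tau(i)}}[\mathcal{L}_{\gamma_i}(f(v_{\tau(i)}), y_{\tau(i)})],
\]
which chains with the previous display to yield~\eqref{eq:error}. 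The condition $S_{tr}\cap T(i)\neq\emptyset$ is exactly what makes $\tau(i)$ and $\varepsilon_i$ well-defined.

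I expect the only real subtlety — not so much an obstacle as a point to state carefully — is keeping the label value $c$ as a fixed deterministic argument while separately handling the two distinct randomness sources $y_i$ and $y_{\tau(i)}$; the pointwise logit inequality must be established for \emph{every} $c$ before any expectation is taken, and the max over $c'\neq c$ must be manipulated monotonically (adding $\delta_h\varepsilon_i$ inside the max only increases it). Everything else is a direct application of the two Lipschitz-type assumptions plus the triangle inequality on logits, so there is no delicate estimate to grind through.
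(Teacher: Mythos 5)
Your proposal is correct and follows essentially the same route as the paper: your pointwise inequality $\mathcal{L}_0(f(v_i), c) \le \mathcal{L}_{\gamma_i}(f(v_{\tau(i)}), c)$ (proved via the coordinate-wise $\delta_h\varepsilon_i$ bound from Assumption~\ref{assume:mlp-smooth}) is exactly the paper's case analysis showing its first term is non-positive, stated in contrapositive form, and your weight swap costing $C\delta_\eta\varepsilon_i$ is the paper's bound on the second term via Assumption~\ref{assume:class-smooth}. The only differences are the order of the two steps and that the paper isolates the logit-Lipschitz bound as a separate lemma, so there is nothing substantive to add.
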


Proposition~\ref{prop:error} provides an upper bound of the expected classification loss $\mathbb{E}_{y_i} [\mathcal{L}_0(f(v_i), y_i)]$ for each node $i$ in the test set $S_{te}$. This upper bound is primarily dependent on the training node $\tau(i)$ and their distance $\varepsilon_i$ on the aggregated feature space, where $\tau(i)$ is the closest training node to $i$ among the ones residing in the same graph partition as $i$. Specifically, the first term is linearly proportional to $\varepsilon_i$; and the second term (the expected margin loss of $\tau(i)$) is increasing with respect to $\gamma_i$ and hence is also increasing with respect to $\varepsilon_i$. This upper bound motivates an active learning algorithm that selects the training set by minimizing $\sum_{i\in S_{te}} \varepsilon_i$, which we will introduce in detail in Section~\ref{sec:graphpart}.

\paragraph{Remarks}
We make a few remarks on the bound~(\ref{eq:error}) in comparison to relevant previous work. Our novel technical contributions in the analysis include that 1) we have a weaker assumption on label smoothness; and 2) our proof (Appendix~\ref{app::proof}) removes an unrealistic implicit assumption in previous work.

First, our analysis can be viewed as an extension of the results by \citep{sener2018active} and \citep{wu2019active}. One key difference between our analysis and theirs lies in the assumption on label smoothness (Assumption~\ref{assume:class-smooth}). Adapting the label smoothness assumption by \citep{sener2018active} and \citep{wu2019active} into our notations gives the following Assumption~\ref{assume:class-smooth2}.
\begin{assumption} [Label Smoothness by \citep{sener2018active} and \citep{wu2019active}]
\label{assume:class-smooth2}
Assume that $\forall c\in[C]$, there exists a function $\eta_c:\mathcal{V}\rightarrow [0,1]$ such that $\Pr[y_i=c\mid v_i] = \eta_c(v_i)$ for any $i\in V$. Moreover, $\forall i,j \in V$, assume that there exists a constant $\delta'_{\eta} < \infty$, such that
\[|\eta_c(v_i) - \eta_c(v_j)| \leq \delta'_\eta ||g(v_i)-g(v_j)||_2.\]
\end{assumption}
The following fact indicates that Assumption~\ref{assume:class-smooth} is weaker than Assumption~\ref{assume:class-smooth2}.
\begin{lemma}
If Assumption~\ref{assume:class-smooth2} holds, then there exists a constant $\delta_{\eta}\le \delta'_{\eta}$ satisfying Assumption~\ref{assume:class-smooth}.
\end{lemma}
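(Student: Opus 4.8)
The plan is to observe that this is a restriction-of-domain argument: the inequality required by Assumption~\ref{assume:class-smooth} must hold only for pairs $i,j$ lying in a common partition $T_k$, whereas Assumption~\ref{assume:class-smooth2} asserts the analogous inequality for \emph{all} pairs $i,j\in V$. Since $\bigcup_k (T_k\times T_k)\subseteq V\times V$, the simplest route is to note that the very same constant works, i.e.\ one may take $\delta_\eta=\delta'_\eta$, which trivially satisfies $\delta_\eta\le\delta'_\eta<\infty$. To get the (possibly strict) inequality stated, I would instead define $\delta_\eta$ as the smallest admissible constant for the within-partition condition, namely the supremum of the difference quotients $|\eta_c(v_i)-\eta_c(v_j)|/\|g(v_i)-g(v_j)\|_2$ over all $c\in[C]$, all $k\in[K]$, and all $i,j\in T_k$ with $g(v_i)\neq g(v_j)$.

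The key steps, in order, are: (1) fix the function $\eta_c$, which is common to both assumptions, so nothing is lost there; (2) for any $c$, any $k$, and any $i,j\in T_k$ with $g(v_i)\neq g(v_j)$, apply Assumption~\ref{assume:class-smooth2} to the same pair $(i,j)$ (legitimate since $i,j\in V$) to get $|\eta_c(v_i)-\eta_c(v_j)|\le\delta'_\eta\|g(v_i)-g(v_j)\|_2$, hence the difference quotient is at most $\delta'_\eta$; (3) take the supremum over all such $c,k,i,j$ to define $\delta_\eta$, concluding $\delta_\eta\le\delta'_\eta<\infty$; (4) verify that with this $\delta_\eta$ the bound $|\eta_c(v_i)-\eta_c(v_j)|\le\delta_\eta\|g(v_i)-g(v_j)\|_2$ holds for all $c\in[C]$, all $k\in[K]$, and all $i,j\in T_k$, which is immediate from the definition of the supremum.

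The only genuinely nontrivial point — and the one I would be careful to address explicitly — is the degenerate case $g(v_i)=g(v_j)$ for $i,j$ in the same partition. Here the difference quotient is undefined, so such pairs must be excluded from the supremum defining $\delta_\eta$; but Assumption~\ref{assume:class-smooth2} applied to this pair forces $|\eta_c(v_i)-\eta_c(v_j)|\le\delta'_\eta\cdot 0=0$, i.e.\ $\eta_c(v_i)=\eta_c(v_j)$, so the desired inequality $0\le\delta_\eta\cdot 0$ holds for any nonnegative $\delta_\eta$. One also needs the supremum in step~(3) to be over a nonempty bounded set (or adopt the convention $\sup\emptyset=0$) to ensure $\delta_\eta$ is a well-defined finite constant; boundedness by $\delta'_\eta$ is exactly what step~(2) gives. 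Beyond this bookkeeping there is no real obstacle — the lemma is a straightforward consequence of the fact that Assumption~\ref{assume:class-smooth2} quantifies over a strictly larger collection of pairs than Assumption~\ref{assume:class-smooth}.
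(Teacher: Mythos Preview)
Your argument is correct. The paper does not actually supply a proof of this lemma; it is stated in the main text as a ``fact'' and left to the reader, with no corresponding proof in the appendix. Your restriction-of-domain argument, including the careful handling of the degenerate case $g(v_i)=g(v_j)$ and the definition of $\delta_\eta$ via the within-partition supremum of difference quotients, is exactly the natural way to justify the claim and is more detailed than anything the paper provides.
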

Intuitively, our Assumption~\ref{assume:class-smooth} is weaker than Assumption~\ref{assume:class-smooth2} because the latter requires the smoothness condition to hold on the entire graph, while our Assumption~\ref{assume:class-smooth} only requires the smoothness condition to hold within partitions of the graph. This difference also has a practical significance as Assumption~\ref{assume:class-smooth} allows the label distributions to have sharp changes across different partitions of the graph, which is more realistic than Assumption~\ref{assume:class-smooth2} that does not allow such sharp changes. In addition, the $\delta_{\eta}$ in Assumption~\ref{assume:class-smooth} could be much smaller than the $\delta'_{\eta}$ Assumption~\ref{assume:class-smooth2} if the label distribution presents \emph{global smoothness} over the graph~\citep{zhou2004learning}, i.e., the distributions of node labels tend to be closer for nodes within a partition/community of the graph than for nodes reside in different partitions/communities.

Second, while the upper bounds on classification error given by \citep{sener2018active} and \citep{wu2019active} seem to be tighter than our bound~(\ref{eq:error}), we remark that they made an implicit assumption in the proofs that is counter-intuitive. In particular, the implicit assumption is that, the label distributions of the selected training samples are always concentrated on one class. This assumption is counter-intuitive as the selection of training samples alters the label distributions of those samples. We avoid such an assumption in our analysis at the expense of introducing an extra margin loss term. 

Finally, we remark that while GCN cannot be written in the form of $f=h\circ g$ as we assumed in our analysis, it has been shown by~\citep{wu2019active} that the difference between outputs of an $L$-layer GCN on two nodes $i, j\in V$ can be upper bounded by $\delta_{GCN} \|(S^LX)_i - (S^LX)_j\|_2$ for some constant $0 < \delta_{GCN} < \infty$. So the analysis in this section can be similarly applied to GCN.

% = = = = = = = = = = = = = = = = = = = = = = = = = = = %
%        Graph Partitioning-Based Query Framework       %
% = = = = = = = = = = = = = = = = = = = = = = = = = = = %

\subsection{The Proposed Graph Partition-Based Active Learning Framework}
\label{sec:graphpart}

\paragraph{The General Framework}
Motivated by Proposition~\ref{prop:error}, we propose an active learning framework that selects the set of nodes to be labeled, $\mathbf{s}_1$, by solving the following optimization problem: for $\varepsilon_i$ as defined in Proposition~\ref{prop:error},
\begin{equation}
\label{eq:objective-kmed}
    \min_{\mathbf{s}_1:|\mathbf{s}_1|\leq b} \sum_{i\in S_{te}} \varepsilon_i = \min_{\mathbf{s}_1:|\mathbf{s}_1|\leq b} \sum_{i\in V} \min_{j\in T(i) \cap (\mathbf{s}_0 \cup \mathbf{s}_1)} ||g(v_i)-g(v_j)||_2.
\end{equation}
For any partition, $T_k\in \mathcal{T}_K$, if we further specify a budget $b_k$ for the number of nodes to be selected from this partition, such that $\sum_{k=1}^K = b_k = b$, then we can approximately\footnote{Omitting the seed set $\mathbf{s_0}$, which is 0 in the one-step setting.} re-write the optimization problem~(\ref{eq:objective-kmed}) as $K$ separate optimization problems as follows: for $k=1,\ldots,K$,
\begin{equation}
\label{eq:objective-kmed-k}
    \min_{\mathbf{s}_1^{(k)}\in T_k:|\mathbf{s}_1^{(k)}|\leq b_k, } \sum_{i\in T_k} \min_{j\in \mathbf{s}_1^{(k)}} ||g(v_i)-g(v_j)||_2,
\end{equation}
where the individual optimization problem~(\ref{eq:objective-kmed-k}) is equivalent to minimizing the objective of a K-Medoids problem with $b_k$ medoids on the partition $T_k$. 

Given a $K$-partition of the graph and a feature aggregation function $g: \mathcal{V} \rightarrow \reals^{d'}$, we summarize the proposed general framework, which we call it \textbf{GraphPart} (Graph-Partition-based query), in Algorithm~\ref{algo_partition}. For the K-Medoids algorithm, in practice, we apply an efficient approximation by~\citep{park2009simple} by selecting nodes closest to K-Means centers. The aggregation function $g$ should be chosen according to the GNN architecture that we will be training. For example, for a 2-Layer GCN model, we set $g(v_i) = (S^2 X)_i$ for any $i\in V$, since this type of aggregation function effectively reflects the output difference of GCN as we remarked at the end of Section~\ref{sec:analysis}.

\begin{algorithm}[tb]
\caption{Graph-Partition-Based Query}
\label{algo_partition}
\textbf{Input}: A $K$-partition $\mathcal{T}_K$ of the graph, budget $b$\\
\textbf{Output}: A subset of unlabelled nodes $\mathbf{s}_1$ of size $b$: $\mathbf{s}_1 \subseteq V\setminus \mathbf{s}_0$ and $|\mathbf{s}_1| = b$
\begin{algorithmic}[1] %[1] enables line numbers
    \STATE Set $\mathbf{s}_1 = \emptyset$.
    \FOR{$T_k \in \mathcal{T}_K$}
        \STATE $b_k \leftarrow b // K$.
        \STATE $T_k \leftarrow T_k \setminus \{\mathbf{s}_0 \cup \mathbf{s}_1\}$.
        \STATE $E_k \leftarrow \{g(v_i)\}_{i\in T_k}$.
        \STATE $\mathbf{s} \leftarrow b_k\texttt{-Medoids}(E_k)$. \\
               \texttt{//Perform K-Medoids clustering on the set of data points $E_k$ with $b_k$ medoids returned as $\mathbf{s}$.}
        \STATE $\mathbf{s}_1 = \mathbf{s}_1 \cup \mathbf{s}$.
    \ENDFOR
    \STATE \textbf{return} $\mathbf{s}_1$
\end{algorithmic}
\end{algorithm}
% \vspace*{-0.7cm}

\paragraph{The Graph-Partition Method}
We use a modularity-based graph partition method, which is one of the most popular classes of methods for community detection~\citep{newman2004fast,Clauset_2004}. Specifically, we obtain a $K$-partition of a graph using the Clauset-Newman-Moore greedy modularity maximization~\citep{Clauset_2004} method. This method is a bottom-up algorithm, which begins with communities containing every single node, and iteratively merges the pair of communities that increases modularity the most, until $K$ communities are left. If no pairs of communities can be merged to increase modularity when more than $K$ communities are left, we further use agglomerative hierarchical clustering~\citep{kaufman2009finding} to iteratively merge small outlying communities until only $K$ communities are left. Notably, we also propose a simple elbow method to automatically determine the number of partitions $K$ without the need for node labels. Therefore the whole active node selection process is hyperparameter-free.

\paragraph{Compensating for the Interference across Partitions}
One potential shortcoming of the proposed GraphPart method shown in Algorithm~\ref{algo_partition} is that it ignores the interference across partitions as it optimizes separate $K$-Medoids problems independently on each partition. The medoids selected on two different partitions may be close to each other, which compromises the overall coverage of the unlabeled nodes. To compensate for this problem, we come up with a greedy correction that when selecting the medoids in the partition $T_k$, we penalize the nodes that are close to the medoids already selected in the partitions $T_1, T_2,\ldots, T_{k-1}$.
Instead of selecting nodes closest to K-Means centers, the distance function to minimize is penalized by the minimum distance to any selected node.
We name this corrected variant of GraphPart as \textbf{GraphPartFar}, which makes sure that all the nodes returned to $\mathbf{s}_1$ are not too close and similar to each other, increasing the diversity of the pool.

\subsection{Complexity Analysis}
We provide a complexity analysis as follows. 
In natural sparse networks, the number of edges $m$ is often linear in the number of nodes $n$.
Under this approximation, the complexity of Clauset-Newman-Moore community detection is $O(n\log^2 n)$. 
The agglomerative hierarchical clustering after Clauset-Newman-Moore community detection can be considered as $O(1)$. 
Under a budget of $b$ and $d$ hidden channels, the K-Means++ has a complexity of $O(bnd/K^2)$ on each of the $K$ communities, adding up to $O(bnd/K)$. 
Overall, the complexity of the proposed methods is roughly linear in terms of the number of nodes $n$.
Nevertheless, we note that in a one-step active learning setup, the computational complexity of the active learning algorithms is usually not the primary challenge. Instead, the cost of expert annotation is often the primary bottleneck problem.
\section{Experiments}
\label{sec:exp}

\subsection{Experiment Setup}

We experiment on 7 benchmark datasets and compare the proposed methods against several state-of-the-art baseline active learning approaches on training 3 different GNN models. Following~\citep{wu2019active}, we evaluate each baseline with a series of label budgets and report the Macro-F1 performance for node classification over the full graph. Each experiment setting is repeated with 10 random seeds.

\textbf{Dataset.} 
We experiment on citation networks Citeseer, Cora, and Pubmed~\citep{sen2008collective}, three standard node classification benchmarks. We also experiment on Corafull~\citep{bojchevski2018deep} and Ogbn-Arxiv~\citep{hu2020ogb}, for performance on denser networks with more classes, and on co-authorship networks~\citep{shchur2018pitfalls} for diversity. 
The summary statistics of the datasets are provided in Table~\ref{tab:data}.
The homophily ratio is defined following~\citet{zhu2020beyond}.
To determine the best number of $K$ communities in the graph, we find the elbow of the costs using the \texttt{kneebow}~\footnote{\url{https://pypi.org/project/kneebow/}} toolkit. 
The value of $K$ for each dataset is illustrated in Figure~\ref{fig:num-partitions}.

\begin{figure}[H]
    \centering
    \begin{subfigure}[t]{.23\textwidth}
        \centering
        \includegraphics[width=1.\linewidth]{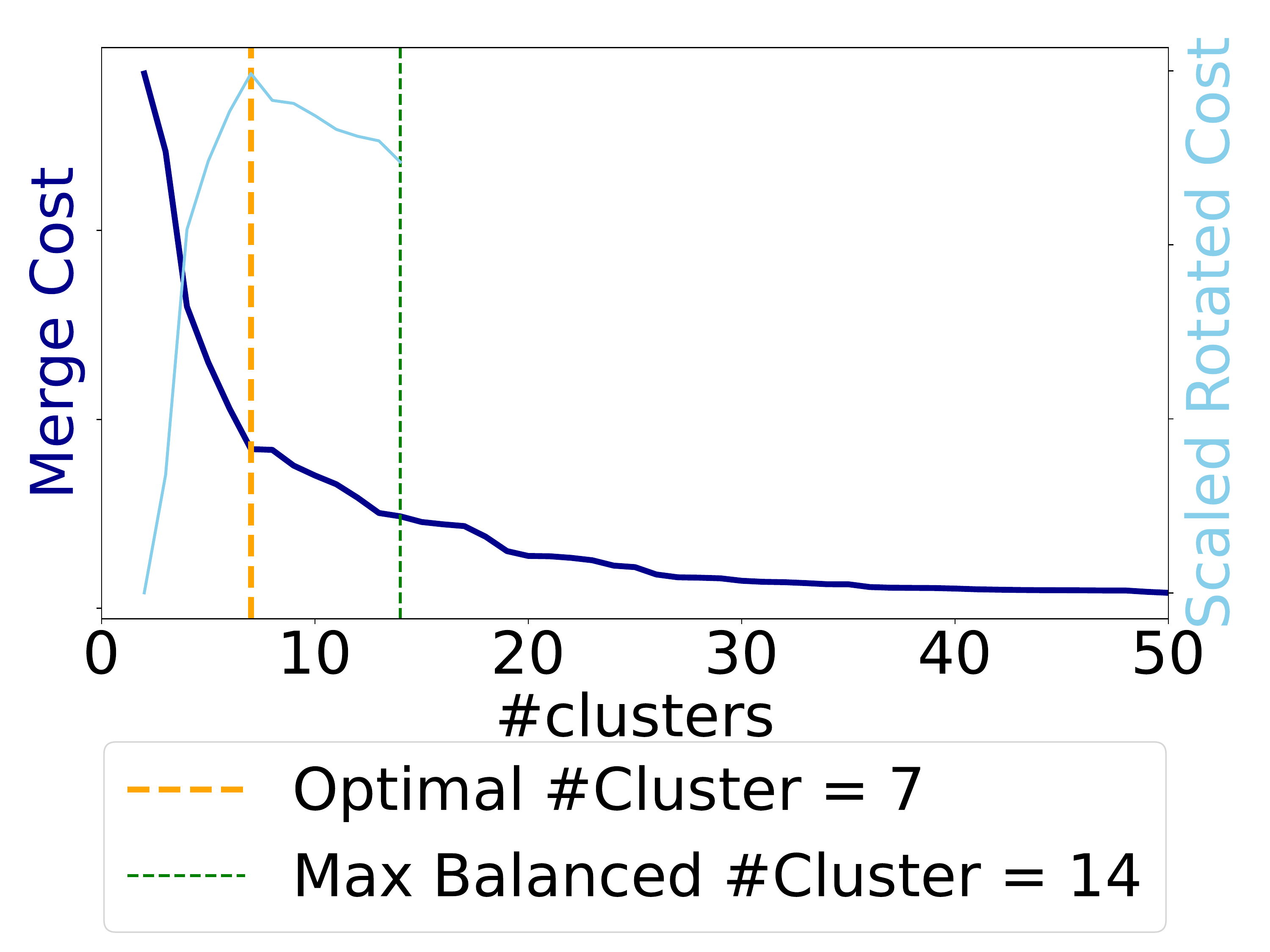}
        \vspace*{-0.6cm}
        \caption{Cora}
        \label{fig:cora-cluster}
    \end{subfigure}
    ~ 
    \begin{subfigure}[t]{.23\textwidth}
        \centering
        \includegraphics[width=1.\linewidth]{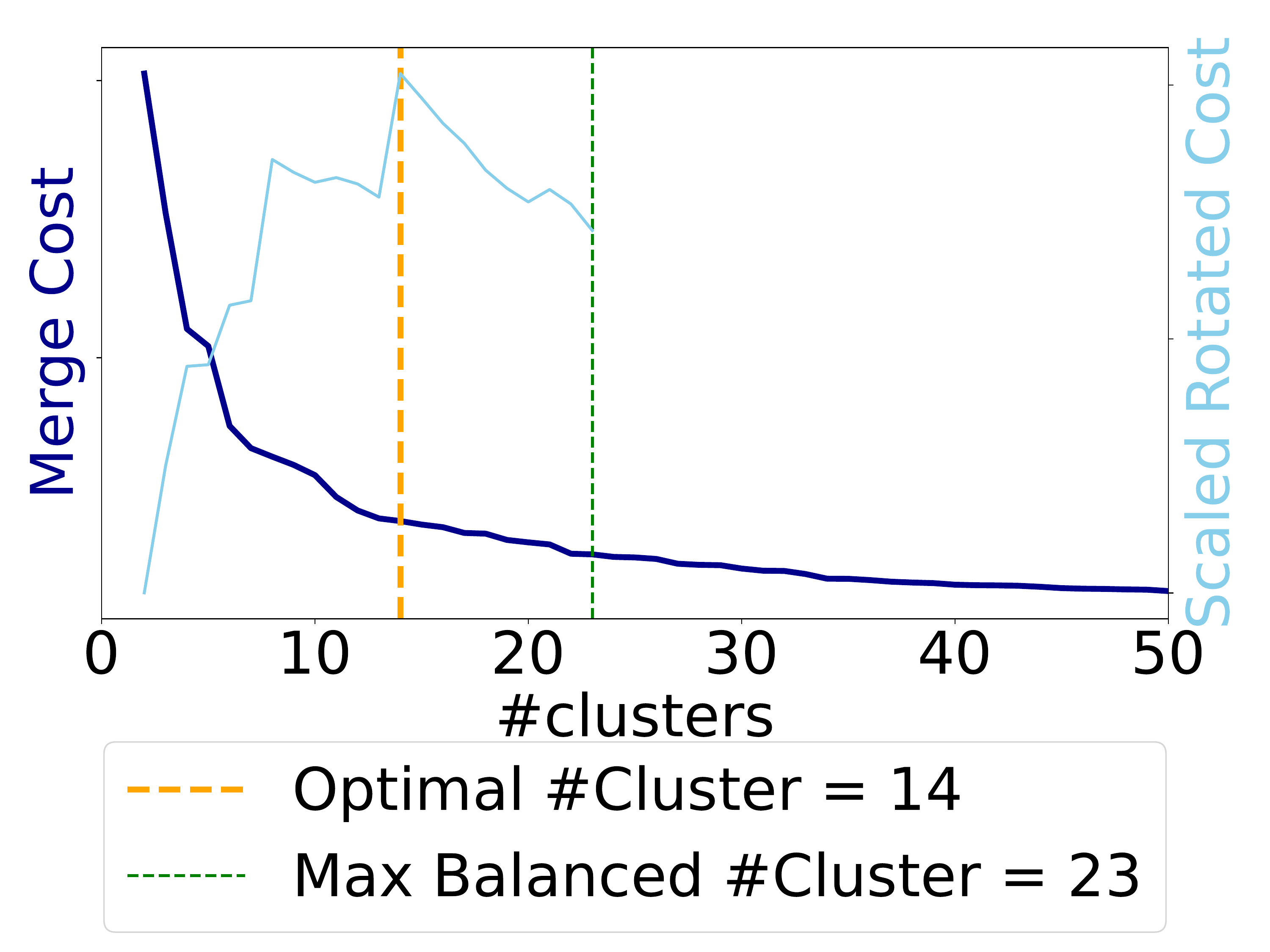}
        \vspace*{-0.6cm}
        \caption{Citeseer}
        \label{fig:citeseer-cluster}
    \end{subfigure}
    ~ 
    \begin{subfigure}[t]{.23\textwidth}
        \centering
        \includegraphics[width=1.\linewidth]{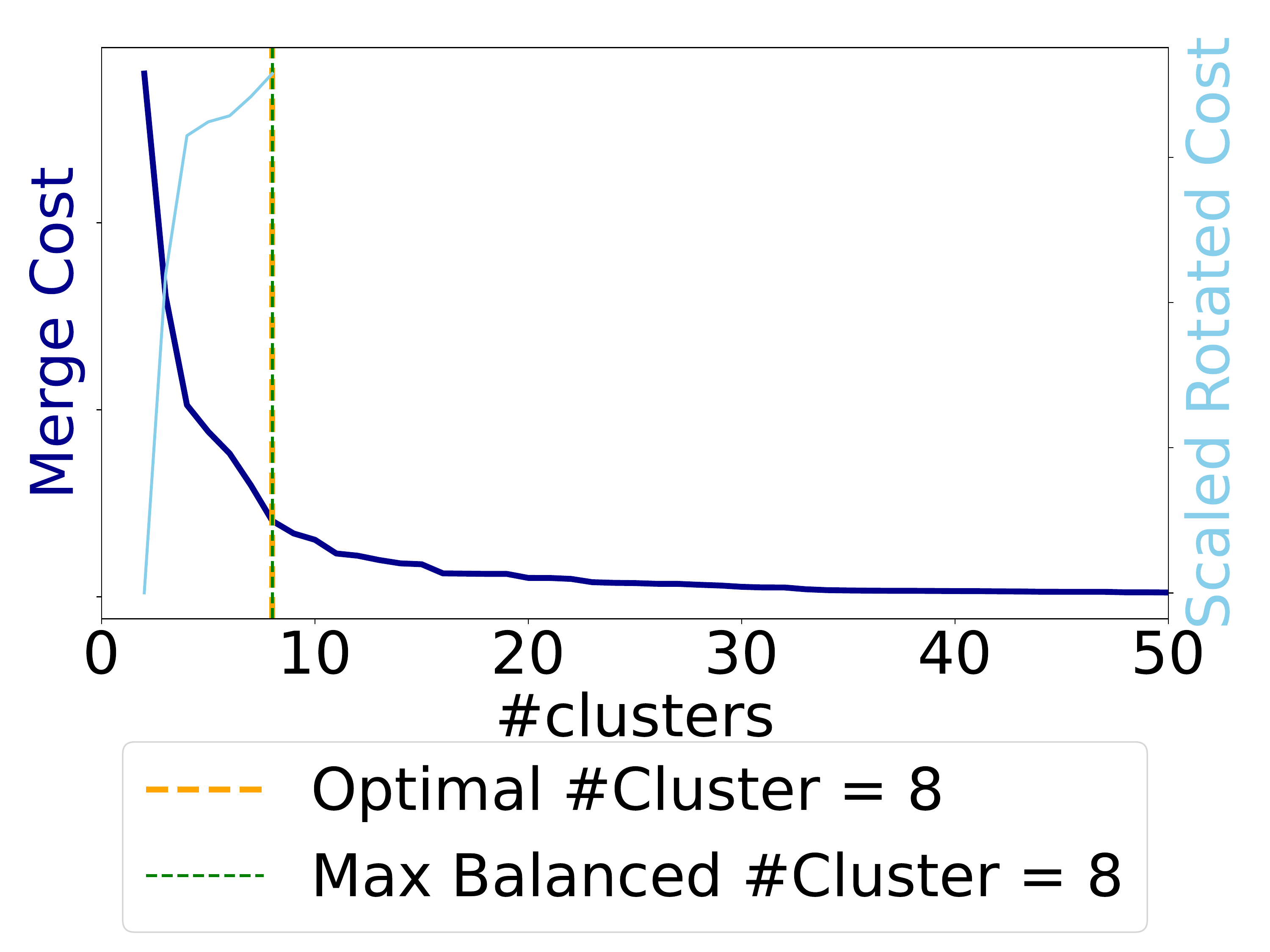}
        \vspace*{-0.6cm}
        \caption{Pubmed}
        \label{fig:pubmed-cluster}
    \end{subfigure}
    ~ \\
    \hspace*{-0.5cm}
    \begin{subfigure}[t]{.23\textwidth}
        \centering
        \includegraphics[width=1.\linewidth]{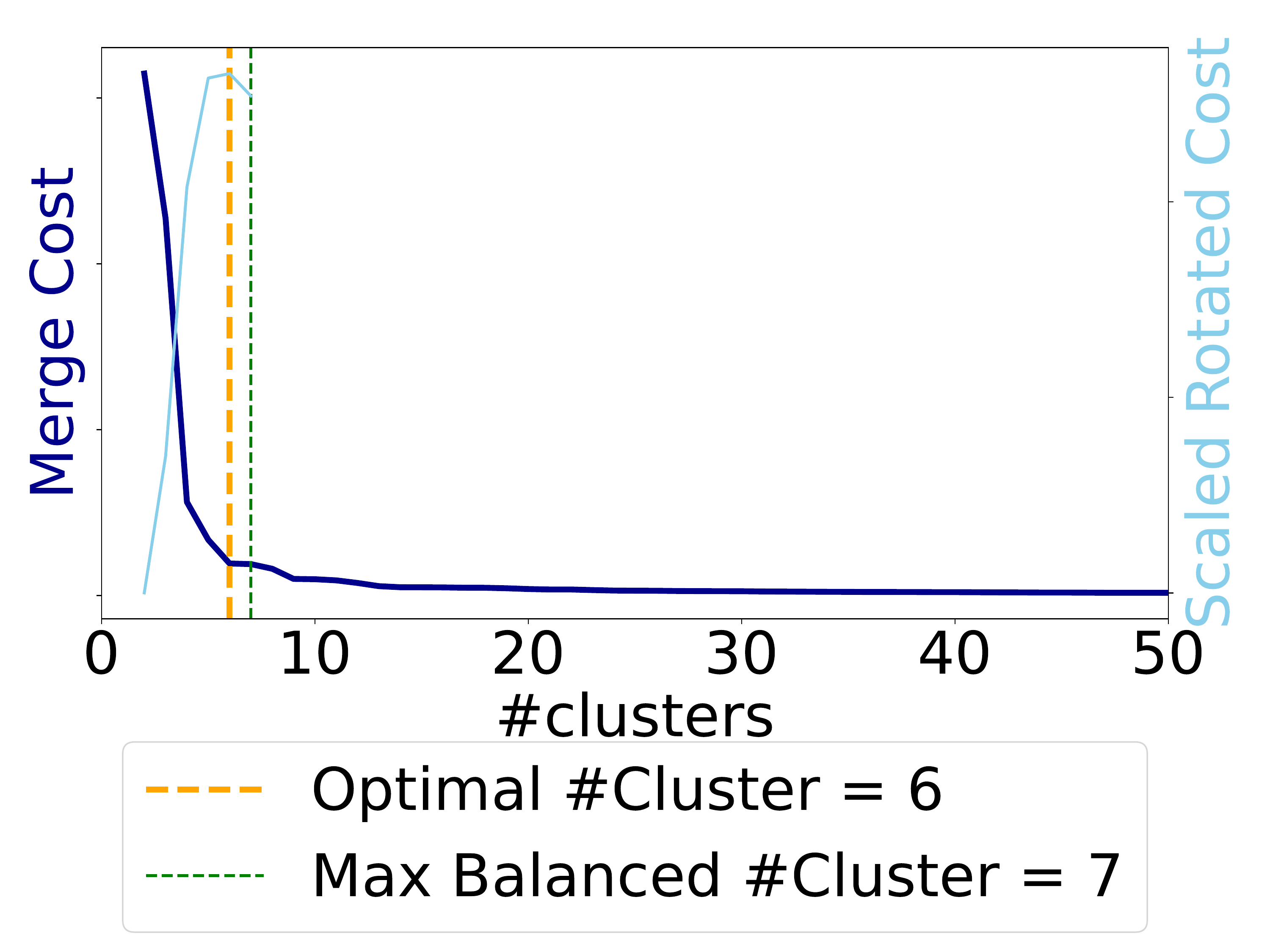}
        \vspace*{-0.6cm}
        \caption{Coauthor-CS}
        \label{fig:cd-cluster}
    \end{subfigure}
    ~ 
    \begin{subfigure}[t]{.23\textwidth}
        \centering
        \includegraphics[width=1.\linewidth]{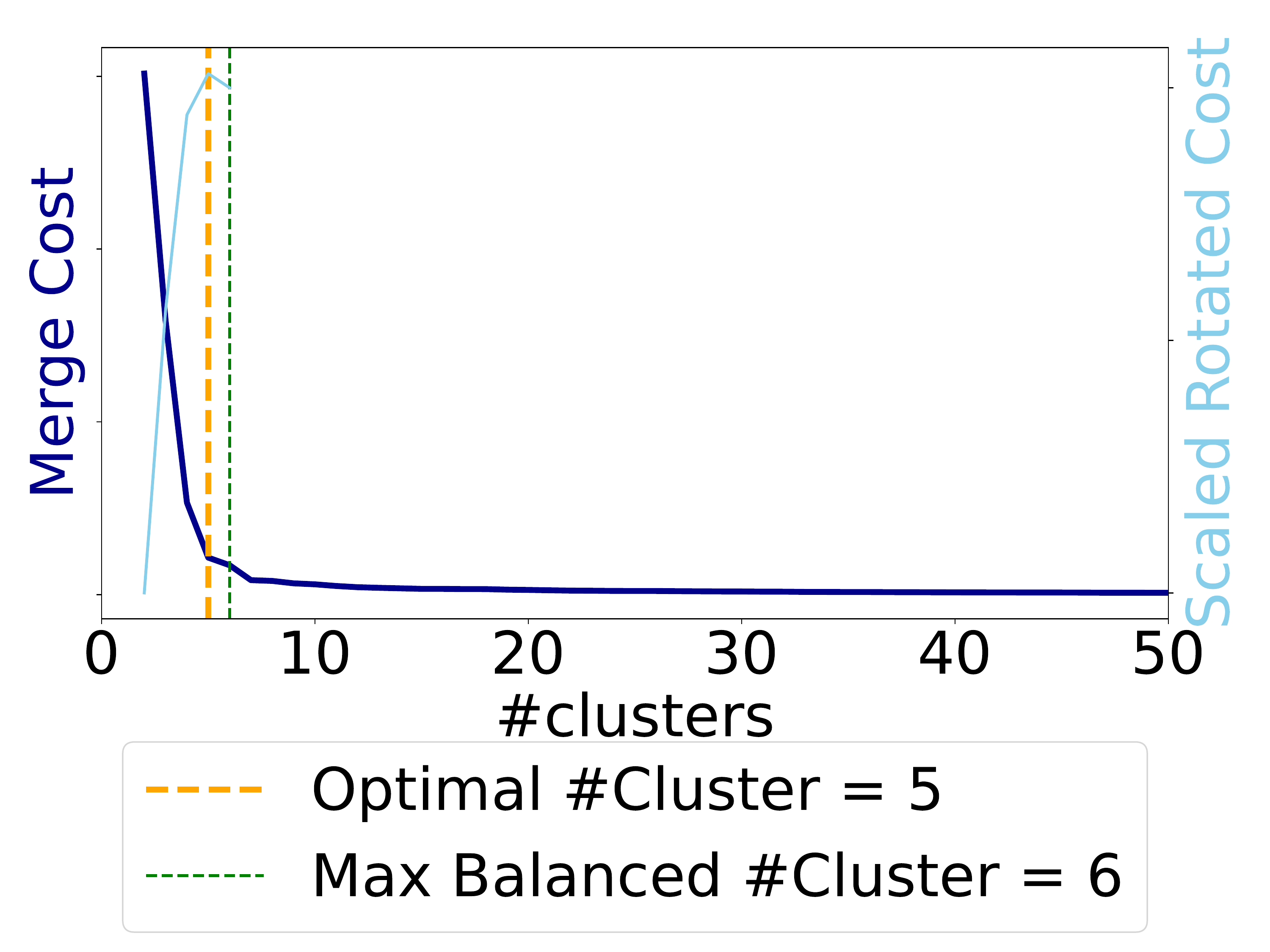}
        \vspace*{-0.6cm}
        \caption{Coauthor-Physics}
        \label{fig:physics-cluster}
    \end{subfigure}
    ~ 
    \begin{subfigure}[t]{.23\textwidth}
        \centering
        \includegraphics[width=1.\linewidth]{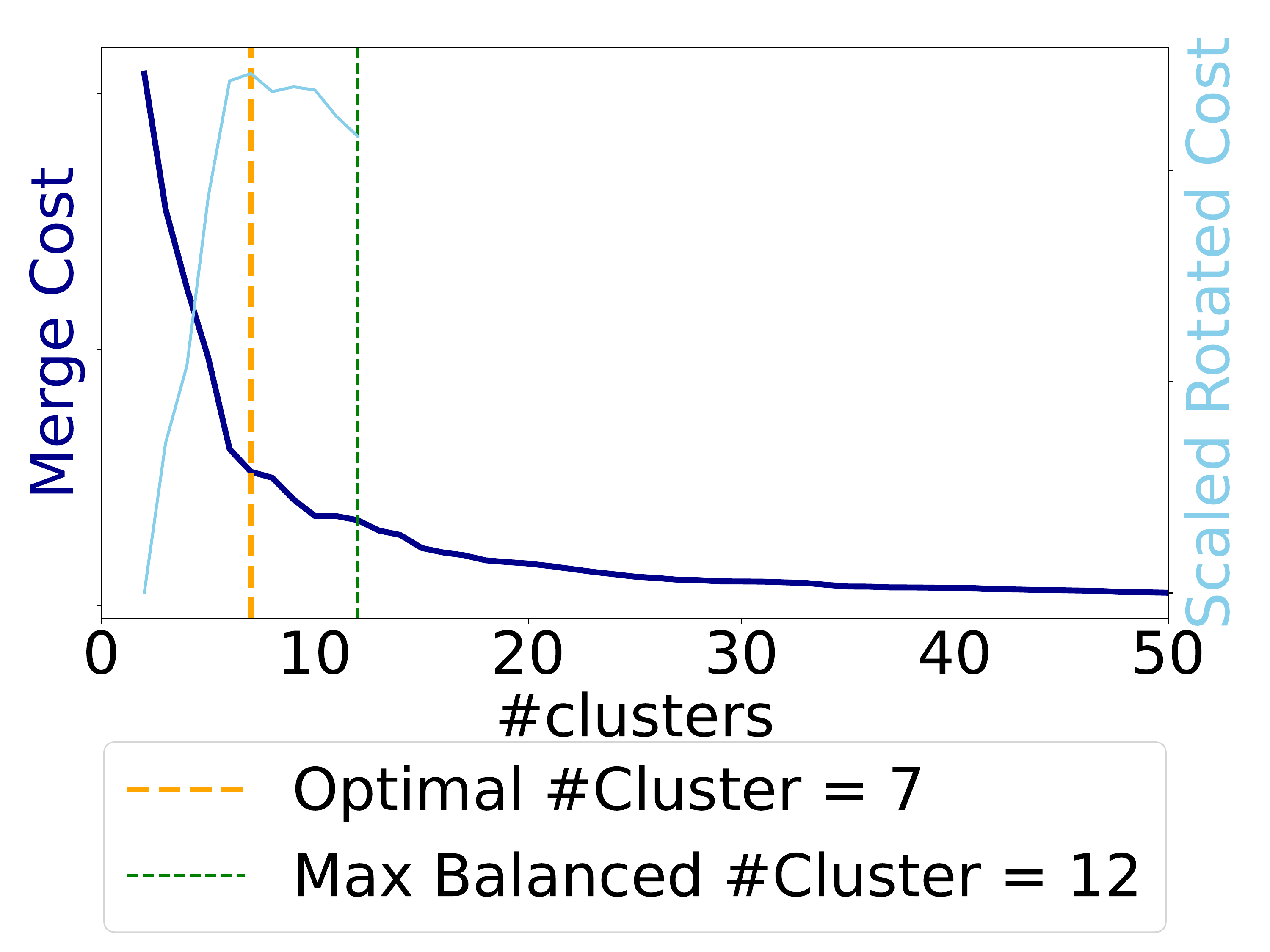}
        \vspace*{-0.6cm}
        \caption{Corafull}
        \label{fig:corafull-cluster}
    \end{subfigure}
    ~
    \begin{subfigure}[t]{.23\textwidth}
        \centering
        \includegraphics[width=1.\linewidth]{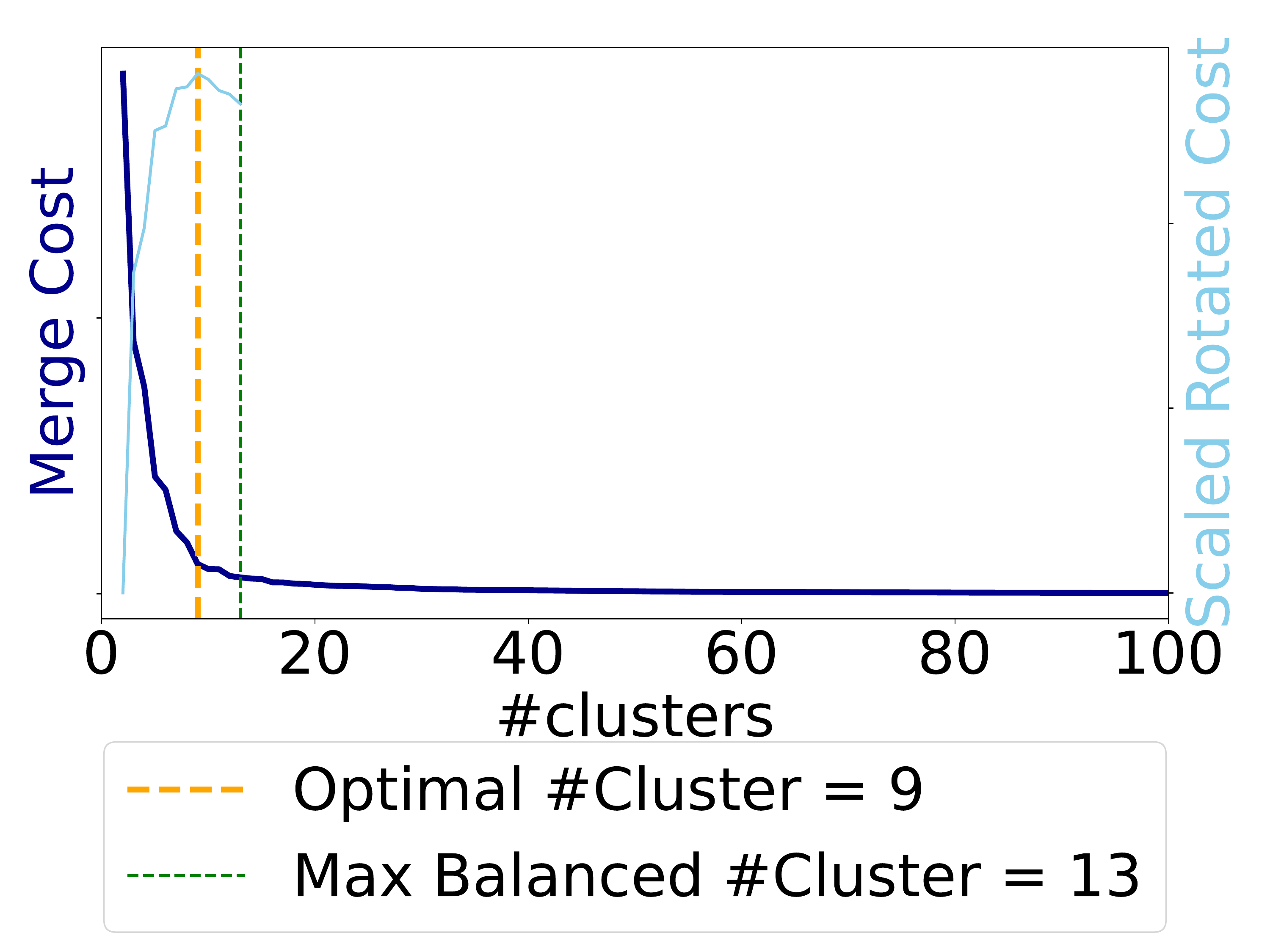}
        \vspace*{-0.6cm}
        \caption{Ogbn-Arxiv}
        \label{fig:arxiv-cluster}
    \end{subfigure}
    \caption{A summary of the number of partitions, automatically determined by the elbow method. \vspace*{-0.3cm}}
    \label{fig:num-partitions}
\end{figure}
\begin{table}[htbp]
  \vspace*{-0.2cm}
  \caption{Summary statistics of datasets.}
  \vspace*{-0.3cm}
  \centering
  \scalebox{0.8}{
    \begin{tabular}{lrrrrrr} 
    \toprule 
    Dataset & \#Nodes & \#Edges & \#Features & \#Classes ($C$) & \#Partitions ($K$) & Homophily\\ 
    \cmidrule(lr){1-1} \cmidrule(lr){2-7}
    Cora & 2,708 & 5,278 & 1,433 & 7 & 7 & 0.810\\ 
    Citeseer & 3,327 & 4,552 & 3,703 & 6 & 14 & 0.736\\ 
    Pubmed & 19,717 & 44,324 & 500 & 3 & 8 & 0.802\\ 
    Co-CS & 18,333 & 81,894 & 6,805 & 15 & 6 & 0.808\\ 
    Co-Physics & 34,493 & 247,962 & 2,000 & 5 & 5 & 0.931\\
    Corafull & 19,793 & 126,842 & 8,710 & 70 & 7 & 0.567\\
    Ogbn-Arxiv & 169,343 & 1,166,243 & 128 & 40 & 9  & 0.655\\
    \bottomrule 
  \end{tabular}
  }
  \label{tab:data}
\end{table}

\textbf{GNN Models.} 
We perform experiments over different popular GNN models, including a 2-layer GCN~\citep{kipf2017semisupervised} with 16 hidden neurons, a 2-layer GraphSAGE~\citep{Hamilton2017graphsage} with 16 hidden neurons and a 2-layer GAT~\citep{velickovic2018graph} with 8 attention heads with 8 hidden neurons each.
For Ogbn-Arxiv dataset, the number of hidden neurons is increased to 128. To train each model, we use an Adam optimizer with an initial learning rate of $1\times 10^{-2}$ and weight decay of $5\times 10^{-4}$. 
As in the active learning setup, there should not enough labeled samples to be used as a validation set, we train the GNN model with fixed 300 epochs in all the experiments and evaluate over the full graph.

\textbf{Baselines.}
We compare active learning methods that can be applied to the one-step setting, divided into two categories: 1) general-purpose methods that are agnostic to the graph structure (Random, Density, Uncertainty, and CoreSet); and 2) methods tailored for graph-structured data (Centrality, AGE, FeatProp, GraphPart[Far]).

\begin{itemize}[leftmargin=*]
    \setlength\itemsep{0.4pt}
    \item \textbf{Random}: Chooses nodes uniformly at random. 
    \item \textbf{Density}~\citep{cai2017active}: First performs clustering on the hidden representations of the nodes, and then chooses nodes with maximum density score, which is (approximately) inversely proportional to the $L_2$-distance between each node and its cluster center.
    \item \textbf{Uncertainty}~\citep{10.5555/1613715.1613855}: Chooses the nodes with maximum entropy on the predicted class distribution.
    \item \textbf{CoreSet}~\citep{sener2018active}: Pperforms K-Center clustering over the hidden representations of nodes. Since the MIP optimized version is not scalable to large datasets, we use the time-efficient greedy approximation described in the original work.
    \item \textbf{Centrality}: Chooses nodes with the largest graph centrality metric value, which only considers only the graph structure, but not the node features. As is empirically validated by~\citep{cai2017active}, \textbf{Degree} centrality and \textbf{PageRank} centrality outperform other metrics.
    \item \textbf{AGE}~\citep{cai2017active}: defines the informativeness of nodes by linearly combining three metrics: centrality, density, and uncertainty. It further chooses nodes with the highest scores.
    \item \textbf{FeatProp}~\citep{wu2019active}: first performs K-Means on the aggregated node features, and then chooses the nodes closest to the cluster centers. 
    \item \underline{\textbf{GraphPart}} and \underline{\textbf{GraphPartFar}}: Two variants of the proposed method as described in Section~\ref{sec:graphpart}.
\end{itemize}

\textbf{Active Learning Setup.}
In our one-step setting, we vanish the seed set $s_0$ to zero. Note that some baseline methods (Density, Uncertainty, CoreSet, and AGE) are intended for iterative settings and require an initial model trained with the seed set $\mathbf{s}_0$, as these methods rely on the hidden representations of nodes or the predicted class distribution returned by the initial model. For these baselines, we choose one-third of the budget as random initialization, and let the method select the other two-thirds. 
On smaller datasets with sparser networks and fewer classes, we test each active learning approach with the series of budgets chosen as $2^{\{0,1,2,3,4\}}\times 10$. On the large datasets where the network is denser and the number of classes is drastically larger, the budgets are chosen as $2^{\{3,4,5,6,7\}}\times 10$.

\textbf{Implementation Details.}
We adopt PyTorch Geometric~\citep{Fey/Lenssen/2019} for dataset preprocessing and model construction in our experiments. For \textbf{CoreSet}~\citep{sener2018active}, since the MIP optimized version is not scalable to large datasets, we use the time-efficient greedy approximation version by choosing the node closest to the cluster centers. To adapt \textbf{AGE}~\citep{cai2017active} for the one-shot setting, we stick to the default parameters in the original work ($\gamma = 0.3$ for Citeseer, $\gamma = 0.7$ for Cora and $\gamma = 0.9$ for Pubmed. For benchmarks not mentioned, we set $\gamma = 0.8$. For methods that depend on K-Medoids, we apply an efficient approximation by~\citep{park2009simple} by initializing with K-Means++~\citep{10.5555/1283383.1283494} and selecting nodes closest to centers.

% = = = = = = = = = = %
%       Results       %
% = = = = = = = = = = %

\subsection{Experiment Results on GCN}

We provide the active learning results of GCN on each dataset in Figure~\ref{fig:results}. We further provide the exact average Macro-F1 scores and their standard errors in Table~\ref{tab:results}. 
For Ogbn-Arxiv, we reported both the Macro and Micro (Accuracy) F1 scores. 
Overall, we observe in Figure~\ref{fig:results} and Table~\ref{tab:results} that the proposed methods, GraphPart and GraphPartFar, outperform baseline methods in a wide range of budgets before the performance saturates. On smaller datasets where the number of classes is relatively small, the proposed methods outperform baseline methods by a large margin under a moderate budget size. On Corafull and Arxiv where the number of classes is much larger, the proposed methods demonstrate advantages over baseline methods (in particular, FeatProp) in a slightly later stage. This may be due to that some classes were not yet fully visited so the model has not been learning reliably when the budget size is small. Indeed, as shown in Figure~\ref{fig:main-corafull},~\ref{fig:main-arxiv-f1}, the difference between our methods and FeatProp is not statistically significant until the budget size is at least 160. 
As can be seen in Tables~\ref{tab:results-arxiv} on Ogbn-Arxiv benchmark, no statistically significant difference can be observed from the Micro-F1 evaluation among the active learning methods when the budget is larger than 320, but our methods significantly outperformed baselines on Macro-F1.

%  =  =  =   =  =  =   =  =  = % 
%          GCN Tables          %
%  =  =  =   =  =  =   =  =  = %

\begin{table}[H]
  \centering
  \vspace{-0.1cm}
  \caption{Summary of the performance of GCN on each benchmark. The numerical values represent the average Macro-F1 score of 10 independent trials and the error bar denotes the standard error of the mean (all in \%). For Ogbn-Arxiv, we also include the Micro-F1 score (accuracy) as standard practice. The \textbf{bold} marker denotes the best performance and the \underline{underlined} marker denotes the second-best performance. Asterisk (*) means the difference between our strategy and the best baseline strategy is statistically significant by a pairwise t-test at a significance level of 0.05.}
  \vspace{-0.1cm}
  \scalebox{0.725}{
    \begin{tabular}{llllllllll}
    \toprule
    \multicolumn{1}{l}{Baselines} & \multicolumn{3}{c}{Cora} & \multicolumn{3}{c}{Citeseer} & \multicolumn{3}{c}{Pubmed} \\
    \multicolumn{1}{l}{Budget} & \multicolumn{1}{c}{20} & \multicolumn{1}{c}{40} & \multicolumn{1}{c}{80} & \multicolumn{1}{c}{10} & \multicolumn{1}{c}{20} & \multicolumn{1}{c}{40} & \multicolumn{1}{c}{10} & \multicolumn{1}{c}{20} & \multicolumn{1}{c}{40} \\
    \cmidrule(lr){1-1} \cmidrule(lr){2-4} \cmidrule(lr){5-7} \cmidrule(lr){8-10}
    Random & 49.7 $\pm$ 9.7 & 60.7 $\pm$ 8.7 & 71.6 $\pm$ 5.0 & 28.4 $\pm$ 12.6 & 37.6 $\pm$ 6.7 & 48.9 $\pm$ 5.8 & 49.1 $\pm$ 11.4 & 55.7 $\pm$ 10.6 & 69.5 $\pm$ 6.2 \\
    Uncertainty & 37.6 $\pm$ 8.2 & 55.2 $\pm$ 10.0 & 70.1 $\pm$ 6.4 & 18.8 $\pm$ 7.1 & 24.2 $\pm$ 5.7 & 42.8 $\pm$ 12.5 & 46.0 $\pm$ 11.5 & 54.7 $\pm$ 10.4 & 64.8 $\pm$ 9.2 \\
    Density & 49.8 $\pm$ 8.6 & 58.7 $\pm$ 5.5 & 73.1 $\pm$ 4.1 & 22.6 $\pm$ 7.5 & 37.8 $\pm$ 9.0 & 47.5 $\pm$ 6.4 & 43.6 $\pm$ 9.8 & 52.1 $\pm$ 8.0 & 68.7 $\pm$ 5.3 \\
    CoreSet & 50.6 $\pm$ 6.2 & 64.1 $\pm$ 5.3 & 73.4 $\pm$ 3.3 & 27.2 $\pm$ 6.6 & 36.3 $\pm$ 7.8 & 49.6 $\pm$ 7.5 & 45.3 $\pm$ 8.5 & 55.7 $\pm$ 13.0 & 66.3 $\pm$ 8.6 \\
    Degree & 57.7 $\pm$ 0.6 & 61.1 $\pm$ 1.3 & 74.4 $\pm$ 0.5 & 19.1 $\pm$ 0.1 & 28.0 $\pm$ 0.3 & 38.5 $\pm$ 0.3 & 53.9 $\pm$ 1.9 & 51.2 $\pm$ 0.9 & 54.4 $\pm$ 0.7 \\
    Pagerank & 35.1 $\pm$ 0.9 & 60.2 $\pm$ 1.2 & 70.8 $\pm$ 0.6 & 11.7 $\pm$ 0.3 & 33.0 $\pm$ 0.5 & 45.6 $\pm$ 0.9 & 45.5 $\pm$ 0.6 & 51.0 $\pm$ 0.6 & 66.4 $\pm$ 0.1 \\
    AGE   & 50.3 $\pm$ 5.0 & 66.7 $\pm$ 4.1 & 74.2 $\pm$ 2.7 & 20.9 $\pm$ 6.3 & 35.5 $\pm$ 9.2 & 45.2 $\pm$ 7.7 & 47.2 $\pm$ 9.9 & 65.2 $\pm$ 8.3 & 70.3 $\pm$ 8.0 \\
    FeatProp & \underline{71.0} $\pm$ 5.7 & 76.1 $\pm$ 2.5 & 79.9 $\pm$ 0.9 & 27.9 $\pm$ 5.5 & 42.9 $\pm$ 4.5 & 53.7 $\pm$ 4.5 & 59.1 $\pm$ 5.5 & 65.4 $\pm$ 5.2 & \underline{75.1} $\pm$ 2.8 \\
    \cmidrule(lr){1-1} \cmidrule(lr){2-4} \cmidrule(lr){5-7} \cmidrule(lr){8-10}
    GraphPart & \textbf{76.1}* $\pm$ 2.7 & \underline{78.1} $\pm$ 1.5 & \underline{80.3} $\pm$ 1.6 & \textbf{45.0}* $\pm$ 0.7 & \underline{45.4} $\pm$ 4.1 & \textbf{59.0}* $\pm$ 2.0 & \underline{63.0} $\pm$ 0.7 & \textbf{73.2}* $\pm$ 1.0 & 74.9 $\pm$ 1.3 \\
    GraphPartFar & 68.6 $\pm$ 1.5 & \textbf{78.1} $\pm$ 2.1 & \textbf{82.0}* $\pm$ 0.7 & \underline{35.1}* $\pm$ 0.6 & \textbf{55.2}* $\pm$ 2.4 & \underline{57.5}* $\pm$ 2.9 & \textbf{75.7}* $\pm$ 0.3 & \underline{67.5} $\pm$ 0.5 & \textbf{76.2} $\pm$ 0.9 \\
    \midrule
    \multicolumn{1}{l}{Baselines} & \multicolumn{3}{c}{Co-CS} & \multicolumn{3}{c}{Co-Physics} & \multicolumn{3}{c}{Corafull} \\
    \multicolumn{1}{l}{Budget} & \multicolumn{1}{c}{20} & \multicolumn{1}{c}{40} & \multicolumn{1}{c}{80} & \multicolumn{1}{c}{10} & \multicolumn{1}{c}{20} & \multicolumn{1}{c}{40} & \multicolumn{1}{c}{160} & \multicolumn{1}{c}{320} & \multicolumn{1}{c}{640} \\
    \cmidrule(lr){1-1} \cmidrule(lr){2-4} \cmidrule(lr){5-7} \cmidrule(lr){8-10}
    Random & 43.8 $\pm$ 8.8 & 58 $\pm$ 7.7 & 76.1 $\pm$ 6.3 & 65.6 $\pm$ 14.3 & 73.1 $\pm$ 7.5 & 82.5 $\pm$ 7.6 & 23.8 $\pm$ 2.0 & 33.2 $\pm$ 1.8 & 43.2 $\pm$ 2.1 \\
    Uncertainty & 32.4 $\pm$ 8.0 & 51.6 $\pm$ 6.2 & 66.7 $\pm$ 6.4 & 48.1 $\pm$ 12.6 & 56.3 $\pm$ 9.3 & 75.8 $\pm$ 9.2 & 17.3 $\pm$ 1.7 & 28.1 $\pm$ 2.1 & 39.1 $\pm$ 1.2 \\
    Density & 34.2 $\pm$ 8.0 & 49.1 $\pm$ 4.6 & 60.1 $\pm$ 6.0 & 53.8 $\pm$ 8.0 & 67.9 $\pm$ 12.0 & 72.4 $\pm$ 8.8 & 21.3 $\pm$ 1.1 & 29.0 $\pm$ 0.9 & 38.3 $\pm$ 1.1 \\
    CoreSet & 32.6 $\pm$ 4.3 & 50.3 $\pm$ 4.8 & 63.4 $\pm$ 7.2 & 43.8 $\pm$ 15.3 & 56.2 $\pm$ 14.4 & 79.8 $\pm$ 8.3 & 22.8 $\pm$ 1.4 & 33.4 $\pm$ 1.1 & 43.7 $\pm$ 0.5 \\
    Degree & 31.1 $\pm$ 0.5 & 49.8 $\pm$ 0.5 & 53.8 $\pm$ 0.1 & 13.4 $\pm$ 0.0 & 13.4 $\pm$ 0.0 & 13.4 $\pm$ 0.0 & 20.3 $\pm$ 0.7 & 28.4 $\pm$ 0.6 & 37.6 $\pm$ 0.6  \\
    Pagerank & 52.9 $\pm$ 0.5 & 67.6 $\pm$ 1.5 & 78.5 $\pm$ 0.5 & 82.8 $\pm$ 0.4 & 78.2 $\pm$ 0.2 & 84.9 $\pm$ 0.2 & 22.5 $\pm$ 0.4 & 29.9 $\pm$ 0.7 & 39.7 $\pm$ 0.5  \\
    AGE   & 39.3 $\pm$ 6.6 & 67.8 $\pm$ 8.0 & 77.9 $\pm$ 5.0 & 62.1 $\pm$ 10.6 & 75.6 $\pm$ 7.9 & 85.9 $\pm$ 3.8 & 22.2 $\pm$ 1.4 & 32.6 $\pm$ 1.1 & 43.0 $\pm$ 0.9  \\
    FeatProp & \underline{60.6} $\pm$ 3.6 & 77.2 $\pm$ 3.2 & 84.5 $\pm$ 0.6 & 79.7 $\pm$ 1.9 & \textbf{88.6} $\pm$ 1.7 & \textbf{90.6}* $\pm$ 0.4 & \underline{29.6} $\pm$ 1.0 & 37.6 $\pm$ 0.8 & \underline{46.7} $\pm$ 0.8 \\
    \cmidrule(lr){1-1} \cmidrule(lr){2-4} \cmidrule(lr){5-7} \cmidrule(lr){8-10}
    GraphPart & \textbf{68.5}* $\pm$ 0.2 & \textbf{78.9} $\pm$ 2.7 & \textbf{86.2}* $\pm$ 1.2 & \textbf{84.8}* $\pm$ 0.2 & 86.1 $\pm$ 0.1 & 89.3 $\pm$ 0.2 & \textbf{31.0}* $\pm$ 1.3 & \textbf{41.2}* $\pm$ 1.4 & \textbf{48.6} $\pm$ 0.5  \\
    GraphPartFar & 56.8 $\pm$ 0.7 & \underline{77.4} $\pm$ 1.1 & \underline{84.6} $\pm$ 1.3 & \underline{81.2} $\pm$ 0.3 & \underline{87.6} $\pm$ 0.4 & \underline{90.1} $\pm$ 0.3 & 28.0 $\pm$ 1.2 & \underline{38.4} $\pm$ 0.6 & 44.3 $\pm$ 0.7  \\
    \bottomrule
    \end{tabular}%
  }
  \label{tab:results}
  \vspace{-0.1cm}
\end{table}
%  =  =  =   =  =  =   =  =  = % 
%          GCN Figures         %
%  =  =  =   =  =  =   =  =  = %

\begin{figure}[H]
    \centering
    \begin{subfigure}[t]{0.9\textwidth}
        \centering
        \includegraphics[width=1.0\linewidth]{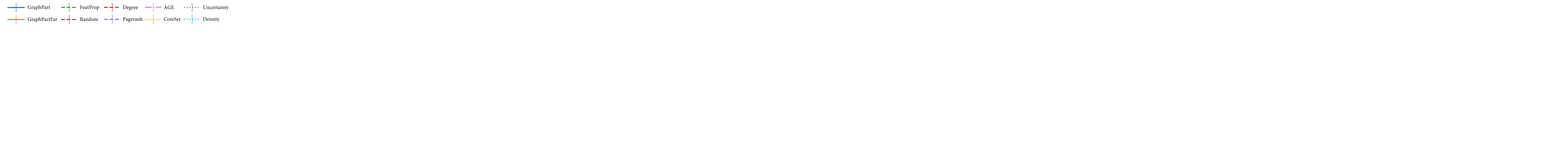}
    \end{subfigure}
    ~ \\
    \begin{subfigure}[t]{.315\textwidth}
        \centering
        \includegraphics[width=1.\linewidth]{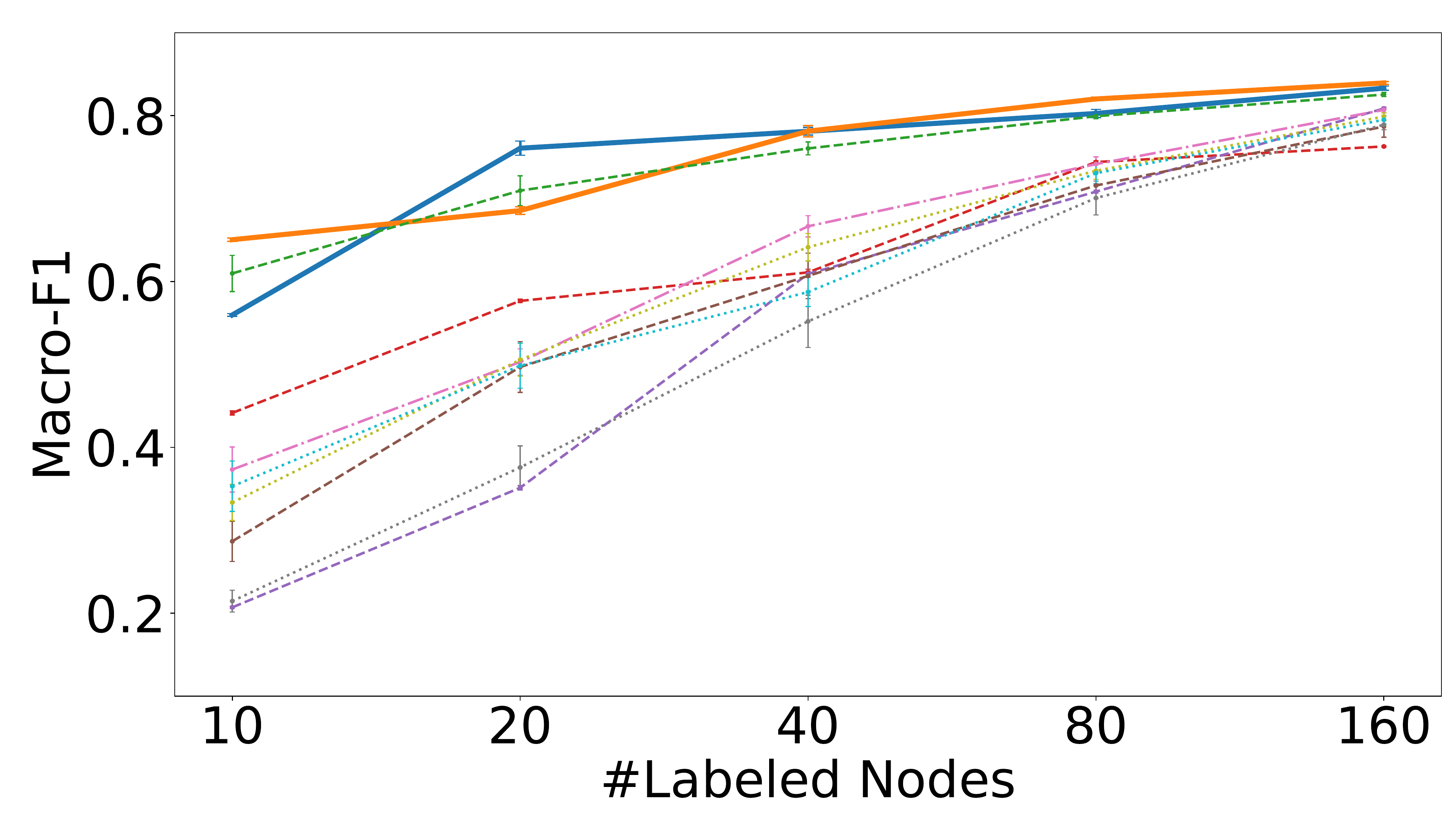} 
        \caption{Cora}
        \label{fig:main-cora}
    \end{subfigure}
    ~
    \begin{subfigure}[t]{.315\textwidth}
        \centering
        \includegraphics[width=1.\linewidth]{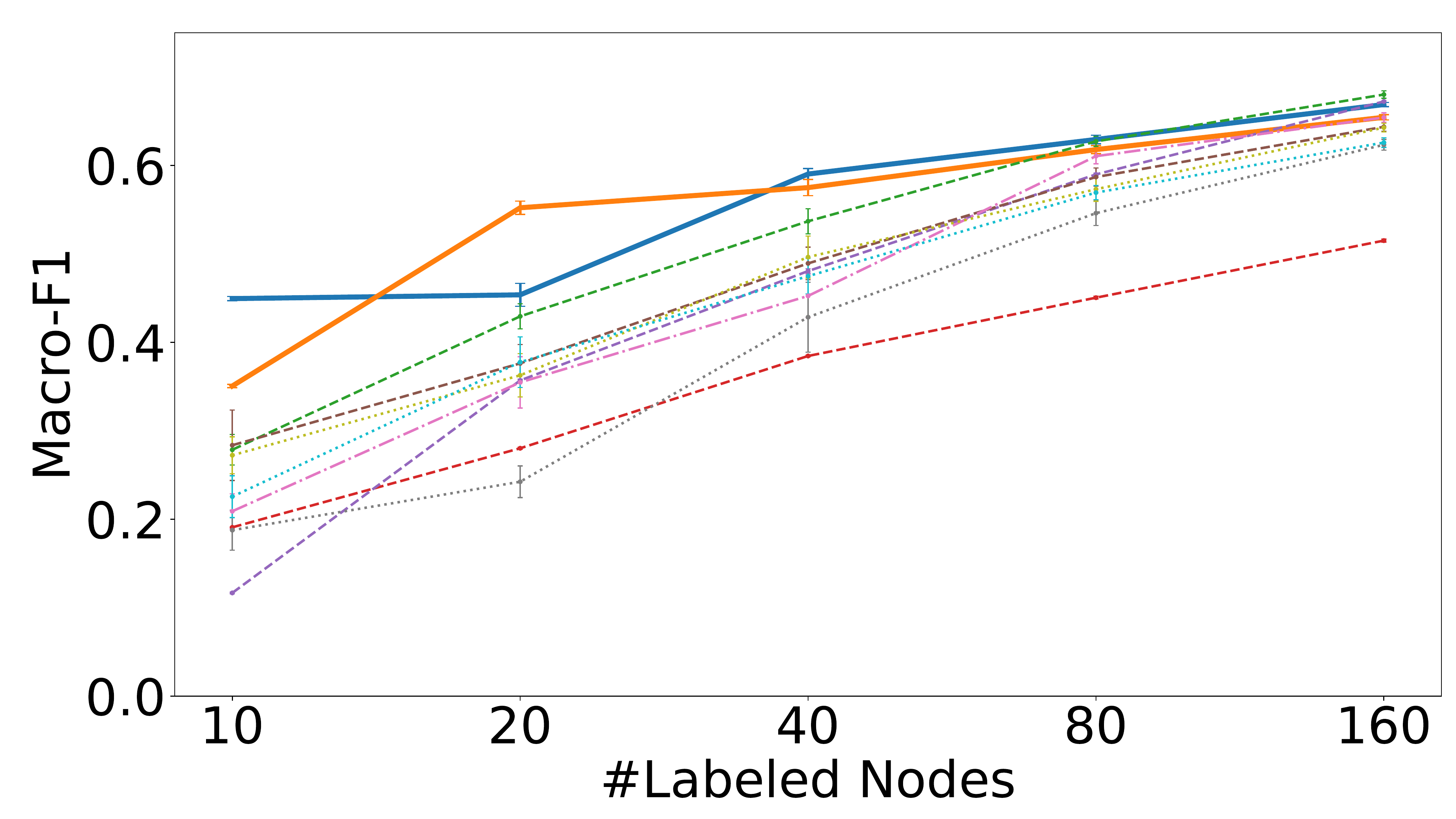} 
        \caption{Citeseer}
        \label{fig:main-citeseer}
    \end{subfigure}
    ~ 
    \begin{subfigure}[t]{.315\textwidth}
        \centering
        \includegraphics[width=1.\linewidth]{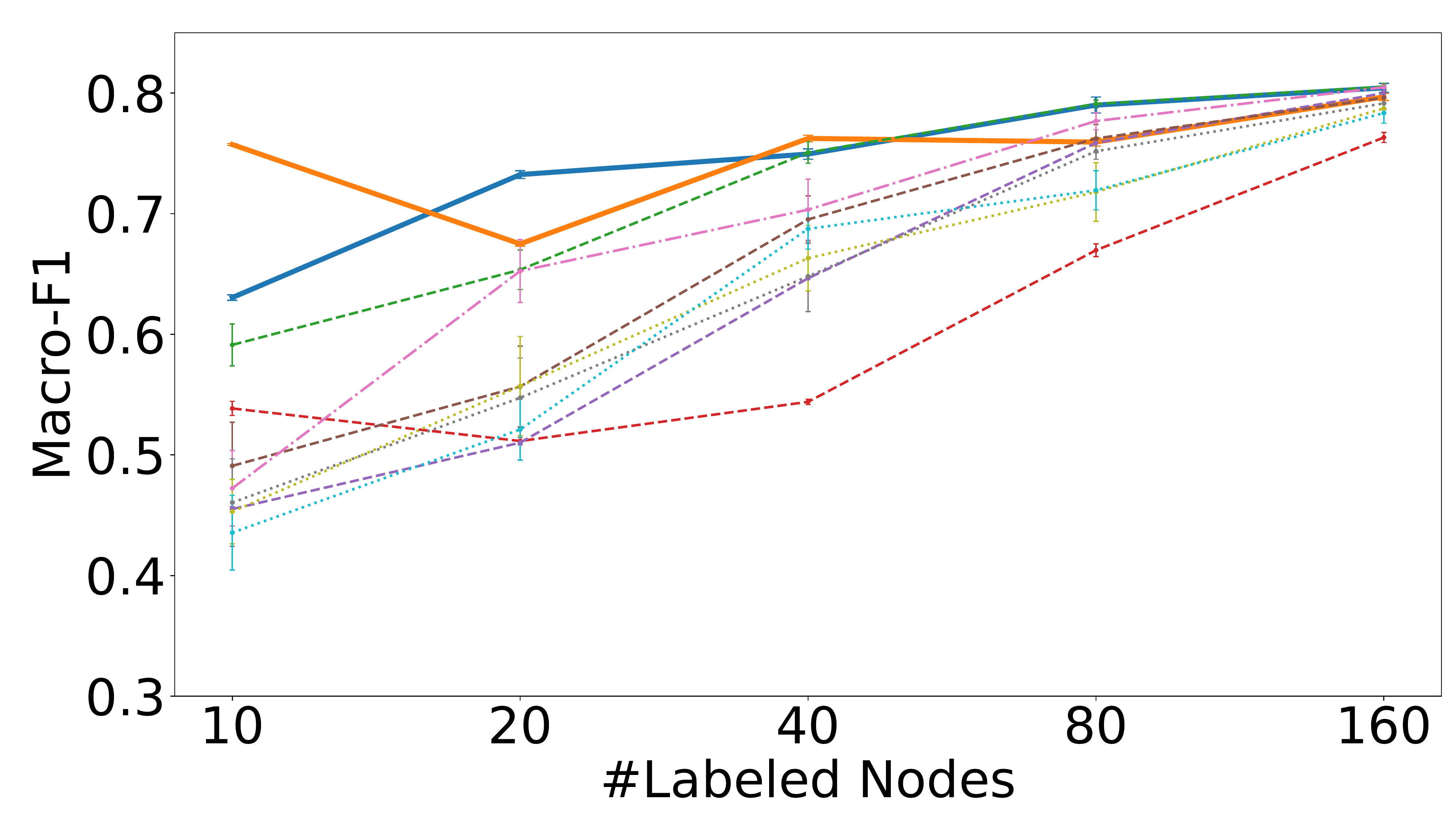} 
        \caption{Pubmed}
        \label{fig:main-pubmed}
    \end{subfigure}
    ~ \\
    \begin{subfigure}[t]{.315\textwidth}
        \centering
        \includegraphics[width=1.\linewidth]{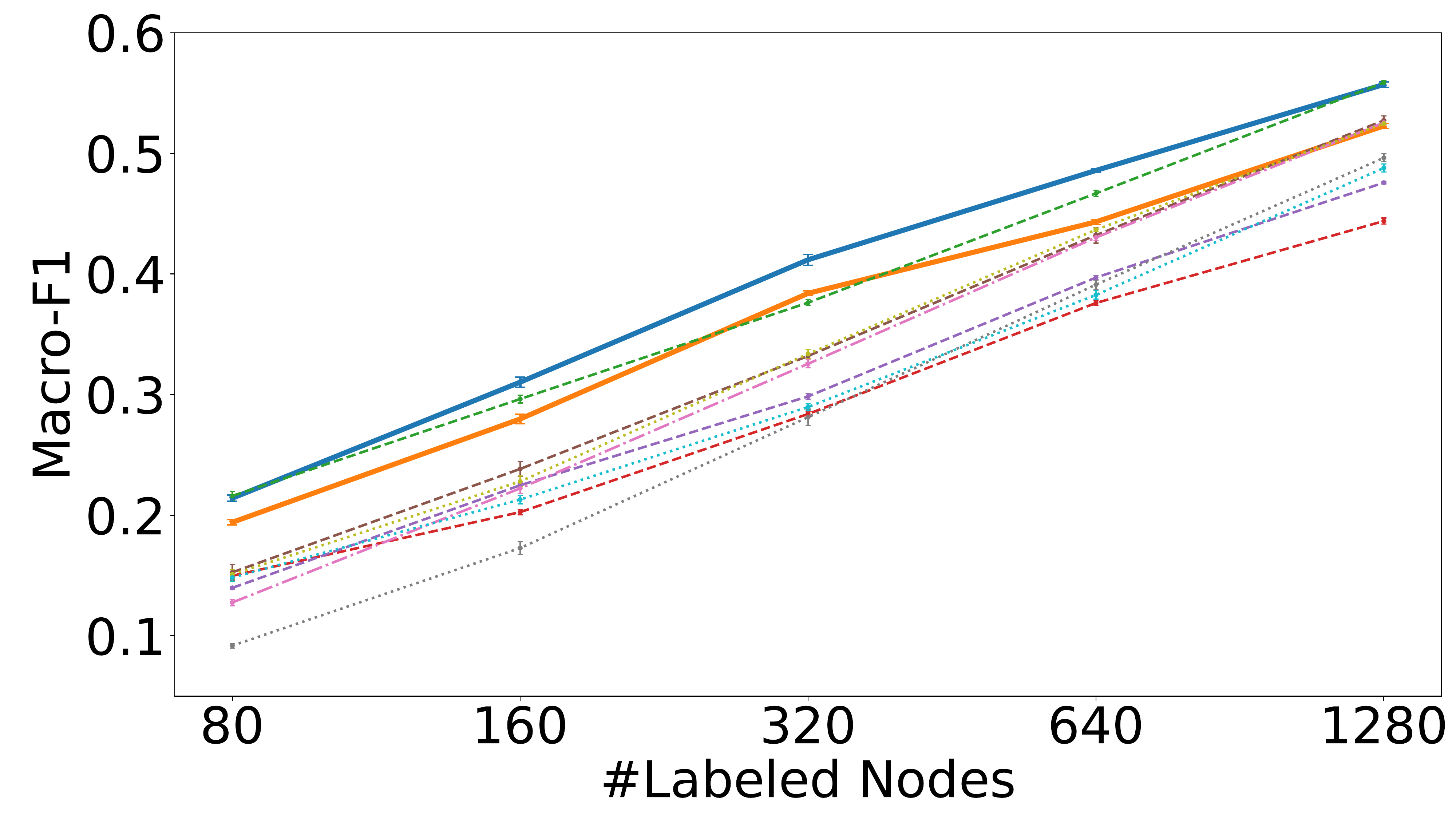} 
        \vspace{-0.5cm}
        \caption{Corafull}
        \label{fig:main-corafull}
    \end{subfigure}
    ~ 
    \begin{subfigure}[t]{.315\textwidth}
        \centering
        \includegraphics[width=1.\linewidth]{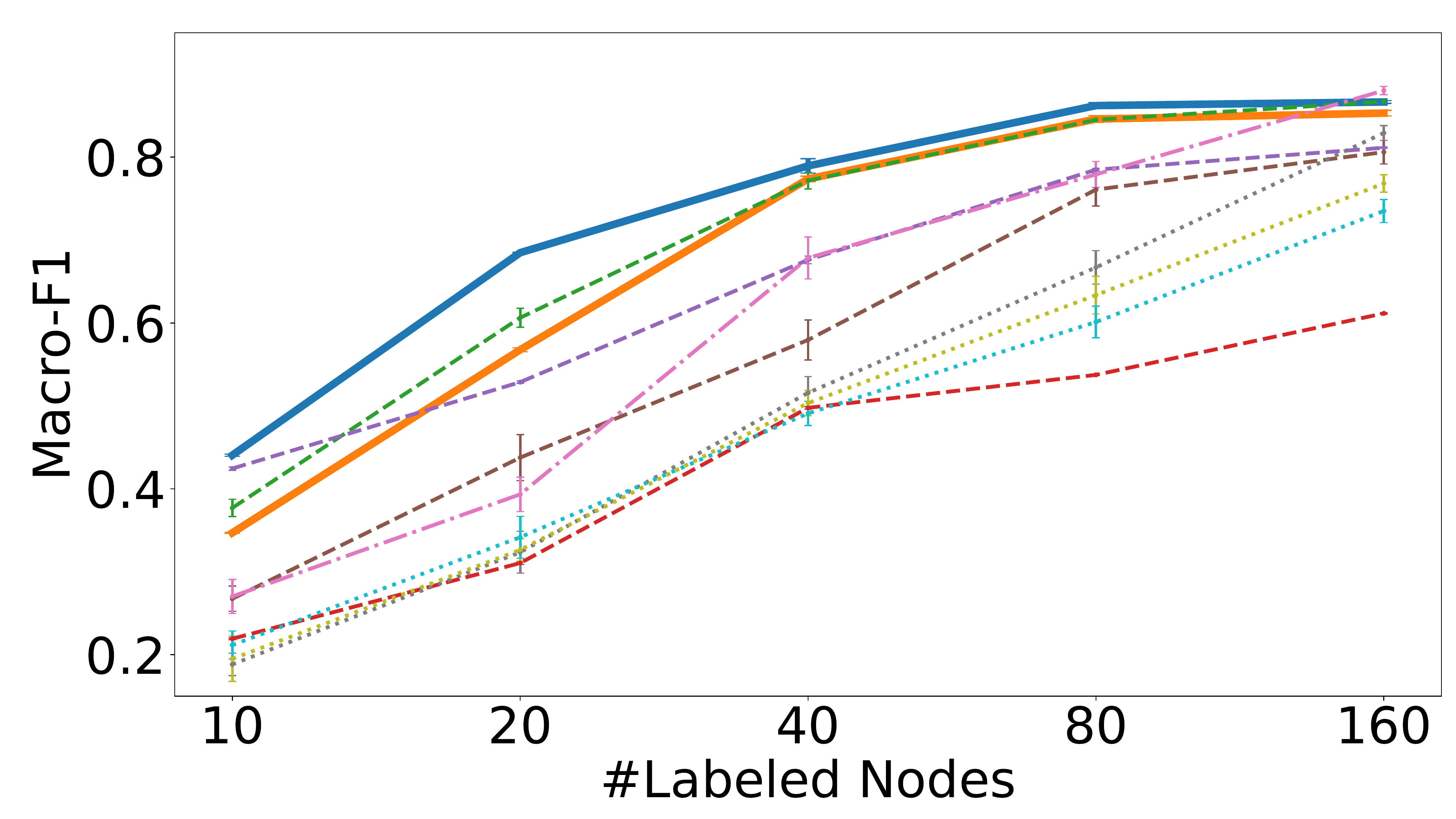} 
        \vspace{-0.5cm}
        \caption{Coauthor-CS}
        \label{fig:main-cs}
    \end{subfigure}
    ~ 
    \begin{subfigure}[t]{.315\textwidth}
        \centering
        \includegraphics[width=1.\linewidth]{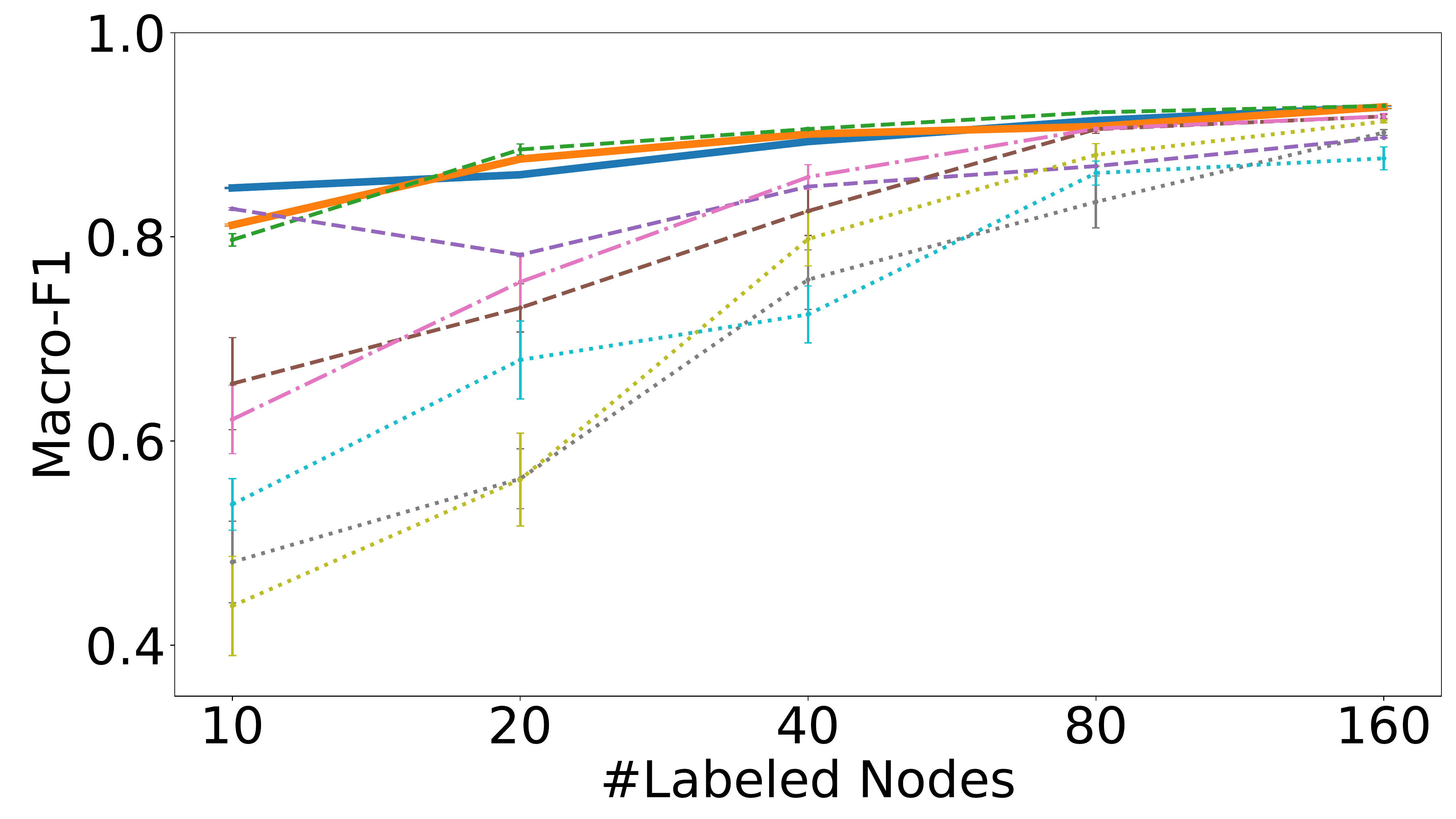} 
        \vspace{-0.5cm}
        \caption{Coauthor-Physics}
        \label{fig:main-physics}
    \end{subfigure}
    ~
    \caption{Performance of GCN by budget, with training set selected by each active learning baseline in one shot, averaged from 10 different runs. Our methods are \textbf{embolden}. In each sub-figure, the Macro-F1 score is plotted against the budget in a logarithm scale.}
    \label{fig:results}
\end{figure}

%  =  =  =   =  =  =   =  =  = % 
%          GCN Tables          %
%  =  =  =   =  =  =   =  =  = %

\begin{table}[H]
  \centering
  \vspace{-0.5cm}
  \caption{Summary of the performance of GCN on Ogbn-Arxiv (Table~\ref{tab:results} Continued). We also include the Micro-F1 score (accuracy) as standard practice.}
  \vspace{-0.1cm}
  \scalebox{0.8}{
    \begin{tabular}{lllllllll}
    \toprule
    \multicolumn{1}{l}{Baselines} & \multicolumn{4}{c}{Ogbn-Arxiv (Macro-F1)} & \multicolumn{4}{c}{Ogbn-Arxiv (Micro-F1)} \\
    \multicolumn{1}{l}{Budget} & \multicolumn{1}{c}{160} & \multicolumn{1}{c}{320} & \multicolumn{1}{c}{640} & \multicolumn{1}{c}{1280} & \multicolumn{1}{c}{160} & \multicolumn{1}{c}{320} & \multicolumn{1}{c}{640} & \multicolumn{1}{c}{1280} \\
    \cmidrule(lr){1-1} \cmidrule(lr){2-5} \cmidrule(lr){6-9}
    Random & 21.9 $\pm$ 1.4 & 27.6 $\pm$ 1.5 & 33.0 $\pm$ 1.4 & 37.2 $\pm$ 1.1 & 52.3 $\pm$ 0.8 & 56.4 $\pm$ 0.8 & 60.0 $\pm$ 0.7 & 63.5 $\pm$ 0.4 \\
    Uncertainty & 17.5 $\pm$ 1.2 & 24.3 $\pm$ 1.8 & 30.5 $\pm$ 1.0 & 36.7 $\pm$ 0.9 & 44.5 $\pm$ 3.0 & 51.7 $\pm$ 1.0 & 58.0 $\pm$ 1.2 & 62.5 $\pm$ 0.4 \\
    Density & 16.3 $\pm$ 1.6 & 20.7 $\pm$ 1.4 & 24.8 $\pm$ 0.7 & 29.5 $\pm$ 1.1 & 46.3 $\pm$ 2.3 & 51.8 $\pm$ 1.4 & 56.0 $\pm$ 1.1 & 60.2 $\pm$ 0.5 \\
    CoreSet & 19.5 $\pm$ 1.8 & 25.5 $\pm$ 1.2 & 30.5 $\pm$ 1.2 & 35.0 $\pm$ 0.9 & 47.6 $\pm$ 2.9 & 53.2 $\pm$ 1.4 & 58.1 $\pm$ 0.8 & 61.3 $\pm$ 0.3 \\
    Degree & 8.0 $\pm$ 0.2 & 12.0 $\pm$ 0.4 & 12.0 $\pm$ 0.3 & 15.4 $\pm$ 0.3 & 29.2 $\pm$ 0.9 & 42.1 $\pm$ 0.5 & 42.9 $\pm$ 0.9 & 48.5 $\pm$ 0.5 \\
    Pagerank & 21.8 $\pm$ 0.6 & 28.9 $\pm$ 0.1 & 34.0 $\pm$ 0.2 & 38.2 $\pm$ 0.1 & 50.8 $\pm$ 1.6 & \underline{56.8} $\pm$ 0.1 & \textbf{60.8} $\pm$ 0.2 & \textbf{64.2} $\pm$ 0.0 \\
    AGE   & 20.4 $\pm$ 0.9 & 25.9 $\pm$ 1.1 & 31.7 $\pm$ 0.8 & 36.4 $\pm$ 0.8 & 48.3 $\pm$ 2.3 & 54.9 $\pm$ 1.6 & 60.0 $\pm$ 0.7 & 63.5 $\pm$ 0.3 \\
    FeatProp & 24.0 $\pm$ 0.7 & 28.5 $\pm$ 0.6 & 33.6 $\pm$ 0.4 & 39.1 $\pm$ 0.8 & 51.9 $\pm$ 1.1 & 56.5 $\pm$ 0.6 & 60.4 $\pm$ 0.4 & 63.5 $\pm$ 0.3 \\
    \cmidrule(lr){1-1} \cmidrule(lr){2-5} \cmidrule(lr){6-9}
    GraphPart & 24.2 $\pm$ 0.7 & \underline{29.5}* $\pm$ 0.8 & \textbf{36.4}* $\pm$ 0.5 & \textbf{41.0}* $\pm$ 0.5 & \underline{52.3} $\pm$ 0.9 & \textbf{56.8} $\pm$ 0.8 & \underline{60.7} $\pm$ 0.6 & \underline{63.6} $\pm$ 0.5 \\
    GraphPartFar & \textbf{26.0}* $\pm$ 0.7 & \textbf{30.2}* $\pm$ 0.6 & \underline{36.1}* $\pm$ 0.7 & \underline{40.8}* $\pm$ 0.7 & \textbf{53.8}* $\pm$ 1.0 & 56.3 $\pm$ 0.5 & 59.4 $\pm$ 0.3 & 63.5 $\pm$ 0.4 \\
    \bottomrule
    \end{tabular}%
  }
  \label{tab:results-arxiv}
\end{table}
\vspace*{-0.8cm}
\begin{figure}[H]
    \centering
    \begin{subfigure}[t]{.4\textwidth}
        \centering
        \includegraphics[width=1.\linewidth]{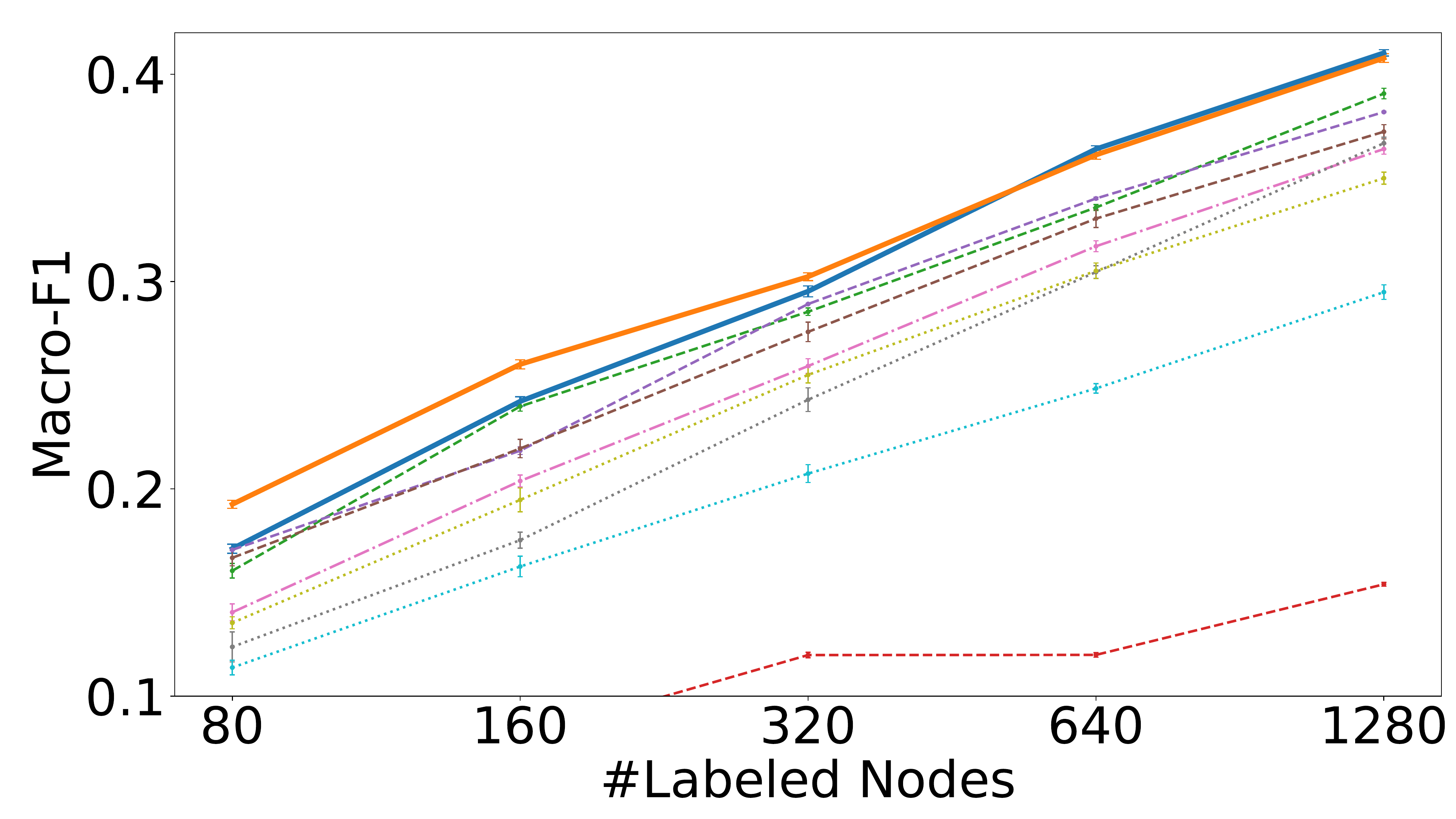} 
        \vspace{-0.5cm}
        \caption{Ogbn-Arxiv by Macro-F1}
        \label{fig:main-arxiv-f1}
    \end{subfigure}
    ~
    \begin{subfigure}[t]{.4\textwidth}
        \centering
        \includegraphics[width=1.\linewidth]{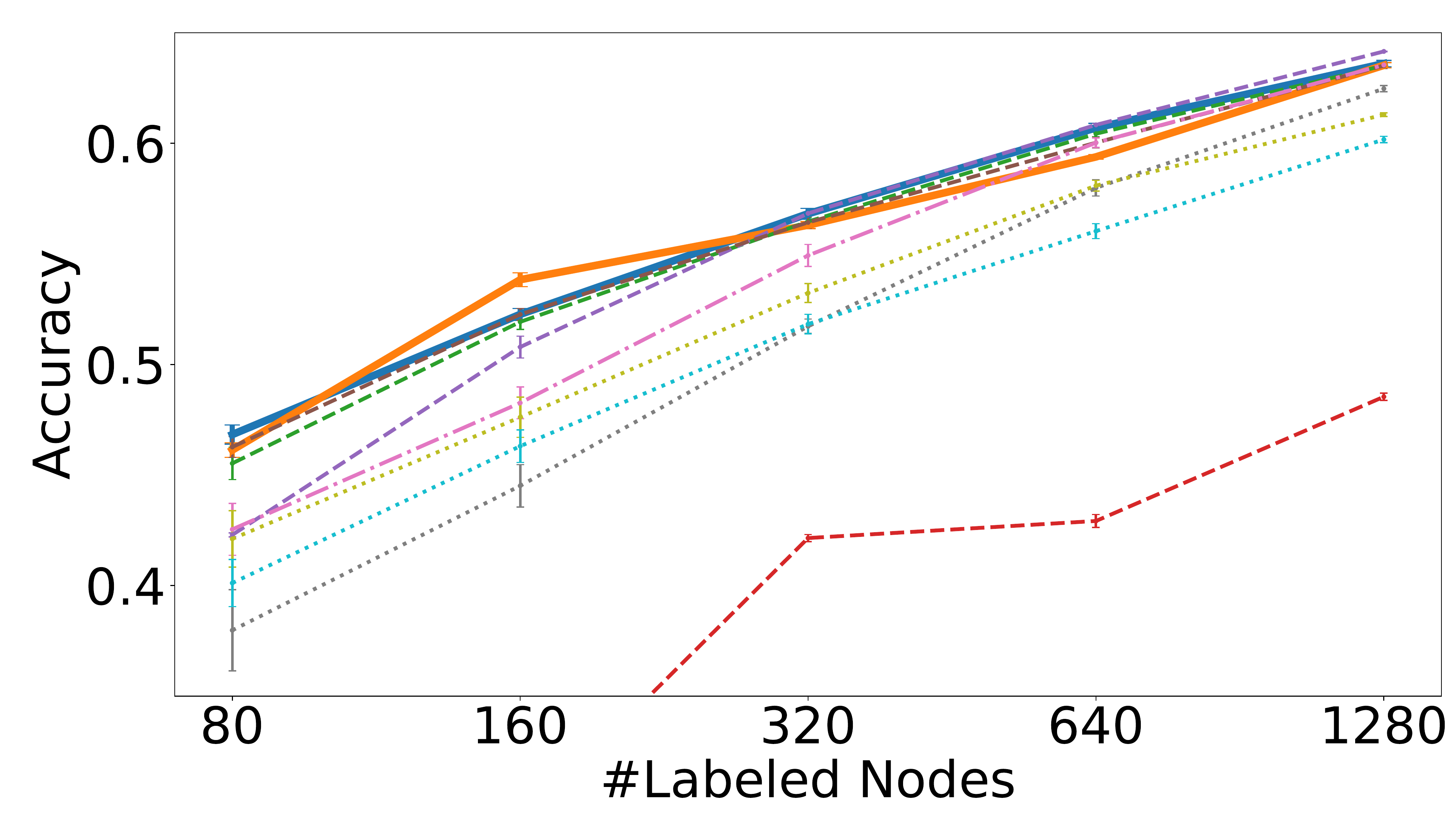} 
        \vspace{-0.5cm}
        \caption{Ogbn-Arxiv by Micro-F1}
        \label{fig:main-arxiv-acc}
    \end{subfigure}
    \caption{Performance of GCN on Ogbn-Arxiv (Figure~\ref{fig:results} Continued).}
    \label{fig:results-arxiv}
\end{figure}

\subsection{Generalizability to Other GNNs}

We further the experiment results on other GNN architectures (GraphSAGE and GAT) in Tables~\ref{tab:extra-sage},~\ref{tab:extra-gat} and Figure~\ref{fig:extra-sage},~\ref{fig:extra-gat} (Appendix). 
Overall, the superior performance of the proposed methods, especially against the state-of-the-art baseline FeatProp, demonstrates that selecting training nodes within proper partitions of the graph significantly helps the active learning performance.

%  =  =  =   =  =  =   =  =  = % 
%          SAGE Tables          %
%  =  =  =   =  =  =   =  =  = %

\begin{table}[!htp]
  \centering
  \caption{Summary of the performance of GraphSAGE. All the markers follow Table~\ref{tab:results}}
  \vspace{-0.1cm}
  \scalebox{0.725}{
    \begin{tabular}{llllllllll}
    \toprule
    \multicolumn{1}{l}{Baselines} & \multicolumn{3}{c}{Cora} & \multicolumn{3}{c}{Citeseer} & \multicolumn{3}{c}{Pubmed} \\
    \multicolumn{1}{l}{Budget} & \multicolumn{1}{c}{20} & \multicolumn{1}{c}{40} & \multicolumn{1}{c}{80} & \multicolumn{1}{c}{10} & \multicolumn{1}{c}{20} & \multicolumn{1}{c}{40} & \multicolumn{1}{c}{10} & \multicolumn{1}{c}{20} & \multicolumn{1}{c}{40} \\
    \cmidrule(lr){1-1} \cmidrule(lr){2-4} \cmidrule(lr){5-7} \cmidrule(lr){8-10}
    Random & 36.0 $\pm$ 9.6 & 49.1 $\pm$ 8.5 & 64.2 $\pm$ 5.6 & 24.5 $\pm$ 11.9 & 32.4 $\pm$ 6.6 & 46.1 $\pm$ 5.6 & 40.6 $\pm$ 13.0 & 52.3 $\pm$ 12.2 & 66.3 $\pm$ 7.9 \\
    Uncertainty & 23.5 $\pm$ 5.0 & 38.3 $\pm$ 10.6 & 58.6 $\pm$ 7.8 & 17.6 $\pm$ 6.3 & 25.1 $\pm$ 6.1 & 35.7 $\pm$ 4.4 & 35.2 $\pm$ 6.6 & 50.5 $\pm$ 10.1 & 64.1 $\pm$ 10.5 \\
    Density & 38.0 $\pm$ 8.3 & 47.3 $\pm$ 6.2 & 62.1 $\pm$ 5.4 & 22.4 $\pm$ 9.1 & 35.1 $\pm$ 4.9 & 49.5 $\pm$ 4.7 & 32.7 $\pm$ 8.9 & 51.2 $\pm$ 10.1 & 62.1 $\pm$ 9.9 \\
    CoreSet & 28.7 $\pm$ 5.4 & 47.8 $\pm$ 9.4 & 64.1 $\pm$ 9.2 & 16.6 $\pm$ 5.2 & 28.2 $\pm$ 7.4 & 39.0 $\pm$ 7.4 & 32.9 $\pm$ 7.8 & 53.1 $\pm$ 10.2 & 63.5 $\pm$ 7.4 \\
    Degree & 43.0 $\pm$ 1.5 & 48.2 $\pm$ 1.4 & 65.4 $\pm$ 2.1 & 16.6 $\pm$ 0.9 & 23.1 $\pm$ 1.1 & 28.6 $\pm$ 1.2 & 39.7 $\pm$ 0.5 & 41.9 $\pm$ 0.1 & 43.0 $\pm$ 0.1 \\
    Pagerank & 27.8 $\pm$ 1.6 & 47.9 $\pm$ 1.2 & 68.5 $\pm$ 0.9 & 13.0 $\pm$ 1.0 & 29.8 $\pm$ 1.3 & 38.3 $\pm$ 2.2 & 29.7 $\pm$ 0.3 & 41.7 $\pm$ 0.6 & 62.9 $\pm$ 0.3 \\
    AGE   & 38.2 $\pm$ 8.0 & 56.9 $\pm$ 7.8 & 69.5 $\pm$ 2.7 & 19.8 $\pm$ 5.0 & 29.7 $\pm$ 4.9 & 42.6 $\pm$ 6.0 & 39.9 $\pm$ 15.4 & 59.5 $\pm$ 9.8 & 70.2 $\pm$ 4.7 \\
    FeatProp & 52.5 $\pm$ 4.6 & 61.4 $\pm$ 5.5 & 72.8 $\pm$ 2.6 & 23.4 $\pm$ 4.3 & \underline{39.9} $\pm$ 6.2 & \underline{53.3} $\pm$ 3.8 & 48.0 $\pm$ 5.9 & 59.1 $\pm$ 6.0 & 73.6 $\pm$ 1.7 \\
    \cmidrule(lr){1-1} \cmidrule(lr){2-4} \cmidrule(lr){5-7} \cmidrule(lr){8-10}
    GraphPart & \textbf{61.4}* $\pm$ 4.7 & \textbf{70.0}* $\pm$ 2.4 & \underline{76.2}* $\pm$ 2.7 & \textbf{34.1}* $\pm$ 2.6 & 36.1 $\pm$ 6.4 & \textbf{54.0} $\pm$ 4.6 & \textbf{52.0} $\pm$ 0.8 & \textbf{71.5}* $\pm$ 0.5 & \textbf{74.6} $\pm$ 1.1 \\
    GraphPartFar & \underline{57.9}* $\pm$ 2.8 & \underline{69.7}* $\pm$ 4.2 & \textbf{78.6}* $\pm$ 1.5 & \underline{30.7}* $\pm$ 2.3 & \textbf{46.9} $\pm$ 5.0 & 53.1 $\pm$ 4.0 & \underline{49.7} $\pm$ 3.1 & \underline{70.7}* $\pm$ 1.6 & \underline{74.2} $\pm$ 0.4 \\
    \midrule
    \multicolumn{1}{l}{Baselines} & \multicolumn{3}{c}{Co-CS} & \multicolumn{3}{c}{Co-Physics} & \multicolumn{3}{c}{Corafull} \\
    \multicolumn{1}{l}{Budget} & \multicolumn{1}{c}{20} & \multicolumn{1}{c}{40} & \multicolumn{1}{c}{80} & \multicolumn{1}{c}{10} & \multicolumn{1}{c}{20} & \multicolumn{1}{c}{40} & \multicolumn{1}{c}{160} & \multicolumn{1}{c}{320} & \multicolumn{1}{c}{640} \\
    \cmidrule(lr){1-1} \cmidrule(lr){2-4} \cmidrule(lr){5-7} \cmidrule(lr){8-10}
    Random & 38.3 $\pm$ 8.7 & 52.4 $\pm$ 5.8 & 69.7 $\pm$ 5.2 & 58.3 $\pm$ 13.8 & 66.9 $\pm$ 10.1 & 78.3 $\pm$ 7.1 & 15.4 $\pm$ 1.1 & 23.4 $\pm$ 1.4 & 33.5 $\pm$ 1.3 \\
    Uncertainty & 36.5 $\pm$ 6.3 & 56.0 $\pm$ 9.4 & 72.5 $\pm$ 3.5 & 44.9 $\pm$ 10.6 & 50.6 $\pm$ 8.4 & 70.5 $\pm$ 8.5 & 15.3 $\pm$ 0.8 & 27.0 $\pm$ 1.9 & 39.7 $\pm$ 1.1 \\
    Density & 31.5 $\pm$ 8.5 & 39.4 $\pm$ 6.2 & 54.3 $\pm$ 3.3 & 55.8 $\pm$ 7.9 & 56.8 $\pm$ 11.5 & 77.8 $\pm$ 8.9 & 13.0 $\pm$ 1.2 & 19.7 $\pm$ 0.4 & 30.6 $\pm$ 1.0 \\
    CoreSet & 25.0 $\pm$ 6.5 & 40.8 $\pm$ 4.3 & 55.0 $\pm$ 5.3 & 39.0 $\pm$ 14.1 & 50.7 $\pm$ 11.3 & 75.6 $\pm$ 9.5 & 12.2 $\pm$ 1.2 & 19.7 $\pm$ 0.6 & 31.2 $\pm$ 0.7  \\
    Degree & 23.1 $\pm$ 5.2 & 33.1 $\pm$ 5.1 & 44.9 $\pm$ 3.0 & 13.4 $\pm$ 0.0 & 13.4 $\pm$ 0.0 & 13.4 $\pm$ 0.0 & 10.6 $\pm$ 0.4 & 16.7 $\pm$ 0.7 & 26.8 $\pm$ 0.7  \\
    Pagerank & 50.6 $\pm$ 1.3 & 59.4 $\pm$ 1.8 & 75.5 $\pm$ 0.7 & \textbf{81.0}* $\pm$ 0.8 & 75.6 $\pm$ 0.7 & 82.6 $\pm$ 0.4 & 12.4 $\pm$ 0.3 & 19.4 $\pm$ 0.8 & 30.3 $\pm$ 0.4  \\
    AGE   & 36.5 $\pm$ 6.3 & 56.0 $\pm$ 9.4 & 72.5 $\pm$ 3.5 & 63.7 $\pm$ 7.8 & 71.0 $\pm$ 8.8 & 82.4 $\pm$ 3.9 & 13.5 $\pm$ 0.8 & 21.4 $\pm$ 0.8 & 33.1 $\pm$ 1.0  \\
    FeatProp & \underline{55.9} $\pm$ 3.2 & 68.6 $\pm$ 3.2 & 74.1 $\pm$ 1.7 & 77.0 $\pm$ 2.4 & 85.2 $\pm$ 2.2 & \textbf{90.0} $\pm$ 0.7 & \underline{18.7} $\pm$ 0.8 & 25.8 $\pm$ 0.7 & \underline{35.0} $\pm$ 1.6  \\
    \cmidrule(lr){1-1} \cmidrule(lr){2-4} \cmidrule(lr){5-7} \cmidrule(lr){8-10}
    GraphPart & \textbf{62.9}* $\pm$ 1.4 & \underline{71.8}* $\pm$ 2.2 & \underline{80.3}* $\pm$ 1.3 & 75.8 $\pm$ 2.7 & \underline{85.8} $\pm$ 0.3 & \underline{88.9} $\pm$ 0.3 & \textbf{19.7}* $\pm$ 0.9 & \textbf{28.3} $\pm$ 0.7 & \textbf{36.1}* $\pm$ 1.0  \\
    GraphPartFar & 51.9 $\pm$ 1.9 & \textbf{73.3}* $\pm$ 2.2 & \textbf{81.5}* $\pm$ 2.4 & \underline{78.5} $\pm$ 1.1 & \textbf{87.6}* $\pm$ 0.9 & 88.8 $\pm$ 0.5 & 17.5 $\pm$ 1.0 & \underline{26.2} $\pm$ 1.4 & 34.1 $\pm$ 0.9  \\
    \bottomrule
    \end{tabular}%
  }
  \label{tab:extra-sage}
\end{table}
%  =  =  =   =  =  =   =  =  = % 
%          SAGE Figures        %
%  =  =  =   =  =  =   =  =  = %

\begin{figure}[!htp]
    \centering
    \begin{subfigure}[t]{0.9\textwidth}
        \centering
        \includegraphics[width=1.0\linewidth]{figs/baselines.pdf}
    \end{subfigure}
    ~ \\
    \begin{subfigure}[t]{.315\textwidth}
        \centering
        \includegraphics[width=1.\linewidth]{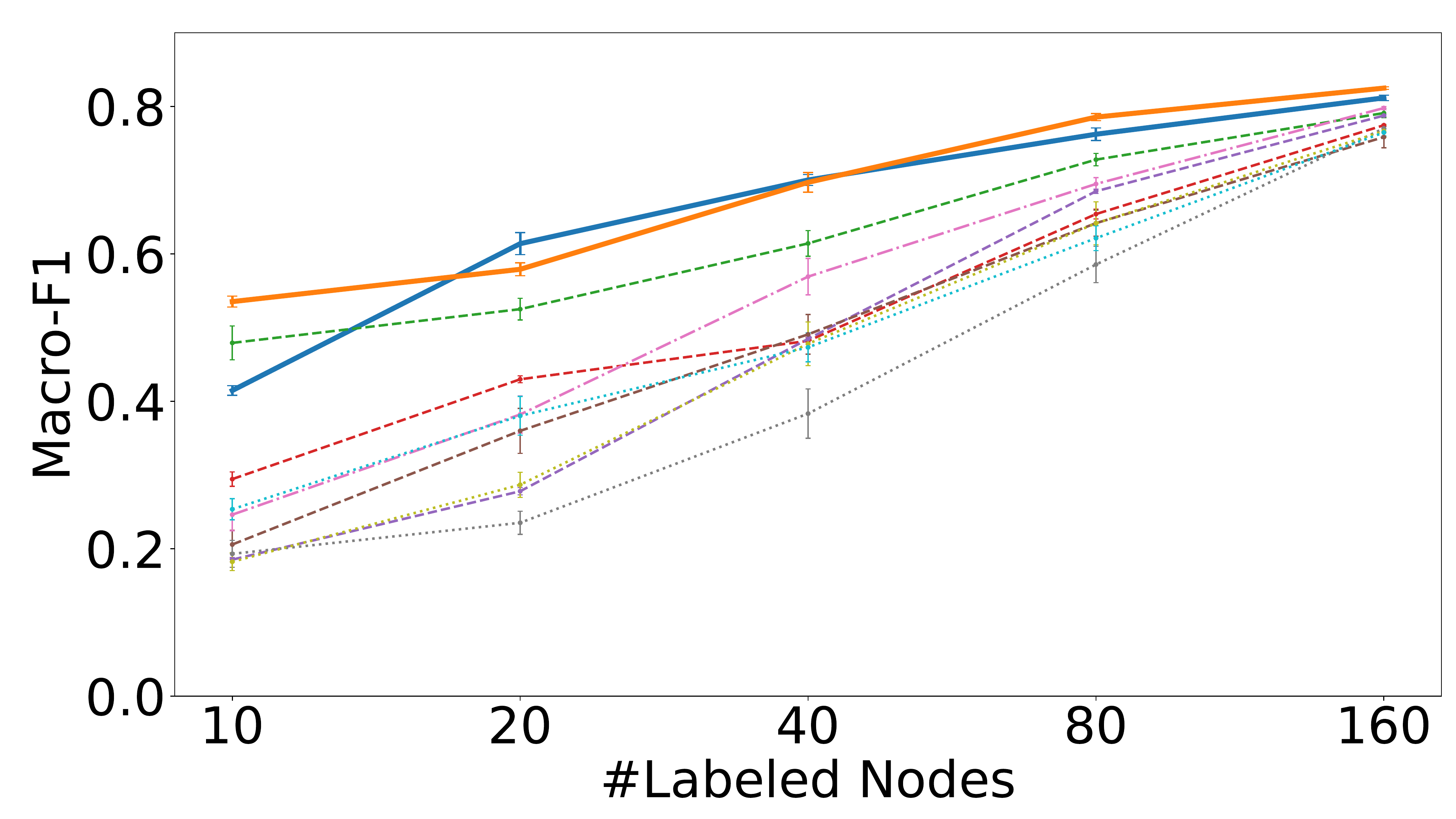} 
        \caption{Cora}
        \label{fig:sage-cora}
    \end{subfigure}
    ~
    \begin{subfigure}[t]{.315\textwidth}
        \centering
        \includegraphics[width=1.\linewidth]{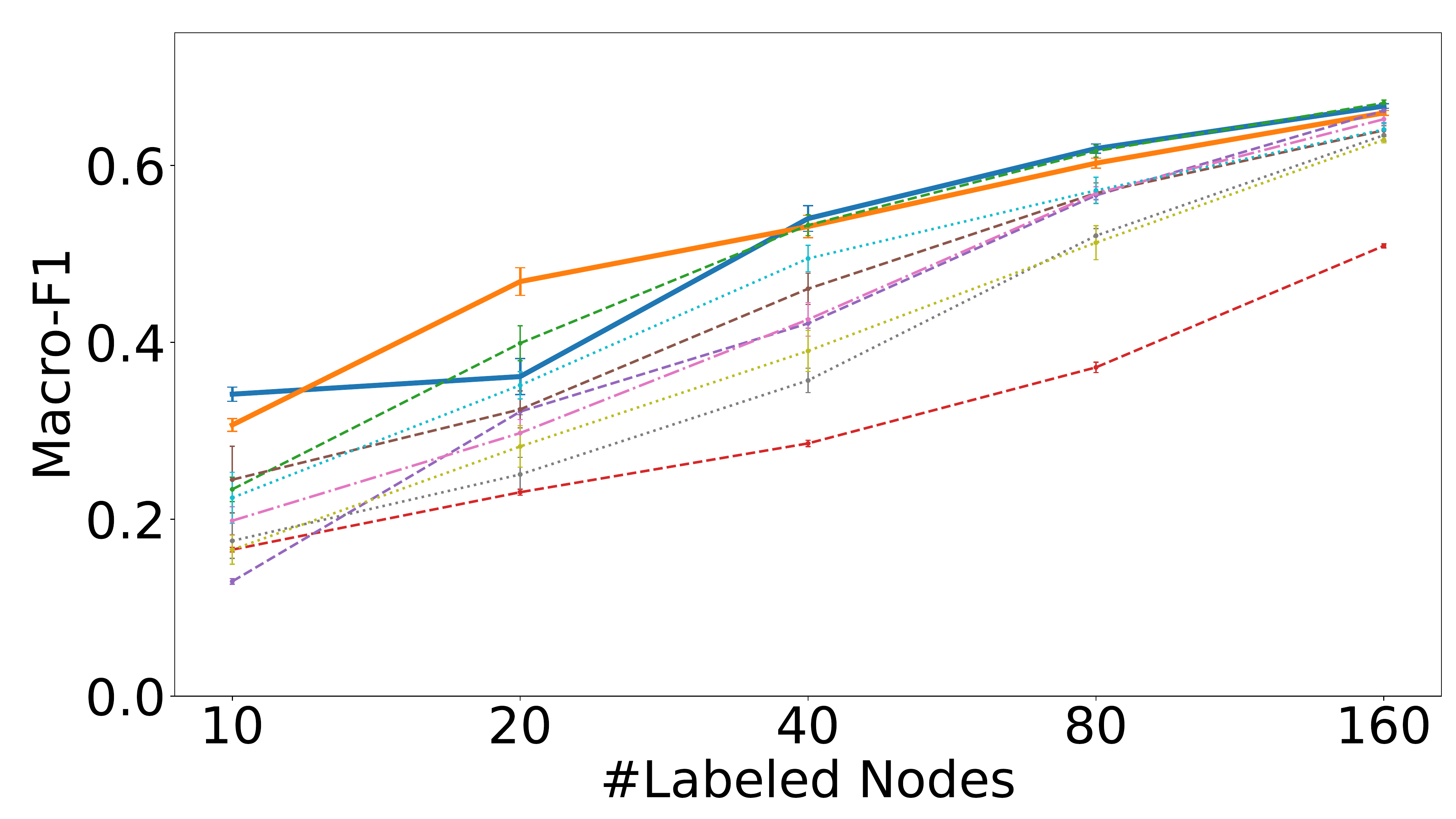} 
        \caption{Citeseer}
        \label{fig:sage-citeseer}
    \end{subfigure}
    ~ 
    \begin{subfigure}[t]{.315\textwidth}
        \centering
        \includegraphics[width=1.\linewidth]{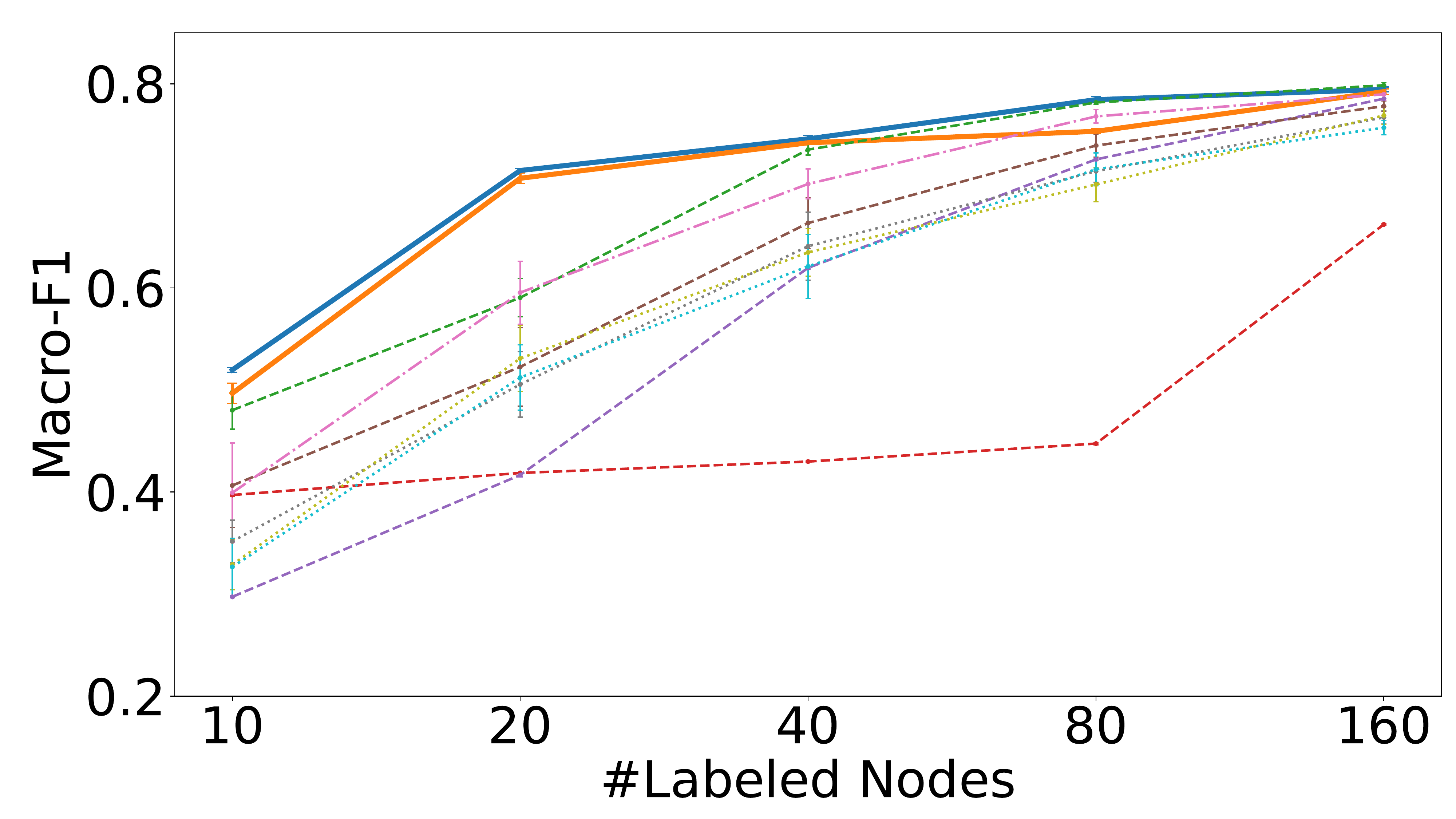} 
        \caption{Pubmed}
        \label{fig:sage-pubmed}
    \end{subfigure}
    ~ \\
    \begin{subfigure}[t]{.315\textwidth}
        \centering
        \includegraphics[width=1.\linewidth]{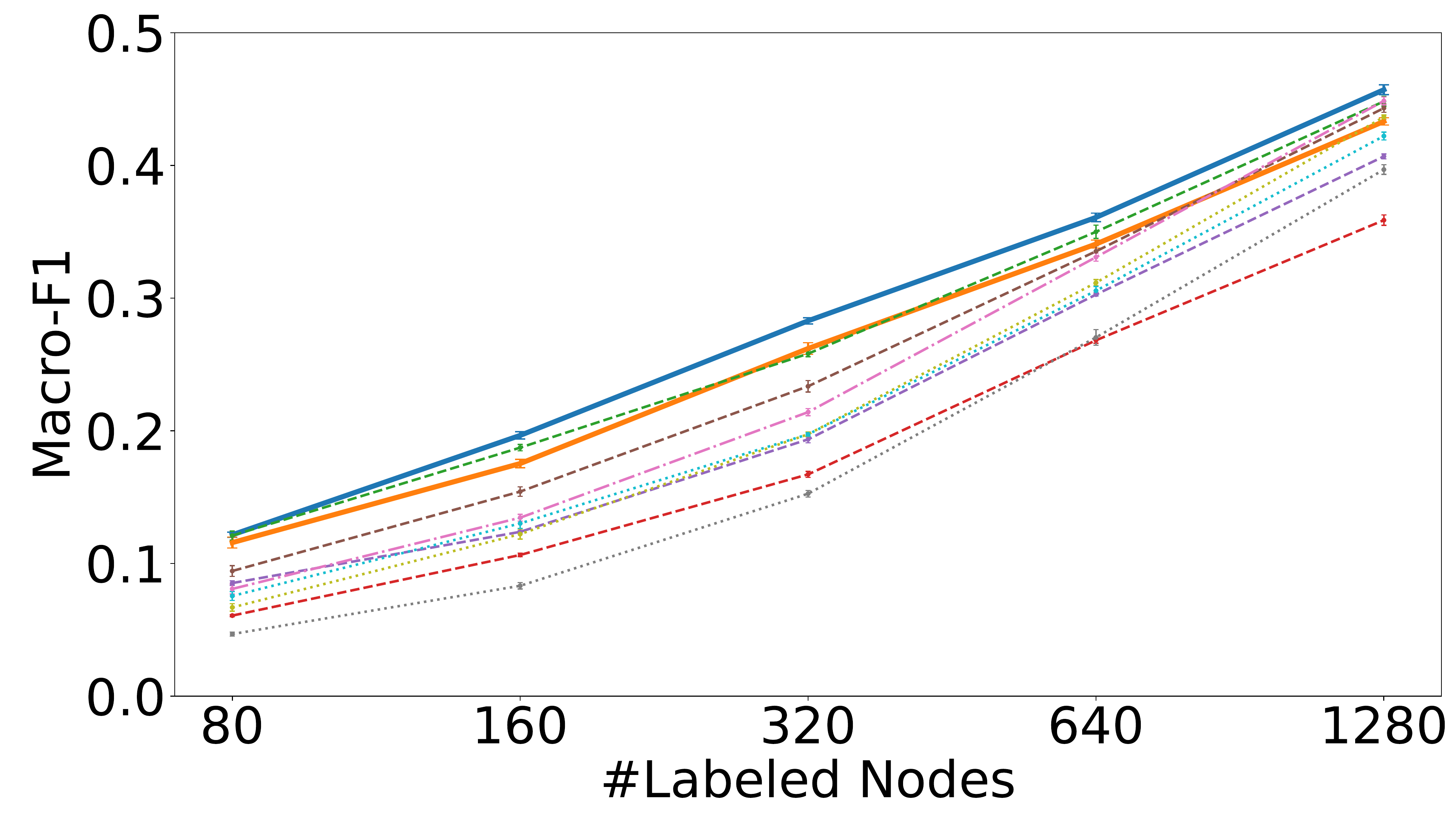} 
        \caption{Corafull}
        \label{fig:sage-corafull}
    \end{subfigure}
    ~ 
    \begin{subfigure}[t]{.315\textwidth}
        \centering
        \includegraphics[width=1.\linewidth]{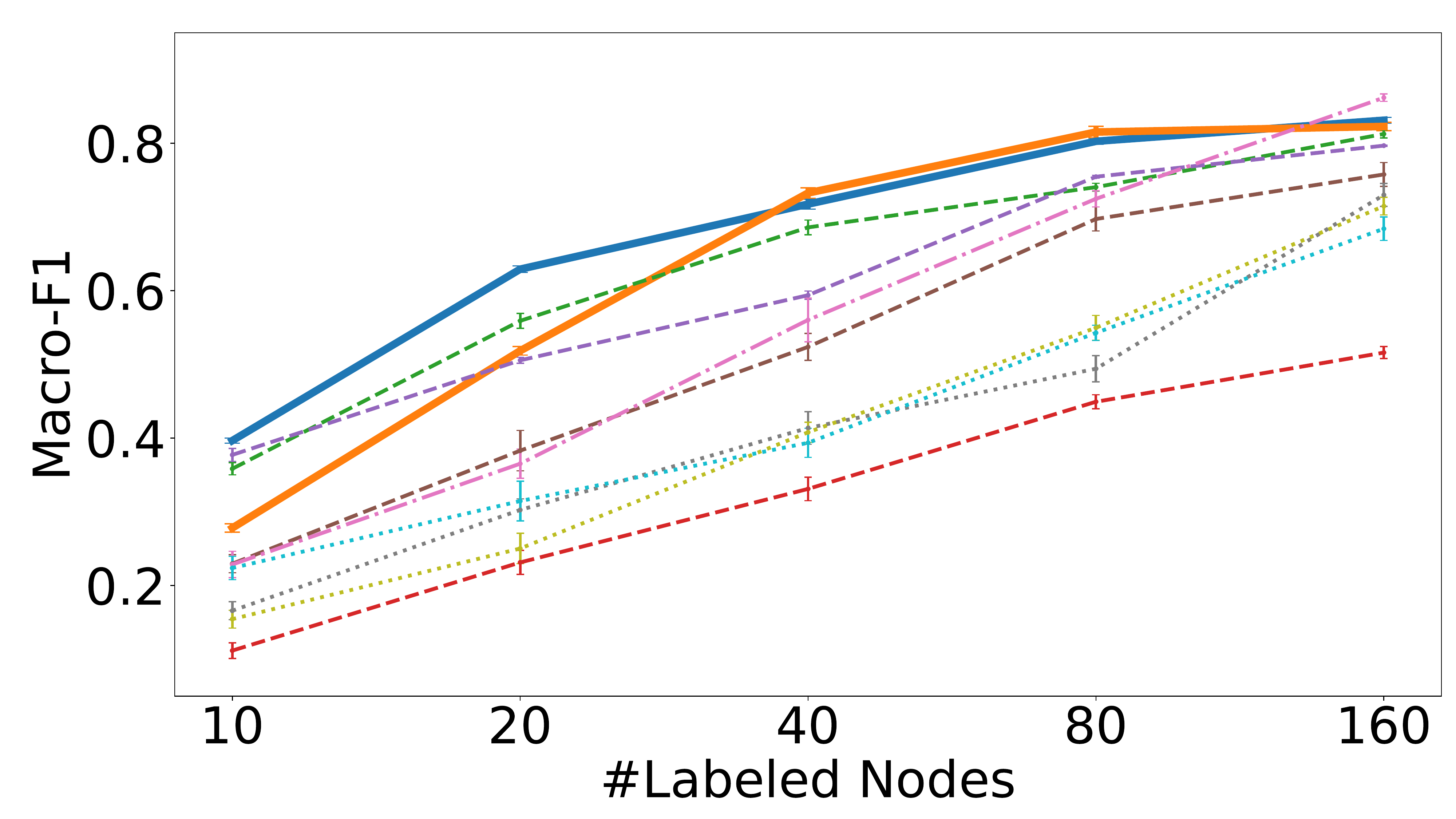} 
        \vspace{-0.5cm}
        \caption{Coauthor-CS}
        \label{fig:sage-cs}
    \end{subfigure}
    ~ 
    \begin{subfigure}[t]{.315\textwidth}
        \centering
        \includegraphics[width=1.\linewidth]{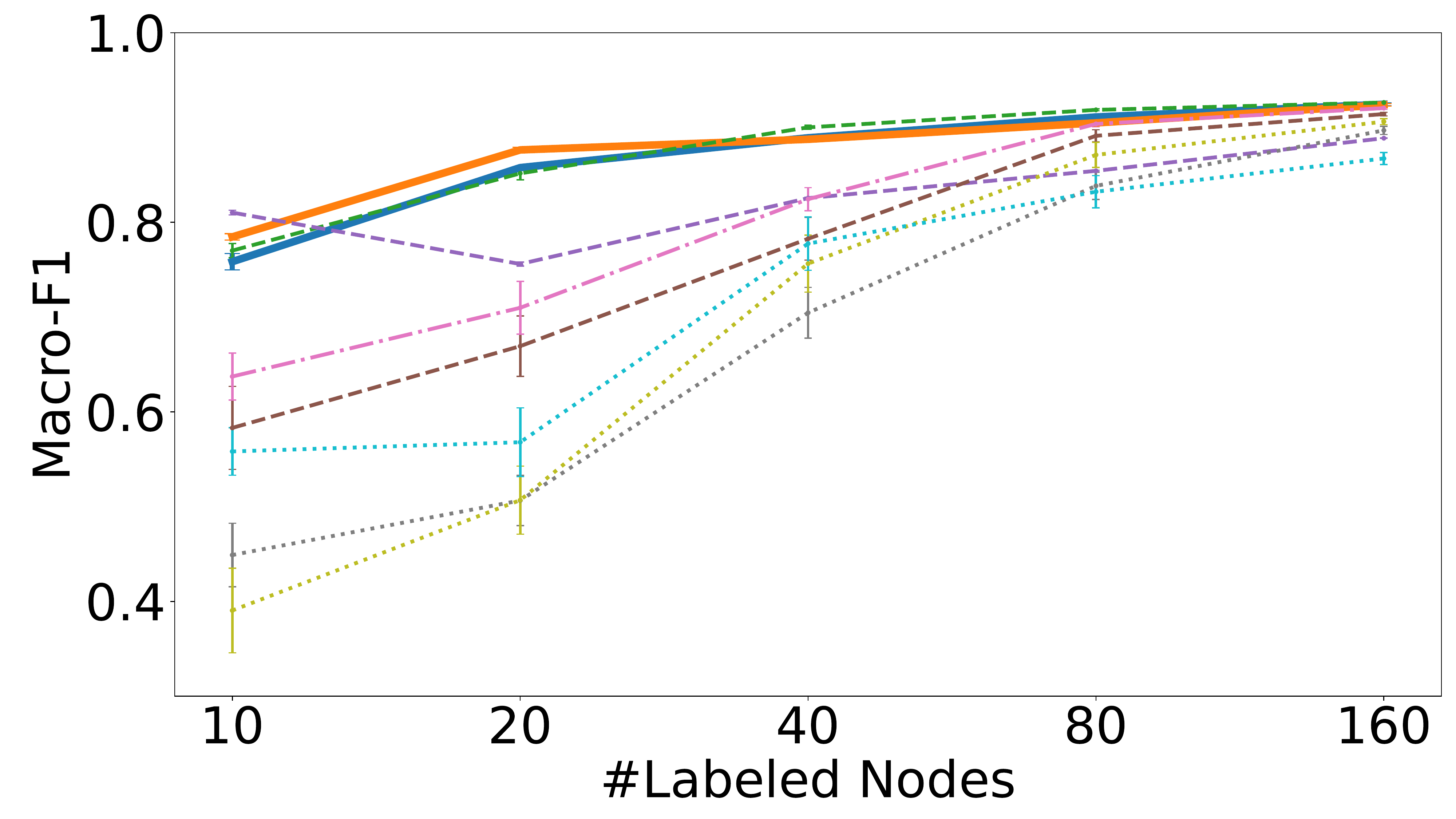} 
        \vspace{-0.5cm}
        \caption{Coauthor-Physics}
        \label{fig:sage-physics}
    \end{subfigure}
    ~
    \vspace*{-0.2cm}
    \caption{Results of each baseline on benchmark datasets on GraphSAGE averaged from 10 different runs.}
    \label{fig:extra-sage}
\end{figure}

%  =  =  =   =  =  =   =  =  = % 
%          GAT Tables          %
%  =  =  =   =  =  =   =  =  = %

\begin{table}[!htp]
  \centering
  \caption{Summary of the performance of GAT. All the markers follow Table~\ref{tab:results}}
  \vspace{-0.2cm}
  \scalebox{0.725}{
    \begin{tabular}{llllllllll}
    \toprule
    \multicolumn{1}{l}{Baselines} & \multicolumn{3}{c}{Cora} & \multicolumn{3}{c}{Citeseer} & \multicolumn{3}{c}{Pubmed} \\
    \multicolumn{1}{l}{Budget} &  \multicolumn{1}{c}{20} & \multicolumn{1}{c}{40} & \multicolumn{1}{c}{80} & \multicolumn{1}{c}{10} & \multicolumn{1}{c}{20} & \multicolumn{1}{c}{40} & \multicolumn{1}{c}{10} & \multicolumn{1}{c}{20} & \multicolumn{1}{c}{40} \\
    \cmidrule(lr){1-1} \cmidrule(lr){2-4} \cmidrule(lr){5-7} \cmidrule(lr){8-10}
    Random & 41.2 $\pm$ 9.4 & 51.2 $\pm$ 8 & 62.3 $\pm$ 4.3 & 27.1 $\pm$ 11.4 & 35.9 $\pm$ 4.9 & 46.7 $\pm$ 5.3 & 41.6 $\pm$ 9.5 & 48.8 $\pm$ 11.6 & 64.5 $\pm$ 7.7 \\
    Uncertainty & 23.2 $\pm$ 5.5 & 26.5 $\pm$ 4.9 & 47.2 $\pm$ 6.6 & 21.5 $\pm$ 6.7 & 25.8 $\pm$ 7.5 & 37.1 $\pm$ 8.0 & 43.9 $\pm$ 10.5 & 47.7 $\pm$ 11.2 & 56.7 $\pm$ 10.2 \\
    Density & 41.6 $\pm$ 9.7 & 47.7 $\pm$ 5.6 & 62.8 $\pm$ 5.0 & 27.0 $\pm$ 9.0 & 36.7 $\pm$ 6.8 & 45.1 $\pm$ 6.8 & 48.4 $\pm$ 9.8 & 45.8 $\pm$ 13.5 & 60.1 $\pm$ 11.2 \\
    CoreSet & 29.1 $\pm$ 6.2 & 44.9 $\pm$ 9.1 & 61.5 $\pm$ 6.9 & 19.6 $\pm$ 9.8 & 26.3 $\pm$ 7.9 & 44.3 $\pm$ 5.6 & 49.3 $\pm$ 12.9 & 54.6 $\pm$ 10.2 & 60.4 $\pm$ 13.0 \\
    Degree & 39.3 $\pm$ 3.5 & 47.4 $\pm$ 2.3 & 61.3 $\pm$ 3.3 & 15.9 $\pm$ 1.2 & 20.0 $\pm$ 4.0 & 30.6 $\pm$ 2.0 & 44.3 $\pm$ 8.4 & 39.9 $\pm$ 4.3 & 40.2 $\pm$ 5.5 \\
    Pagerank & 34.8 $\pm$ 4.1 & 50.3 $\pm$ 2.5 & 65.7 $\pm$ 1.2 & 15.4 $\pm$ 0.6 & 33.0 $\pm$ 2.4 & 42.5 $\pm$ 1.9 & 40.9 $\pm$ 13 & 46.3 $\pm$ 7.0 & 61.7 $\pm$ 3.7 \\
    AGE   & 39.8 $\pm$ 5.5 & 56.5 $\pm$ 6.3 & 66.7 $\pm$ 2.7 & 24.9 $\pm$ 12.5 & 32.9 $\pm$ 7.8 & 51.0 $\pm$ 5.0 & 42.0 $\pm$ 13.4 & 51.1 $\pm$ 9.7 & 64.8 $\pm$ 9.6 \\
    FeatProp & 56.7 $\pm$ 5.8 & 58.7 $\pm$ 5.7 & 69.8 $\pm$ 2.1 & 24.9 $\pm$ 4.9 & \underline{38.5} $\pm$ 7.9 & 50.4 $\pm$ 4.2 & 49.2 $\pm$ 8.1 & 58.4 $\pm$ 6.8 & \textbf{71.9} $\pm$ 2.4 \\
    \cmidrule(lr){1-1} \cmidrule(lr){2-4} \cmidrule(lr){5-7} \cmidrule(lr){8-10}
    GraphPart & \textbf{59.1} $\pm$ 6.0 & \underline{63.9}* $\pm$ 3.5 & \underline{71.8} $\pm$ 2.4 & \textbf{38.6}* $\pm$ 4.3 & 38.1 $\pm$ 5.8 & \textbf{53.5} $\pm$ 4.0 & \textbf{63.0}* $\pm$ 6.6 & \textbf{68.2} $\pm$ 2.2 & \underline{68.4} $\pm$ 8.3 \\
    GraphPartFar & \underline{58.9} $\pm$ 2.9 & \textbf{66.7}* $\pm$ 1.7 & \textbf{73.8}* $\pm$ 2.2 & \underline{33.6}* $\pm$ 4.1 & \textbf{46.6}* $\pm$ 3.3 & \underline{52.2} $\pm$ 4.2 & \underline{52.1} $\pm$ 12.4 & \underline{60.6} $\pm$ 7.3 & 66.2 $\pm$ 2.8 \\
    \bottomrule
    \end{tabular}%
  }
  \label{tab:extra-gat}
\end{table}
%  =  =  =   =  =  =   =  =  = % 
%          GAT  Figures        %
%  =  =  =   =  =  =   =  =  = %

\begin{figure}[!htp]
    \centering
    \begin{subfigure}[t]{0.9\textwidth}
        \centering
        \includegraphics[width=1.0\linewidth]{figs/baselines.pdf}
    \end{subfigure}
    ~ \\
    \begin{subfigure}[t]{.315\textwidth}
        \centering
        \includegraphics[width=1.\linewidth]{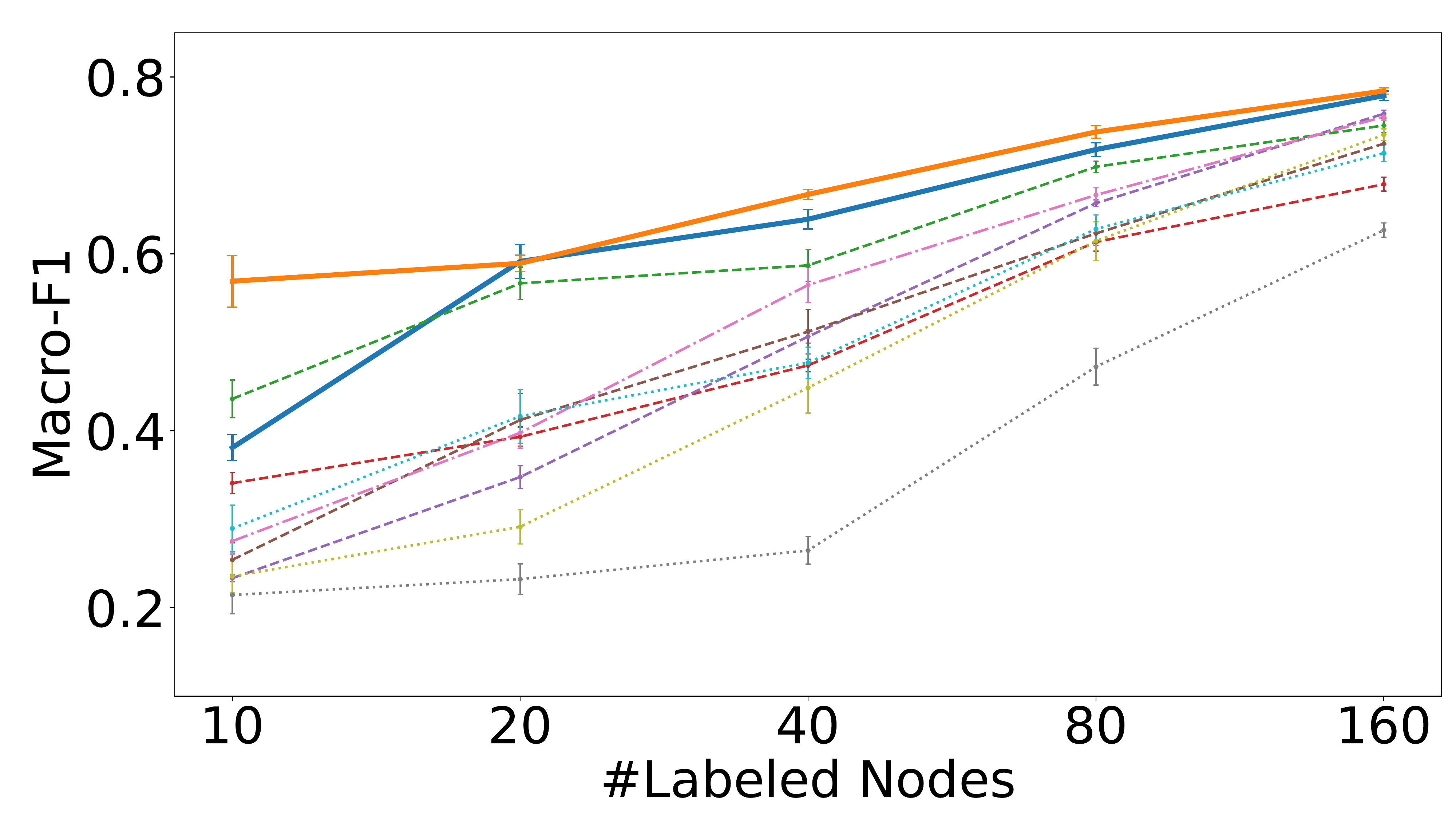} 
        \vspace{-0.5cm}
        \caption{Cora}
        \label{fig:gat-cora}
    \end{subfigure}
    ~
    \begin{subfigure}[t]{.315\textwidth}
        \centering
        \includegraphics[width=1.\linewidth]{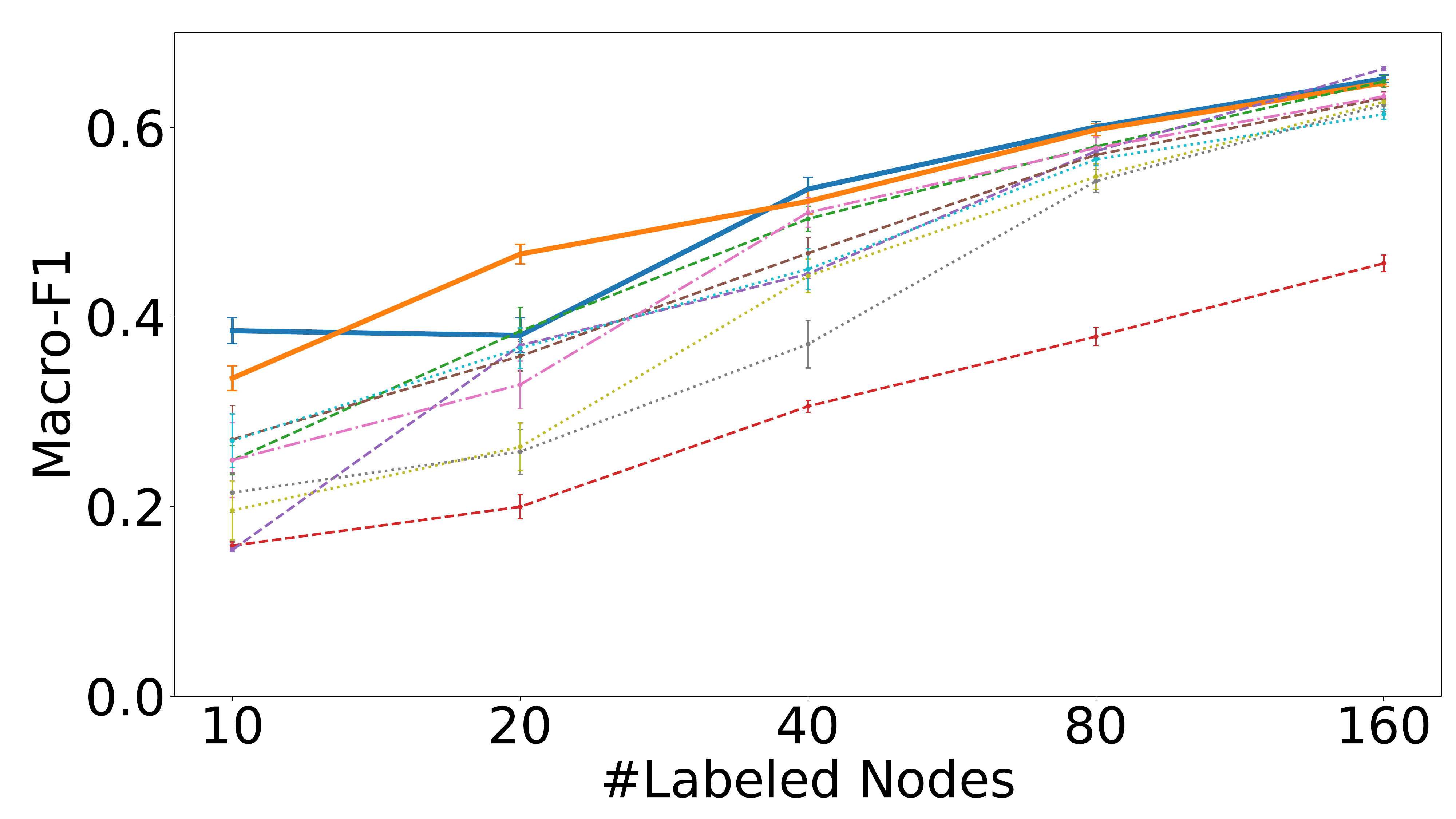} 
        \vspace{-0.5cm}
        \caption{Citeseer}
        \label{fig:gat-citeseer}
    \end{subfigure}
    ~
    \begin{subfigure}[t]{.315\textwidth}
        \centering
        \includegraphics[width=1.\linewidth]{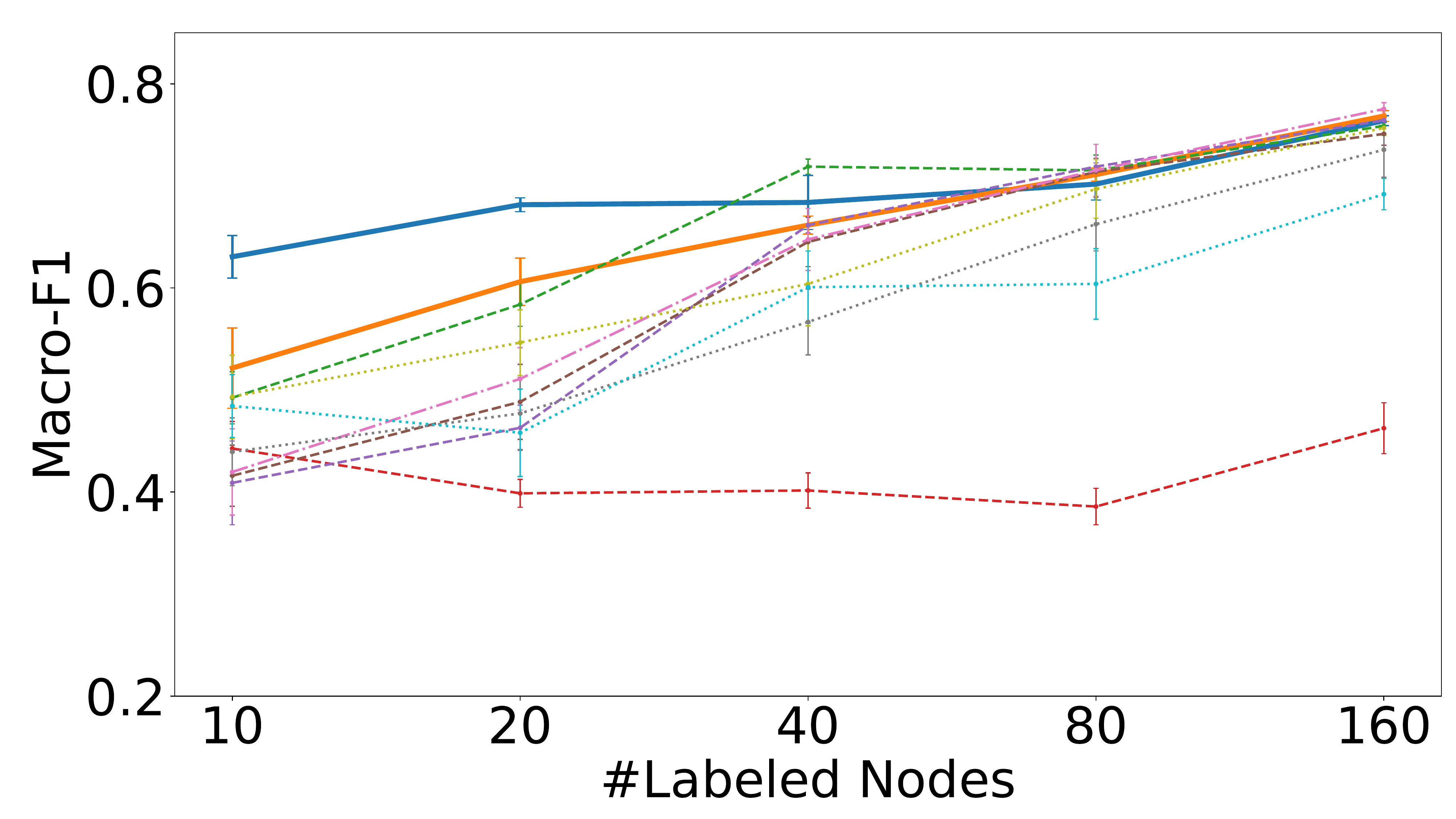} 
        \vspace{-0.5cm}
        \caption{Pubmed}
        \label{fig:gat-pubmed}
    \end{subfigure}
    ~
    \vspace*{-0.2cm}
    \caption{Results of each baseline on benchmark datasets on GAT averaged from 10 different runs.}
    \label{fig:extra-gat}
\end{figure}

\subsection{Mitigating Accuracy Disparity of GNNs}
The GNN models are reported to be less accurate on nodes that are further away from the training nodes, which leads to potential fairness concerns~\citep{genfairgnn_neurips21}. As the goal of proposed active learning methods is to have the selected training nodes evenly distributed on the graph, we conduct an analysis on the accuracy disparity of the actively learned GNN models, following the study by~\citep{genfairgnn_neurips21}. 

We analyze the generalization performances by splitting the test nodes into 10 subgroups according to their aggregated-feature distance to the 40 selected training nodes (320 for CoraFull), and report the test accuracy on each subgroup. As can be seen in Figure~\ref{fig:ablation-fair}, While the GNNs learned by GraphPart and GraphPartFar are still less accurate for test nodes that are further away, the proposed active learning methods are able to mitigate the accuracy disparity of GCN predictions compared to training with random training nodes in most datasets.

\section{Conclusion and Discussion}
\label{sec:conclusion}

In this work, we investigate the problem of active learning for GNNs. 
Inspired by the commonly seen local and global smoothness properties on graph-structured data, we propose a graph-partition-based active learning framework for GNNs, with two variants of concrete algorithms. 
The proposed framework can be seen as an active training node selection algorithm that approximately optimizes an upper bound of the expected classification error of unlabeled nodes. 
Through extensive experiments, we show that the proposed methods significantly outperform existing state-of-the-art baseline active learning methods. 
Furthermore, the proposed active learning methods are able to mitigate the accuracy disparity phenomenon commonly seen in GNN models.

Similar to previous work, \textit{e.g.}, FeatProp, one limitation of this work is that their effectiveness relies on the smoothness assumptions to hold for the data of interest. Nevertheless, the proposed methods outperform previous approaches on many benchmarks, which suggests that it is likely to generalize to more real-world applications.

\subsubsection*{Broader Impact Statement}
As a potential positive impact, our proposed methods has demonstrated superior label efficiency. When applied to real-world applications, it can significantly reduce the economic cost of annotations in machine learning tasks on graphs.

For the potential negative impacts, the active learning procedure might introduce potential biases into the construction of training data set, which remains underexplored in the literature. In this work, we find that the proposed methods can actually reduce the accuracy disparity of GNN predictions compared to randomly selected training nodes. However, the consequences in terms of other fairness metrics, such as \emph{equal opportunity}, still require dedicated further investigations. Therefore applications of active learning in high-stake scenarios should proceed with extra caution.

% = = = = = = = = = = %
%      Ablation       %
% = = = = = = = = = = %

\begin{figure}[H]
    \centering
    \begin{subfigure}[t]{0.5\textwidth}
        \centering
        \includegraphics[width=1.\linewidth]{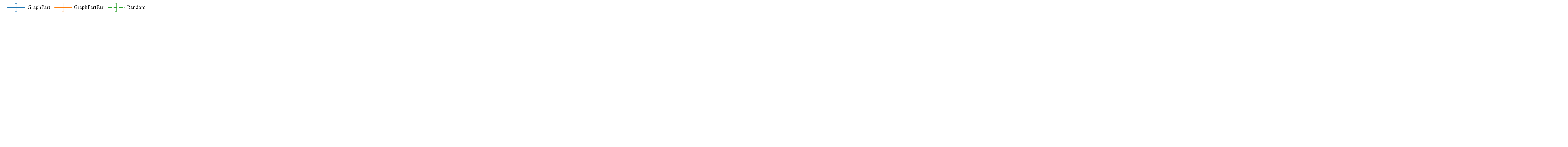}
    \end{subfigure}
        ~\\
    \begin{subfigure}[t]{.31\textwidth}
        \centering
        \includegraphics[width=1.\linewidth]{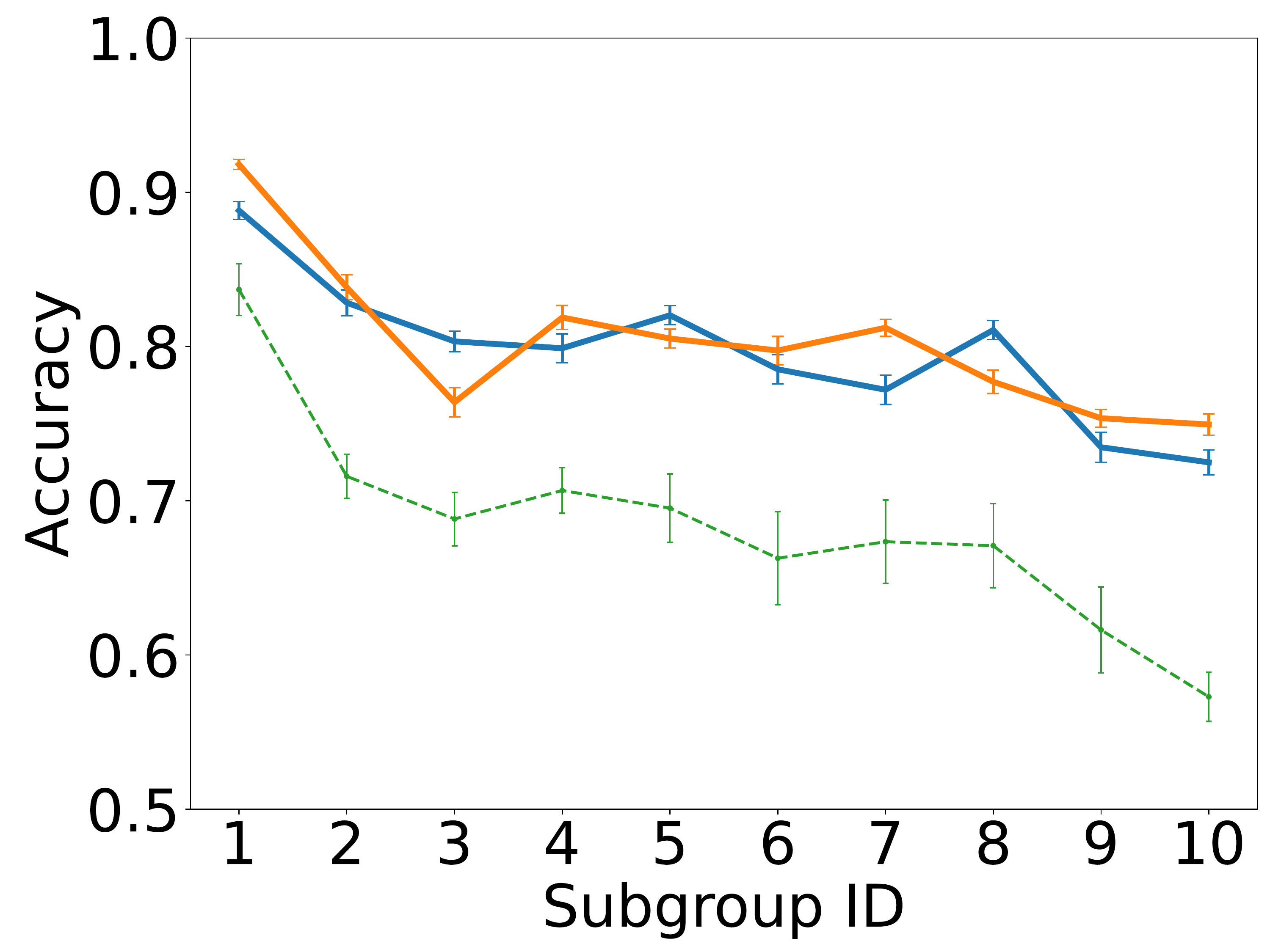} 
        \caption{Cora}
        \label{fig:ablation-cora}
    \end{subfigure}
    ~ 
    \begin{subfigure}[t]{.31\textwidth}
        \centering
        \includegraphics[width=1.\linewidth]{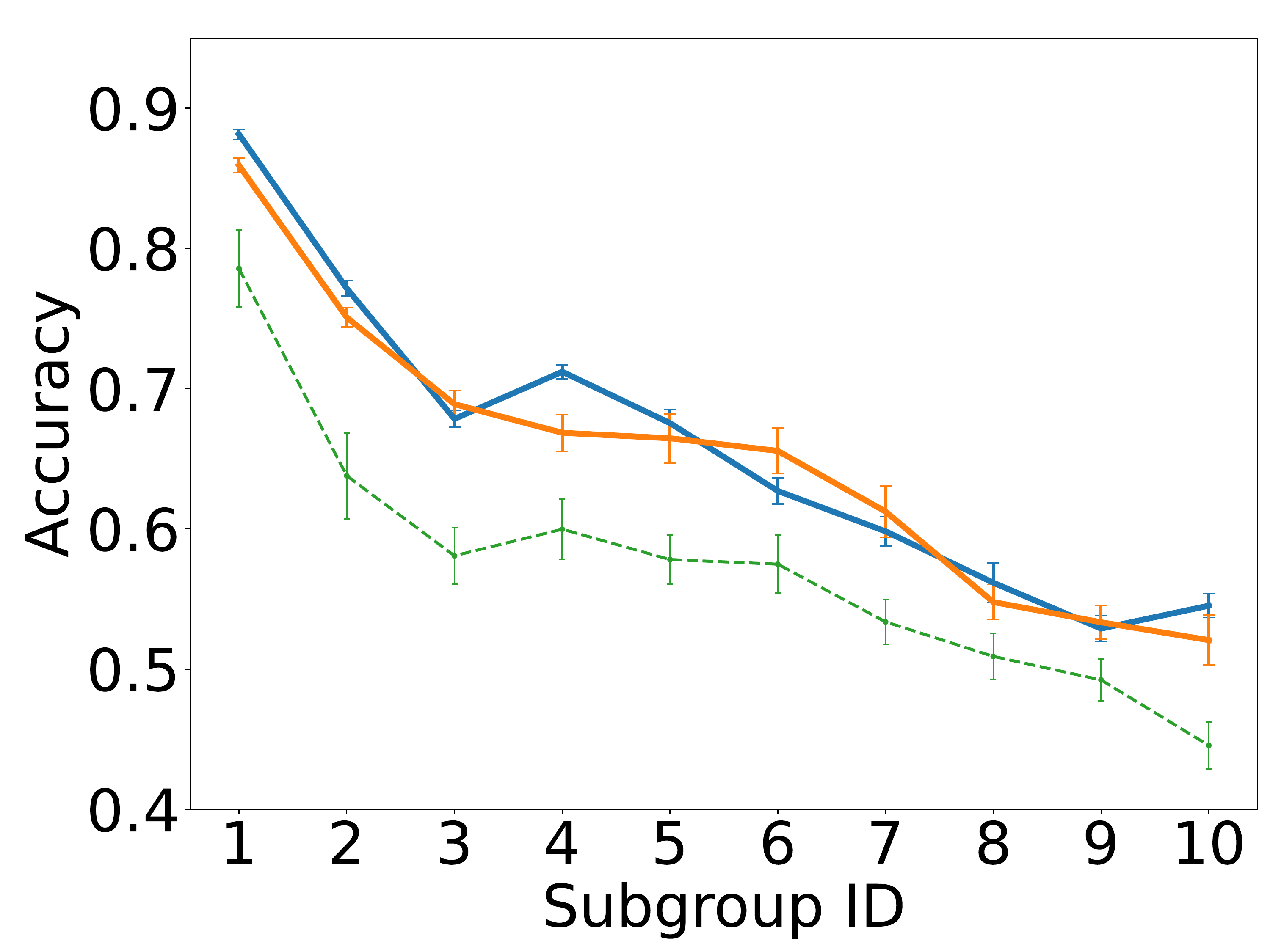} 
        \caption{Citeseer}
        \label{fig:ablation-citeseer}
    \end{subfigure}   
    ~ 
    \begin{subfigure}[t]{.31\textwidth}
        \centering
        \includegraphics[width=1.\linewidth]{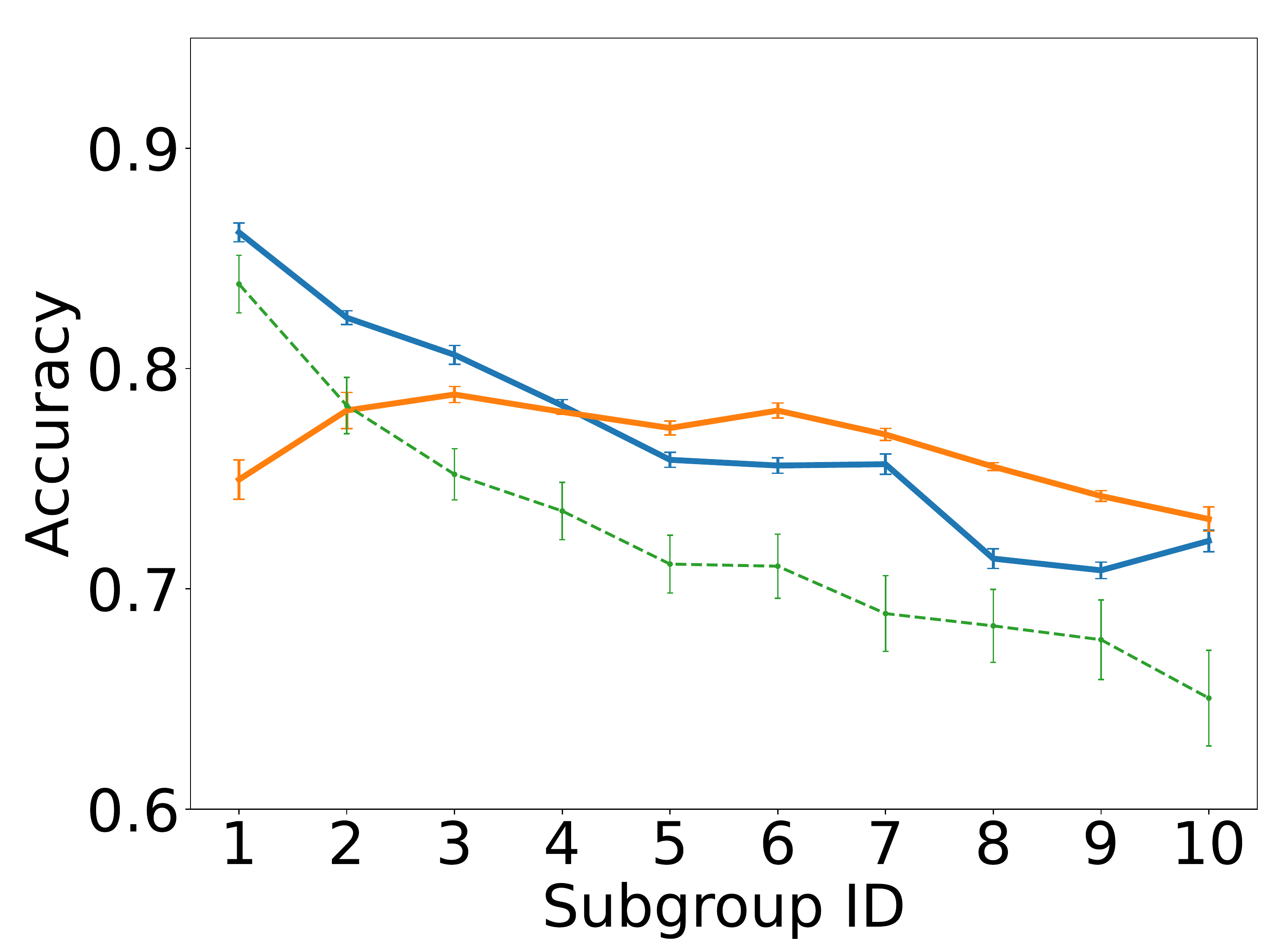} 
        \caption{Pubmed}
        \label{fig:ablation-pubmed}
    \end{subfigure}
    ~ \\
    \begin{subfigure}[t]{.31\textwidth}
        \centering
        \includegraphics[width=1.\linewidth]{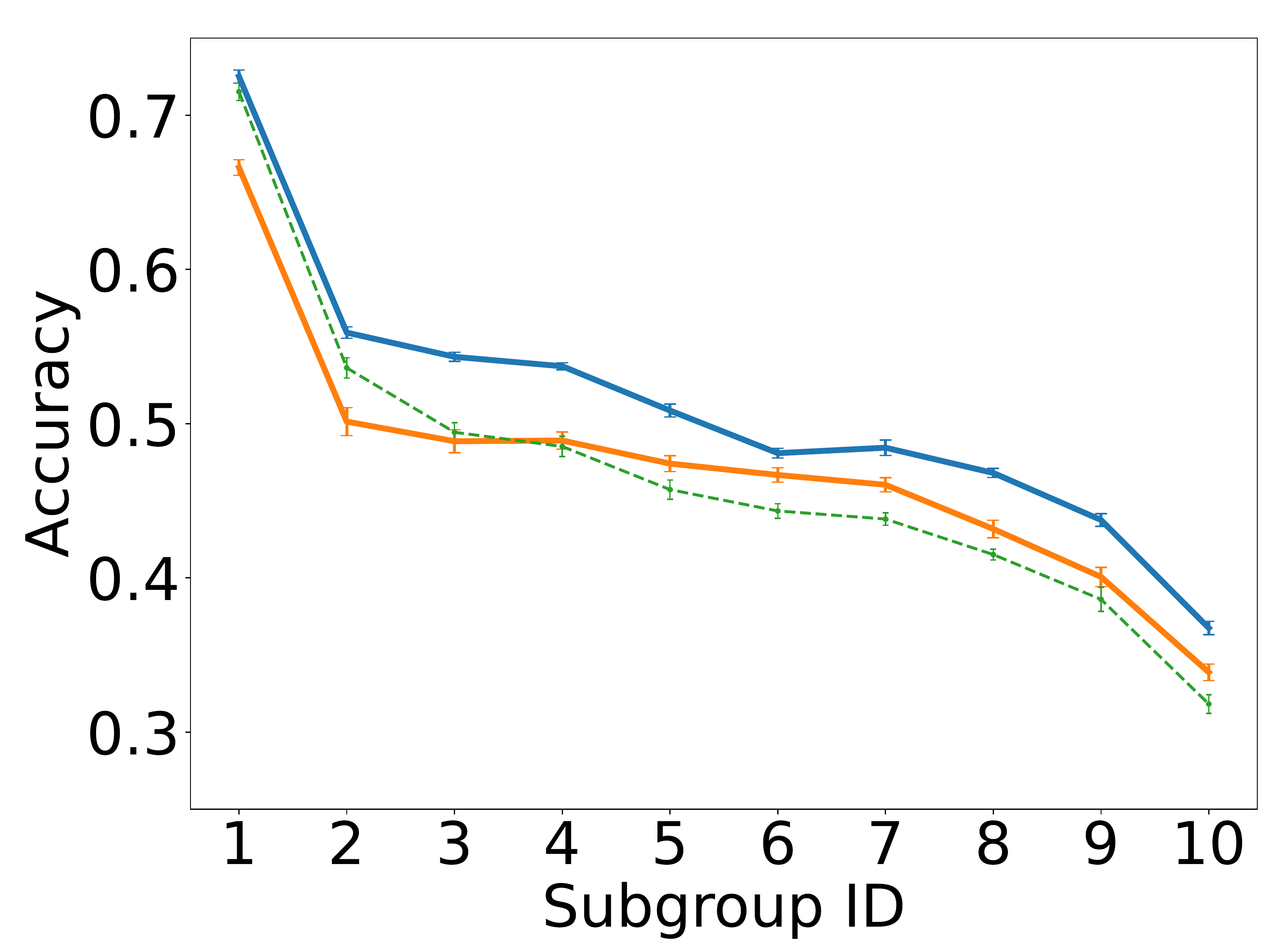} 
        \caption{Corafull}
        \label{fig:ablation-corafull}
    \end{subfigure}
    ~ 
    \begin{subfigure}[t]{.31\textwidth}
        \centering
        \includegraphics[width=1.\linewidth]{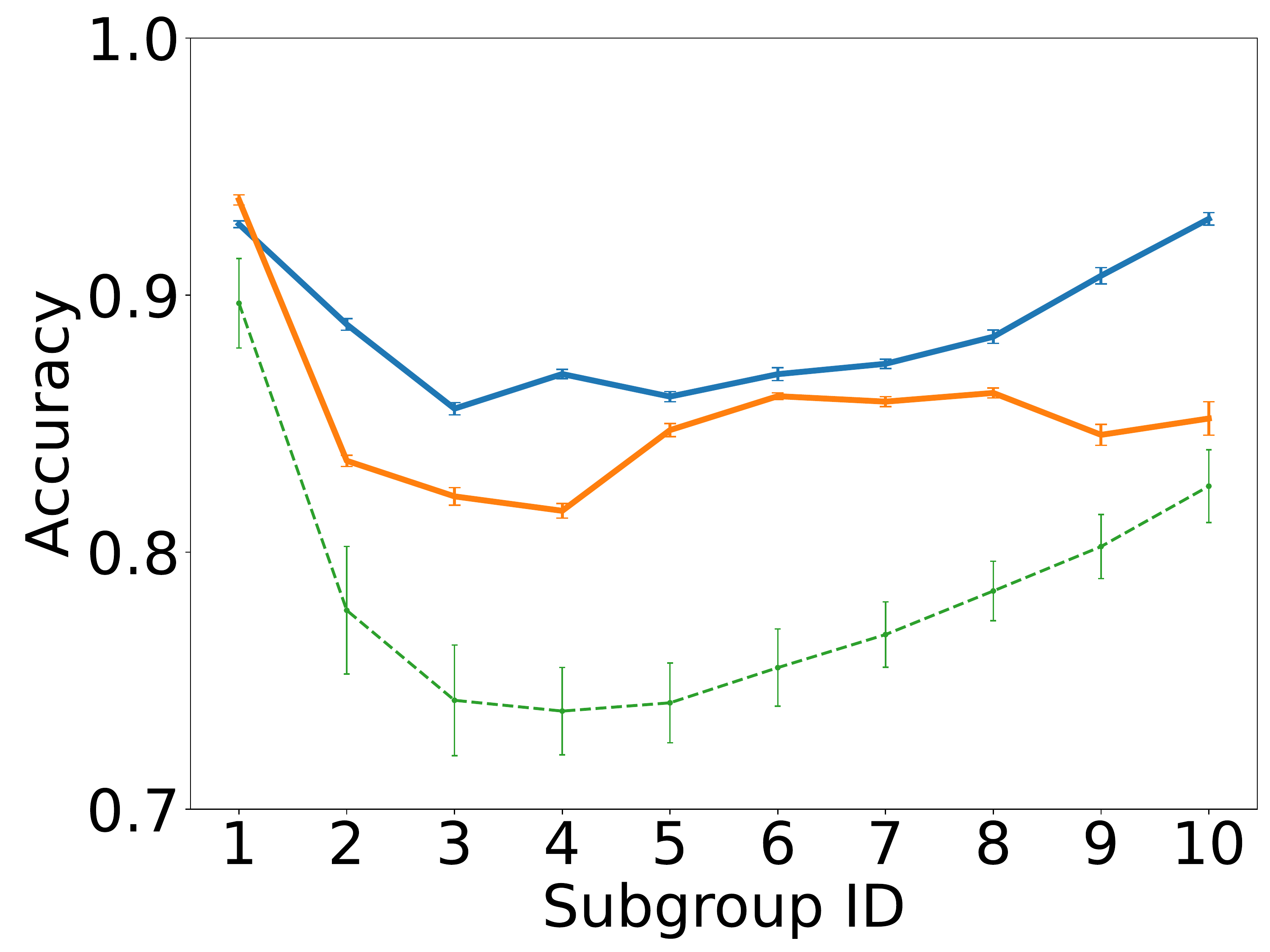} 
        \caption{Co-CS}
        \label{fig:ablation-cs}
    \end{subfigure}
    ~ 
    \begin{subfigure}[t]{.31\textwidth}
        \centering
        \includegraphics[width=1.\linewidth]{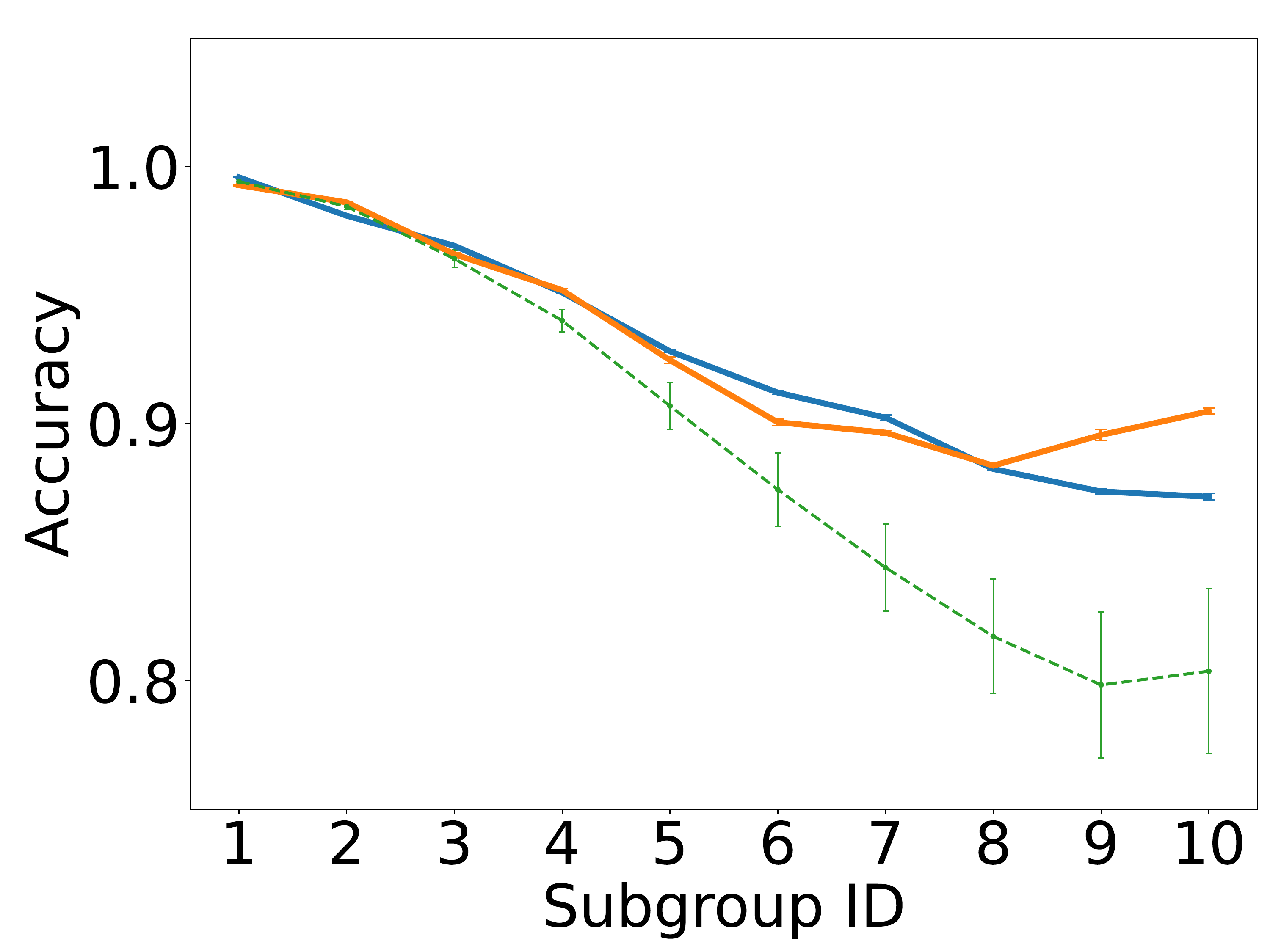} 
        \caption{Co-Physics}
        \label{fig:ablation-physics}
    \end{subfigure}
    \vspace*{-0.2cm}
    \caption{Accuracy disparity across 10 subgroups. Increasing subgroup indices represent the increasing distance to the selected training set.}
    \label{fig:ablation-fair}
\end{figure}
\subsubsection*{Acknowledgments}

This work was partially supported by the National Science Foundation under grant numbers 1633370 and 1949634. 
The authors would like to thank the anonymous reviewers for their helpful comments.

\bibliography{main}
\bibliographystyle{tmlr}

\clearpage
\appendix
\section{Appendix}
\subsection{Proof of Proposition~\ref{prop:error}}
\label{app::proof}

Before we start the proof of Proposition~\ref{prop:error}, we first introduce a helpful lemma below.

\begin{lemma}
\label{prop:model}
Under Assumption~\ref{assume:mlp-smooth}, for any $i\in S_{te}$, if $S_{tr} \cap T(i) \neq \emptyset$, letting $\tau(i) := \arg\min_{l\in S_{tr} \cap T(i)}\|g(v_i) - g(v_l)\|_2$ and $\varepsilon_i := \|g(v_i) - g(v_{\tau(i)})\|_2$, then we have
\begin{equation}
\label{eq:model-bound}
    ||f(v_i) - f(v_{\tau(i)})||_\infty \leq \delta_h \varepsilon_i.
\end{equation}
\end{lemma}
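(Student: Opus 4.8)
The plan is to observe that Lemma~\ref{prop:model} is essentially a direct unpacking of the compositional structure $f = h \circ g$ together with Assumption~\ref{assume:mlp-smooth}. First I would fix an arbitrary $i \in S_{te}$ with $S_{tr} \cap T(i) \neq \emptyset$, so that $\tau(i)$ and $\varepsilon_i$ are well-defined as in the statement. Then I would write $f(v_i) = h(g(v_i))$ and $f(v_{\tau(i)}) = h(g(v_{\tau(i)}))$, i.e.\ both outputs are the MLP $h$ evaluated at the aggregated features $e := g(v_i) \in \reals^{d'}$ and $e' := g(v_{\tau(i)}) \in \reals^{d'}$.

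Next I would invoke Assumption~\ref{assume:mlp-smooth} with exactly this choice of $e, e'$, which gives
\[
\|f(v_i) - f(v_{\tau(i)})\|_\infty = \|h(g(v_i)) - h(g(v_{\tau(i)}))\|_\infty \leq \delta_h \|g(v_i) - g(v_{\tau(i)})\|_2.
\]
Finally I would substitute the definition $\varepsilon_i := \|g(v_i) - g(v_{\tau(i)})\|_2$ to conclude $\|f(v_i) - f(v_{\tau(i)})\|_\infty \leq \delta_h \varepsilon_i$, which is~(\ref{eq:model-bound}).

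Since every step is a substitution of a definition or a single application of the stated Lipschitz hypothesis, there is no real obstacle here; the only thing to be careful about is that the abstract GNN is assumed to be of the form $f = h \circ g$ (as fixed in Section~\ref{sec:analysis}), so that $g(v_i)$ genuinely lies in the domain $\reals^{d'}$ on which Assumption~\ref{assume:mlp-smooth} is posed, and that the minimizer $\tau(i)$ exists because $S_{tr}\cap T(i)$ is a nonempty finite set. The role of this lemma in the sequel will be to control the second term of~(\ref{eq:error}): the bound $\|f(v_i)-f(v_{\tau(i)})\|_\infty \le \delta_h \varepsilon_i$ lets one transfer a margin-$\gamma_i$ correctness of $f$ at $v_{\tau(i)}$ (with $\gamma_i = 2\delta_h\varepsilon_i$) into correctness of $f$ at $v_i$, which is the mechanism I expect the main proof of Proposition~\ref{prop:error} to use.
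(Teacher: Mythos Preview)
Your proposal is correct and matches the paper's proof essentially line for line: the paper also writes $\|f(v_i)-f(v_{\tau(i)})\|_\infty = \|h(g(v_i))-h(g(v_{\tau(i)}))\|_\infty$, applies Assumption~\ref{assume:mlp-smooth}, and substitutes the definition of $\varepsilon_i$. Your anticipation of how the lemma feeds into Proposition~\ref{prop:error} (transferring margin-$\gamma_i$ correctness from $v_{\tau(i)}$ to $v_i$) is also exactly what the paper does.
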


\begin{proof} [Proof of Lemma~\ref{prop:model}]
    \begin{align*}
        &\;\quad ||f(v_i) - f(v_{\tau(i)})||_\infty \\
        &= ||h(g((v_i)) - h(g(v_{\tau(i)}))||_\infty \\
        &\leq \delta_h ||g((v_i) - g(v_{\tau(i)})||_2 &(Assumption~\ref{assume:mlp-smooth})\\
        &= \delta_h \varepsilon_i.
    \end{align*}
\end{proof}

We now start the proof of Proposition~\ref{prop:error}.

\begin{proof} [Proof of Proposition~\ref{prop:error}]
We first consider the expected difference between the loss of $v_i$ and $v_{\tau(i)}$:

\vspace*{-0.4cm}
\begingroup
\setlength{\jot}{2pt}
\begin{align*}
    & \mathbb{E}_{y_i} [\mathcal{L}_0(f(v_i), y_i)] - \mathbb{E}_{y_{\tau(i)}} [\mathcal{L}_{\gamma_i}(f(v_{\tau(i)}), y_{\tau(i)})] \\
    =& \textstyle\sum_{c=1}^C P[y_i=c\mid v_i] \mathcal{L}_0(f(v_i), c) \\
    &- \textstyle\sum_{c=1}^C P[y_{\tau(i)}=c\mid v_{\tau(i)}] \mathcal{L}_{\gamma_i}(f(v_{\tau(i)}), c) \\
    =& \textstyle\sum_{c=1}^C \eta_c(v_i) \mathcal{L}_0(f(v_i), c) - \sum_{c=1}^C \eta_c(v_{\tau(i)}) \mathcal{L}_{\gamma_i}(f(v_{\tau(i)}), c) \\
    =& \color{blue} \textstyle\sum_{c=1}^C \eta_c(v_i) \Big[ \mathcal{L}_0(f(v_i), c) - \mathcal{L}_{\gamma_i}(f(v_{\tau(i)}), c) \Big] \\
    &+ \color{red} \textstyle\sum_{c=1}^C \Big[ \eta_c(v_i) - \eta_c(v_{\tau(i)}) \Big] \mathcal{L}_{\gamma_i}(f(v_{\tau(i)}), c).
\end{align*}
\endgroup

By Assumption~\ref{assume:class-smooth}, the \textcolor{red}{second term} can be bounded as:
\begin{align*}
    &\;\quad \textstyle\sum_{c=1}^C \Big[ \eta_c(v_i) - \eta_c(v_{\tau(i)}) \Big] \mathcal{L}_{\gamma_i}(f(v_{\tau(i)}), c) \\
    &\leq \textstyle\sum_{c=1}^C \Big| \eta_c(v_i) - \eta_c(v_{\tau(i)}) \Big| \mathcal{L}_{\gamma_i}(f(v_{\tau(i)}), c) 
    \leq \textstyle\sum_{c=1}^C \Big| \eta_c(v_i) - \eta_c(v_{\tau(i)}) \Big| \\
    &\leq \textstyle\sum_{c=1}^C \delta_\eta \Big|\Big| g(v_i) - g(v_{\tau(i)}) \Big|\Big|_2 
    = \textstyle\sum_{c=1}^C \delta_\eta \varepsilon_i 
    = C \delta_\eta \varepsilon_i.
\end{align*}

We then prove that for $\gamma_i = 2\delta_h \varepsilon_i$, the \textcolor{blue}{first term} is non-positive. We prove it by discussing two cases:
\begin{enumerate}[leftmargin=*]
    \item $\mathcal{L}_{\gamma_i}(f(v_{\tau(i)}), c) = 1$: In this case $\mathcal{L}_0(f(v_i), c) - \mathcal{L}_{\gamma_i}(f(v_{\tau(i)}), c) \leq 0$ always holds and the first term is non-positive.
    \item $\mathcal{L}_{\gamma_i}(f(v_{\tau(i)}), c) = 0$: This situation only happens if $f(v_{\tau(i)})[c] > \gamma_i + \max_{b\neq c} f(v_{\tau(i)})[b]$.
    By Lemma~\ref{prop:model}, we know that for any $b=1,\ldots, C$:
    \[f(v_i)[c] \ge f(v_{\tau(i)})[c] - \delta_h \varepsilon_i\] 
    \[\text{and}\quad f(v_i)[b] \le f(v_{\tau(i)})[b] + \delta_h \varepsilon_i\]
    Therefore, 
    \begin{align*}
        & f(v_i)[c] - \textstyle\max_{b\neq c} f(v_i)[b] \\
        \ge & \Big( f(v_{\tau(i)})[c] - \delta_h \varepsilon_i \Big) - \Big( \textstyle\max_{b\neq c} f(v_{\tau(i)})[b] + \delta_h \varepsilon_i \Big) \\
        = & f(v_{\tau(i)})[c] - 2\delta_h \varepsilon_i - \textstyle\max_{b\neq c} f(v_{\tau(i)})[b] \\
        = & f(v_{\tau(i)})[c] - \gamma_i - \textstyle\max_{b\neq c} f(v_{\tau(i)})[b]
        > 0,
    \end{align*}
    which implies $\mathcal{L}_0(f(v_i), c) = 0$. Hence $\mathcal{L}_0(f(v_i), c) - \mathcal{L}_{\gamma_i}(f(v_{\tau(i)}), c) = 0$ and the first term is non-positive.
\end{enumerate}

Therefore, for $\gamma_i = 2\delta_h \varepsilon_i$, we have
\[
    \mathbb{E}_{y_i} [\mathcal{L}_0(f(v_i), y_i)] \leq C \delta_\eta \varepsilon_i + \mathbb{E}_{y_{\tau(i)}} [\mathcal{L}_{2\delta_h \varepsilon_i}(f(v_{\tau(i)}), y_{\tau(i)})].
\]
\end{proof}
\subsection{Addendum to Experiments}

% \textbf{Sensitivity Analysis on Graph Partitioning.}

We further conduct a sensitivity analysis for the proposed method. In particular, we are interested in the effectiveness of the partition approach when combined with distance metrics on different node representations. 
Specifically, (1) \textbf{Aggregation}: aggregated node features $S^2X$; (2) \textbf{Embedding}: the last hidden layer of GCN trained on one-third of the budget; and (3) \textbf{Feature}: the original node features. As is shown in Table~\ref{tab:ablation-part}, the graph-partition step is robustly effective when combined with various types of distance metrics. For each choice of distance, we evaluate the active learning performance with or without the graph-partition step. We perform the experiments on Citeseer, Pubmed, and Cora. On each dataset, we evaluate each active learning method with a budget size of 20, 40, and 80. Each experiment is repeated with 10 random seeds. 

As can be seen from Table~\ref{tab:ablation-part}, the graph-partition step is robustly effective when combined with various types of distance metrics. This suggests that the proposed graph-partition framework may be able to generalize to active learning for other graph representation learning methods. It is also interesting to observe that methods with a graph-partition step tend to have lower standard errors. 

\begin{table}[H]
  \centering
  \caption{Summary of the performance of different combinations of distance metric on Citeseer, Pubmed and Cora datasets with or without using graph partition. The numerical values represent the average Macro-F1 score of 10 independent trials and the error bar denotes the standard error of the mean (all in \%). The \textbf{bold} marker denotes the better performance between with and without using graph partition, and asterisk (*) means this difference is statistically significant by a pairwise t-test at a significance level of 0.05.}
  \scalebox{0.7}{
    \begin{tabular}{lclllllllll}
    \toprule
    Node Rep. & With & \multicolumn{3}{c}{Cora} & \multicolumn{3}{c}{Citeseer} & \multicolumn{3}{c}{Pubmed} \\
    Budget & Partition & \multicolumn{1}{c}{20} & \multicolumn{1}{c}{40} & \multicolumn{1}{c}{80} & \multicolumn{1}{c}{20} & \multicolumn{1}{c}{40} & \multicolumn{1}{c}{80} & \multicolumn{1}{c}{20} & \multicolumn{1}{c}{40} & \multicolumn{1}{c}{80} \\
    \cmidrule(lr){1-2} \cmidrule(lr){3-5} \cmidrule(lr){6-8} \cmidrule(lr){9-11}
    \multirow{2}[0]{*}{Aggregation} 
    & yes     
    & \textbf{76.1}* $\pm$ 2.7 & \textbf{78.1} $\pm$ 1.5 & \textbf{80.3} $\pm$ 1.6 & \textbf{45.4} $\pm$ 4.1 & \textbf{59.0}* $\pm$ 2.0 & \textbf{67.7} $\pm$ 1.5 & \textbf{73.2}* $\pm$ 1.0 & 74.9 $\pm$ 1.3 & \textbf{79.7} $\pm$ 2.3\\
    & no     
    & 71.0 $\pm$ 5.7 & 76.1 $\pm$ 2.5 & 79.9 $\pm$ 0.9 & 42.9 $\pm$ 4.5 & 53.7 $\pm$ 4.5 & 67.4 $\pm$ 2.0 & 65.4 $\pm$ 5.2 & \textbf{75.1} $\pm$ 2.8 & 79.5 $\pm$ 0.9\\
    \cmidrule(lr){1-2} \cmidrule(lr){3-5} \cmidrule(lr){6-8} \cmidrule(lr){9-11}
    \multirow{2}[0]{*}{Embedding} 
    & yes     
    & \textbf{61.3}* $\pm$ 4.8 & \textbf{69.4}* $\pm$ 3.9 & \textbf{76.7} $\pm$ 3.4 & \textbf{43.9} $\pm$ 7.1 & \textbf{54.4} $\pm$ 2.8 & \textbf{61.1} $\pm$ 2.3 & \textbf{70.1} $\pm$ 8.8 & \textbf{74.9} $\pm$ 2.8 & \textbf{78.1} $\pm$ 2.0 \\
    & no     
    & 54.5 $\pm$ 4.7 & 62.6 $\pm$ 5.7 & 74.1 $\pm$ 3.7 & 38.2 $\pm$ 9.0 & 50.6 $\pm$ 5.1 & 59.7 $\pm$ 2.3 & 63.4 $\pm$ 10.6 & 70.3 $\pm$ 6.6 & 77.0 $\pm$ 1.5 \\
    \cmidrule(lr){1-2} \cmidrule(lr){3-5} \cmidrule(lr){6-8} \cmidrule(lr){9-11}
    \multirow{2}[0]{*}{Feature} 
    & yes     
    & \textbf{65.6}* $\pm$ 2.6 & \textbf{71.0}* $\pm$ 2.1 & 77.2 $\pm$ 1.3 & \textbf{15.7}* $\pm$ 0.9 & \textbf{33.1} $\pm$ 3.4 & \textbf{54.4} $\pm$ 1.8 & 56.6 $\pm$ 2.1 & 67.1 $\pm$ 2.1 & \textbf{77.1} $\pm$ 2.0 \\
    & no     
    & 53.2 $\pm$ 5.2 & 64.0 $\pm$ 5.1 & \textbf{77.4} $\pm$ 1.7 & 6.1 $\pm$ 2.4 & 30.9 $\pm$ 4.5 & 54.0 $\pm$ 4.1 & \textbf{64.3}* $\pm$ 8.6 & \textbf{70.0} $\pm$ 4.1 & 76.6 $\pm$ 0.9 \\
    \bottomrule
    \end{tabular}
  }
  \label{tab:ablation-part}
\end{table}

\end{document}